\newif\ifexternalizetikz
\newcommand{\tikzextname}[1]{%
\tikzset{external/figure name={\tikzexternal@realjob-#1-}}}
\providecommand{\tikzextname}[1]{\@bsphack\@esphack}
\newcommand{\pb}[1]{\save[]+<17 pt,0 pt>:a(#1)\ar@{pb{}}[]\restore}
\renewcommand{\arraystretch}{1.3}
\newtheorem{theorem}{Theorem}[section]
\newtheorem{proposition}[theorem]{Proposition}
\newtheorem{corollary}[theorem]{Corollary}
\newtheorem{definition}[theorem]{Definition}
\newtheorem{remark}[theorem]{Remark}
\newtheorem{example}[theorem]{Example}
\newenvironment{myproof}{\begin{proof}}{\end{proof}}
\newcommand{\qed}{\hspace*{\fill}\usebox{\proofbox}}
\newlength\stateheight
\newlength\minimumstatewidth
\tikzset{width/.initial=\minimummorphismwidth}
\tikzset{colour/.initial=white}
\newif\ifblack\pgfkeys{/tikz/black/.is if=black}
\newif\ifwedge\pgfkeys{/tikz/wedge/.is if=wedge}
\newif\ifvflip\pgfkeys{/tikz/vflip/.is if=vflip}
\newif\ifhflip\pgfkeys{/tikz/hflip/.is if=hflip}
\newif\ifhvflip\pgfkeys{/tikz/hvflip/.is if=hvflip}
\def\thickness{0.4pt}
    \gdef\node@@on@layer{%
      \setbox\tikz@tempbox=\hbox\bgroup\pgfonlayer{#1}\unhbox\tikz@tempbox\endpgfonlayer\pgfsetlinewidth{\thickness}\egroup}
\def\node@on@layer{\aftergroup\node@@on@layer}
    \pgfmathsetlength\pgf@xa{.5\pgf@xa+.5\pgf@xb}
    \pgfmathsetlength\pgf@yc{.16666\pgf@yb-.16666\pgf@ya}
\tikzset{inline text/.style =
  {text height=1.2ex,text depth=0.25ex,yshift=0.5mm}}
\tikzset{arrow box/.style =
  {rectangle,inline text,fill=white,draw,
    minimum height=5mm,yshift=-0.5mm,minimum width=5mm}}
\tikzset{dot/.style =
  {inner sep=0mm,minimum width=1mm,minimum height=1mm,
    draw,shape=circle}}
\tikzset{white dot/.style = {dot,fill=white,text depth=-0.2mm}}
\tikzset{scalar/.style = {diamond,draw,inner sep=1pt}}
\tikzset{copier/.style = {dot,fill,text depth=-0.2mm}}
\tikzset{discarder/.style = {my ground,draw,inner sep=0pt,
    minimum width=4.2pt,minimum height=11.2pt,anchor=input,rotate=90}}
\tikzset{xshiftu/.style = {shift = {(#1, 0)}}}
\tikzset{yshiftu/.style = {shift = {(0, #1)}}}
\tikzset{scriptstyle/.style={font=\everymath\expandafter{\the\everymath\scriptstyle}}}
\title[Disintegration and Bayesian Inversion]{%
  Disintegration and Bayesian Inversion via String Diagrams}
\author[K. Cho and B. Jacobs]{%
  K\ls E\ls N\ls T\ls A\ns
  C\ls H\ls O$^{\dagger}$\ns
  and\ns
  B\ls A\ls R\ls T\ns
  J\ls A\ls C\ls O\ls B\ls S$^{\ddagger}$\\
  $^{\dagger}$National Institute of Informatics\addressbreak
  2-1-2 Hitotsubashi, Chiyoda-ku, Tokyo 101-8430, Japan\addressbreak
  Email: {\normalfont\ttfamily cho@nii.ac.jp}\addressbreak
  $^{\ddagger}$Institute for Computing and Information Sciences,
  Radboud University\addressbreak
  P.O.Box 9010, 6500 GL Nijmegen, the Netherlands\addressbreak
  Email: {\normalfont\ttfamily bart@cs.ru.nl}}
\journal{} 
\date{31 August 2017; revised 7 December 2018}
\begin{document}


\maketitle

\begin{abstract} 
The notions of disintegration and Bayesian inversion are fundamental
in conditional probability theory. They produce channels, as
conditional probabilities, from a joint state, or from an already
given channel (in opposite direction).  These notions exist in the
literature, in concrete situations, but are presented here in abstract
graphical formulations. The resulting abstract descriptions are used
for proving basic results in conditional probability theory. The
existence of disintegration and Bayesian inversion is discussed for
discrete probability, and also for measure-theoretic probability --- via
standard Borel spaces and via likelihoods. Finally, the usefulness of
disintegration and Bayesian inversion is illustrated in several
examples.
\end{abstract}

\section{Introduction}
\tikzextname{s_intro}

The essence of conditional probability can be summarised informally in
the following equation about probability distributions:
$$\begin{array}{rcl}
\textit{joint}
& = &
\textit{conditional} \,\cdot\, \textit{marginal}.
\end{array}$$

\noindent A bit more precisely, when we have joint probabilities
$\Pr(x,y)$ for elements $x,y$ ranging over two sample spaces, the
above equation splits into two equations,
\begin{equation}
\label{eqn:conditionalprobability}
\begin{array}{rcccl}
\Pr(y \given x) \cdot \Pr(x)
& = &
\Pr(x,y)
& = &
\Pr(x\given y)\cdot \Pr(y),
\end{array}
\end{equation}

\noindent where $\Pr(x)$ and $\Pr(y)$ describe the marginals, which are
obtained by discarding variables. We see that conditional
probabilities $\Pr(y\given x)$ and $\Pr(x\given y)$ can be constructed in
two directions, namely $y$ given $x$, and $x$ given $y$. We also see
that we need to copy variables: $x$ on the left-hand-side of the
equations~\eqref{eqn:conditionalprobability}, and $y$ on the
right-hand-side.

Conditional probabilities play a crucial role in Bayesian probability
theory. They form the nodes of Bayesian
networks~\cite{Pearl88,BernardoS00,Barber12}, which reflect the
conditional independencies of the underlying joint distribution via
their graph structure. As part of our approach, we shall capture
conditional independence in an abstract manner.

The main notion of this paper is \emph{disintegration}. It is the
process of extracting a conditional probability from a joint
probability.  Disintegration, as we shall formalise it here, gives a
structural description of the above
equation~\eqref{eqn:conditionalprobability} in terms of \emph{states}
and \emph{channels}. In general terms, a \emph{state} is a probability
distribution of some sort (discrete, measure-theoretic, or even quantum) and
a \emph{channel} is a map or morphism in a probabilistic setting, like
$\Pr(y\given x)$ and $\Pr(x\given y)$ as used above. It can take the form
of a stochastic matrix, probabilistic transition system, Markov
kernel, conditional probability table (in a Bayesian network), or
morphism in a Kleisli category of a `probability
monad'~\cite{Jacobs17}.  A state is a special kind of channel, with
trivial domain. Thus we can work in a monoidal category of channels,
where we need discarding and copying --- more formally, a comonoid
structure on each object --- in order to express the above conditional
probability equations~\eqref{eqn:conditionalprobability}.

In this article we abstract away from interpretation details and will
describe disintegration pictorially, in the language of string
diagrams. This language can be seen as the internal language of
symmetric monoidal categories~\cite{Selinger2010} --- with comonoids
in our case. The essence of disintegration becomes: extracting a
conditional probability channel from a joint state.

Categorical approaches to Bayesian conditioning have appeared for
instance in~\cite{CulbertsonS2014,StatonYHKW16,ClercDDG2017} and
in~\cite{JacobsWW15a,JacobsZ16,Jacobs17}. The latter references use
effectus theory~\cite{Jacobs15d,ChoJWW15b}, a new comprehensive
approach aimed at covering the logic of both quantum theory and
probability theory, supported by a Python-based tool \EfProb, for
`effectus probability'. This tool is used for the (computationally
extensive) examples in this paper.

Disintegration, also known as regular conditional probability, is a
notoriously difficult operation in measure-theoretic probability, see
\textit{e.g.}~\cite{Pollard2002,Panangaden09,ChangP1997}: it may not
exist~\cite{Stoyanov2014}; even if it exists it may be determined only
up to negligible sets; and it may not be continuous or
computable~\cite{AckermanFR2011}.  Disintegration has been studied
using categorical language in~\cite{CulbertsonS2014}, which focuses on
a specific category of probabilistic mappings.
Our approach here is more axiomatic.

We thus describe disintegration as going from a joint state to a
channel. A closely related concept is \emph{Bayesian inversion}: it
turns a channel (with a state) into a channel in opposite direction.
We show how Bayesian inversion can be understood and expressed easily
in terms of disintegration --- and also how, in the other direction,
disintegration can be obtained from Bayesian inversion. Bayesian
inversion is taken as primitive notion in~\cite{ClercDDG2017}. Here we
start from disintegration. The difference is a matter of choice.

Bayesian inversion is crucial for backward inference. We explain it
informally: let $\sigma$ be a state of a domain/type $X$, and $c\colon
X \rightarrow Y$ be a channel; Bayesian inversion yields a channel
$d\colon Y \rightarrow X$. Informally, it produces for an element
$y\in Y$, seen as singleton/point predicate $\{y\}$, the conditioning
of the state $\sigma$ with the pulled back evidence $c^{-1}(\{y\})$.
A concrete example involving such `point observations' will be
described at the end of Section~\ref{sec:likelihood}. More generally,
disintegration and Bayesian inversion are used to structurally
organise state updates in the presence of new evidence in probabilistic
programming, see
\textit{e.g.}~\cite{GordonHNRG14,BorgstromGGMG13,StatonYHKW16,KatoenGJKO2015}.
See also \cite{ShanR17}, where disintegration
is handled via symbolic manipulation.

Disintegration and Bayesian inversion are relatively easy to define in discrete
probability theory. The situation is much more difficult in measure-theoretic
probability theory, first of all because point predicates $\{y\}$ do
not make much sense there, see also~\cite{ChangP1997}.
A common solution to the problem of the existence of
disintegration / Bayesian inversion
is to restrict ourselves to standard Borel spaces, as in~\cite{ClercDDG2017}.
We take this approach too.
There is still an issue that disintegration is determined only up to negligible sets.
We address this by defining `almost equality'
in our abstract pictorial formulation.
This allows us to present a fundamental result from~\cite{ClercDDG2017} abstractly
in our setting, see Section~\ref{sec:equality}.

Another common, more concrete solution is to
assume a \emph{likelihood}, that is, a probabilistic relation $X\times
Y \rightarrow \pRR$. Such a likelihood gives rise to probability
density function (pdf), providing a good handle on the situation,
see~\cite{Pawitan01}.  The technical core of
Section~\ref{sec:likelihood} is a generalisation of this
likelihood-based approach.

The paper is organised as follows. It starts with a brief introduction
to the graphical language that we shall be using, and to the
underlying monoidal categories with discarding and copying. Then,
Section~\ref{sec:disintegration} introduces both disintegration and
Bayesian inversion in this graphical language, and relates the two
notions. Subsequently, Section~\ref{sec:classifier} contains an
elaborated example, namely of naive Bayesian classification. A
standard example from the literature~\cite{WittenFH11} is redescribed
in the current setting: first, channels are extracted via
disintegration from a table with given data; next, Bayesian inversion
is applied to the combined extracted channels, giving the required
classification. This is illustrated in both the discrete and the
continuous version of the example.

Next, Section~\ref{sec:equality} is more technical and elaborates the
standard equality notion of `equal almost everywhere' in the current
setting. This is used for describing Bayesian inversion in a more
formal way,
following~\cite{ClercDDG2017}. Section~\ref{sec:conditionalindependence}
uses our graphical approach to review conditional independence and to
prove at an abstract level several known results, namely the equivalence
of various formulations of conditional independence, and the
`graphoid' axioms
from~\cite{VermaP1988,GeigerVP1990}. Section~\ref{sec:beyondcausal}
relaxes the requirement that maps are causal, so that `effects' can be
used as the duals of states for validity and conditioning. The main
result relates conditioning of joint states to forward and backward
inference via the extracted channels, in the style
of~\cite{JacobsZ16,JacobsZ18}; it is illustrated in a concrete
example, where a Bayesian network is seen as a graph in a Kleisli
category --- following~\cite{Fong12}. Finally,
Section~\ref{sec:likelihood} gives the likelihood formulation of
disintegration and inversion, as briefly described above.

\section{Graphical language}\label{sec:graphical}
\tikzextname{s_graphical}

The basic idea underlying this paper is to describe probability theory in terms of
\emph{channels}. A channel $f\colon X\to Y$ is a (stochastic) process
from a system of type $X$ into that of $Y$.  Concretely, it may be a
probability matrix or kernel.  Our standing assumption is that
types (as objects)
and channels (as arrows)
form a \emph{symmetric monoidal category}.
For the formal definition we refer to~\cite{MacLane1998}.  We
informally summarise that we have the following constructions.
\begin{enumerate}
\item
Sequential composition
$g\circ f\colon X\to Z$
for appropriately typed channels $f\colon X\to Y$ and $g\colon Y\to Z$.
\item
Parallel composition
$f\otimes g\colon X\otimes Z\to Y\otimes W$
for $f\colon X\to Y$ and $g\colon Z\to W$.
This involves composition of types $X\otimes Z$.
\item
Identity channels $\id_X\colon X\to X$, which `do nothing'.
Thus $\id\circ f = f = \id\circ f$.
\item
A unit type $I$, which represents `no system'.
Thus $I\otimes X\cong X\cong X\otimes I$.
\item
Swap isomorphisms $X\otimes Y\cong Y\otimes X$ and associativity
isomorphisms $(X\otimes Y)\otimes Z \cong X\otimes (Y\otimes Z)$, so
the ordering in composed types does not matter.
\end{enumerate}

The representation of such channels in the ordinary `formula' notation
easily becomes complex and thus reasoning becomes hard to follow.  A
graphical language known as \emph{string diagrams} offers a more
convenient and intuitive way of reasoning in a symmetric monoidal
category.

In string diagrams, types/objects are represented as wires
`\,\begin{tikzpicture}[baseline=-.6ex,y=1ex]
\draw (0,-1) -- (0,1);
\end{tikzpicture}\,', with information flowing bottom to top.
The composition of types is depicted by juxtaposition of wires, and
the unit type is `no diagram' as below.
\[
\vcenter{\hbox{%
\begin{tikzpicture}[font=\small]
\draw (0,0) to +(0,1);
\node at (0.55,0.5) {$X\otimes Y$};
\end{tikzpicture}}}
=\quad
\vcenter{\hbox{%
\begin{tikzpicture}[font=\small]
\draw (0,0) to +(0,1);
\draw (0.5,0) to +(0,1);
\node at (0.2,0.5) {$X$};
\node at (0.7,0.5) {$Y$};
\end{tikzpicture}}}
\qquad\qquad\qquad
\vcenter{\hbox{%
\begin{tikzpicture}[font=\small]
\draw (0,0) to +(0,1);
\node at (0.2,0.5) {$I$};
\end{tikzpicture}}}
=\quad
\vcenter{\hbox{%
\begin{tikzpicture}[font=\small]
\draw [gray,dashed] (0,0) rectangle (0.7,1);
\end{tikzpicture}}}
\]
Channels/arrows are represented by boxes with an input wire(s) and an
output wire(s), in upward direction.  When a box does not have input
or output, we write it as a triangle or diamond. For example, $f\colon
X\to Y\otimes Z$, $\omega\colon I\to X$, $p\colon X \to I$, and
$s\colon I\to I$ are respectively depicted as:
\[
\vcenter{\hbox{%
\begin{tikzpicture}[font=\small]
\node[arrow box] (v1) at (0,0) {\;\;$f$\;\;};
\draw (v1.north) ++(-0.2,0) -- +(0,0.35);
\draw (v1.north) ++(0.2,0) -- +(0,0.35);
\draw (v1.south) -- +(0,-0.35);
\node at (0.2,-0.5) {$X$};
\node at (-0.4,0.5) {$Y$};
\node at (0.4,0.5) {$Z$};
\end{tikzpicture}}}
\qquad\qquad
\vcenter{\hbox{%
\begin{tikzpicture}[font=\small]
\node[state] (omega) at (0,0) {$\omega$};
\node (X) at (0,0.5) {};
\draw (omega) to (X);
\node at (0.2,0.35) {$X$};
\end{tikzpicture}}}
\qquad\qquad
\vcenter{\hbox{%
\begin{tikzpicture}[font=\small]
\node[state,hflip] (p) at (0,0) {$p$};
\node (X) at (0,-0.5) {};
\draw (p) to (X);
\node at (0.2,-0.35) {$X$};
\end{tikzpicture}}}
\qquad\qquad
\vcenter{\hbox{%
\begin{tikzpicture}[font=\small]
\node[scalar] (c) at (-1,0.6) {$s$};
\end{tikzpicture}}}
\]
The identity channels are represented by `no box', \textit{i.e.}\ just wires,
and the swap isomorphisms are represented by crossing of wires:
\[
\vcenter{\hbox{%
\begin{tikzpicture}[font=\small]
\node[arrow box] (c) at (0,0) {$\id$};
\draw  (c) to (0, 0.5);
\draw (c) to (0,-0.5);
\node at (0.2,0.6) {$X$};
\node at (0.2,-0.6) {$X$};
\end{tikzpicture}}}
=\;\;
\vcenter{\hbox{%
\begin{tikzpicture}[font=\small]
\draw (0,0) to +(0,1);
\node at (0.2,0.5) {$X$};
\end{tikzpicture}}}
\qquad\qquad\qquad
\vcenter{\hbox{%
\begin{tikzpicture}[font=\small]
\draw (0,0) to (0.5,1);
\draw (0,1) to (0.5,0);
\node at (-0.2,0) {$X$};
\node at (0.7,1) {$X$};
\node at (0.7,0) {$Y$};
\node at (-0.2,1) {$Y$};
\end{tikzpicture}}}
\]
Finally, the sequential composition of channels is depicted by connecting
the input and output wires, and
the parallel composition is given by juxtaposition, respectively as below:
\[
\vcenter{\hbox{%
\begin{tikzpicture}[font=\small]
\node[arrow box] (c) at (0,0) {$g\circ f$};
\draw  (c) to (0, 0.7);
\draw (c) to (0,-0.7);
\end{tikzpicture}}}
\;\;=\;\;
\vcenter{\hbox{%
\begin{tikzpicture}[font=\small]
\node[arrow box] (f) at (0,-1.5) {$f$};
\node[arrow box] (g) at (0,-0.8) {$g$};
\draw (g.north) to +(0,0.2);
\draw (f) to (g);
\draw (f.south) to +(0,-0.2);
\end{tikzpicture}}}
\qquad\qquad\qquad
\vcenter{\hbox{%
\begin{tikzpicture}[font=\small]
\node[arrow box] (c) at (0,0) {$h\otimes k$};
\draw (c) to (0, 0.6);
\draw (c) to (0,-0.6);
\end{tikzpicture}}}
\;\;=\;\;
\vcenter{\hbox{%
\begin{tikzpicture}[font=\small]
\node[arrow box] (h) at (0,0) {$h$};
\node[arrow box] (k) at (0.7,0) {$k$};
\draw (h) to +(0, 0.6);
\draw (h) to +(0,-0.6);
\draw (k) to +(0, 0.6);
\draw (k) to +(0,-0.6);
\end{tikzpicture}}}
\]
The use of string diagrams is justified by the following `coherence'
theorem;
see \cite{JoyalS1991,Selinger2010} for details.

\begin{theorem}
A well-formed equation between composites of arrows
in a symmetric monoidal category follows
from the axioms of symmetric monoidal categories if and only if
the string diagrams of both sides are equal
up to isomorphism of diagrams.
\qed
\end{theorem}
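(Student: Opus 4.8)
The plan is to prove the two implications separately, establishing soundness (provable equality implies diagram isomorphism) and completeness (diagram isomorphism implies provable equality) of the graphical calculus. First I would fix a monoidal signature $\Sigma$ --- a set of object generators together with a set of morphism generators, each assigned a domain and codomain word in those object generators --- and form the free symmetric monoidal category $F(\Sigma)$, whose arrows are equivalence classes of formal composites built from generators, identities and symmetries under sequential and parallel composition, quotiented by exactly the axioms of a symmetric monoidal category. In parallel I would give a precise combinatorial/topological definition of a string diagram over $\Sigma$: a progressive (unidirectional) plane graph whose nodes are labelled by morphism generators and whose edges are labelled by object generators, taken up to the relevant notion of isomorphism, namely planar isotopy preserving the progressive structure. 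The theorem then amounts to the statement that the evident assignment $\Phi\colon F(\Sigma)\to\mathsf{Diag}(\Sigma)$, sending each generator to its elementary box-diagram, is a bijection on hom-sets.

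For soundness I would check that each generating axiom --- associativity and unit laws for $\circ$, functoriality of $\otimes$ together with the interchange law $(g\otimes k)\circ(f\otimes h)=(g\circ f)\otimes(k\circ h)$, naturality of the associator, unitors and symmetry, and the coherence equations (pentagon, triangle, hexagon, and $\sigma_{Y,X}\circ\sigma_{X,Y}=\id$) --- is sent by $\Phi$ to a pair of isomorphic diagrams. This is a finite, diagram-by-diagram verification: each axiom corresponds to an elementary local deformation (sliding one box past another in height, absorbing a unit wire, uncrossing a double transposition, and so on), so $\Phi$ is well defined and the forward implication holds.

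The substance of the theorem is completeness, that is, that $\Phi$ is full and faithful. Fullness I would obtain by a slicing argument: given any progressive diagram, choose finitely many horizontal levels in general position so that between consecutive levels exactly one event occurs --- a single generator-box, or a single crossing, or nothing --- and read off the corresponding composite of tensors of generators, symmetries and identities; this exhibits every diagram as $\Phi$ of some term. Faithfulness is where I expect the main obstacle to lie. Here one must show that if two terms $t_1,t_2$ yield isotopic diagrams then they are already equal in $F(\Sigma)$. The strategy is to view the isotopy as a homotopy of diagrams and track how the induced slicing varies: generically the level sets change smoothly, and at isolated critical moments two events exchange height order, an event passes a unit wire, or a crossing is created or annihilated. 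The key lemma is that each such elementary move between slicings is realised by exactly one symmetric-monoidal axiom, so the two level-wise readings are connected by a finite chain of axiom applications; assembling these local steps along the isotopy yields a derivation of $t_1=t_2$.

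The delicate points I anticipate are the careful handling of the unit object, where wires may appear and disappear, so that one must verify that unit coherence accounts for all such moves without introducing extra relations; and the symmetry data, where one must confirm that naturality of $\sigma$, the two hexagons, and the involution law are jointly sufficient to reorder any interleaving of crossings and box-passings, with no further braiding-type relation surviving. This last point is precisely where the \emph{symmetric} rather than merely braided case is used, collapsing over- and under-crossings. Once the elementary-move lemma is established uniformly across all three kinds of critical event, bijectivity of $\Phi$ --- and hence the stated biconditional --- follows; the full topological formalisation is carried out in~\cite{JoyalS1991,Selinger2010}.
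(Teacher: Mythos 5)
The paper itself offers no proof of this statement: it is quoted as a known coherence theorem with a pointer to \cite{JoyalS1991,Selinger2010}, so the only fair comparison is with the standard argument in those references. Your proposal reconstructs exactly that argument --- free symmetric monoidal category on a signature, an interpretation map $\Phi$ into diagrams, soundness checked axiom by axiom, fullness by generic slicing, faithfulness by tracking critical events of a deformation --- so the overall architecture is the right one. There is, however, one genuine defect in your setup, and it is not a cosmetic one.

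You declare that diagrams are taken ``up to the relevant notion of isomorphism, namely planar isotopy preserving the progressive structure.'' In the symmetric case these two notions do \emph{not} coincide, and the difference is precisely what the theorem turns on. Under planar isotopy, crossings are invariants: the diagram of $\sigma_{Y,X}\circ\sigma_{X,Y}$ has two crossings and is not planar isotopic to the identity diagram, yet $\sigma_{Y,X}\circ\sigma_{X,Y}=\id$ is an axiom you must validate; likewise naturality of $\sigma$ (sliding a box through a crossing) is not realised by any isotopy of the plane. So with your stated equivalence, $\Phi$ is not even well defined --- soundness fails at the first axiom you list under ``uncrossing a double transposition'' --- and a fortiori it is not a bijection on hom-sets. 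The correct equivalence, and the one appearing in the statement being proved, is isomorphism of abstract (progressive, polarised, boundary-anchored) diagrams, in which a crossing is not part of the structure at all: only the combinatorics of which port is wired to which survives, and wires may pass through one another freely. Your own faithfulness argument betrays the slip: you admit ``a crossing is created or annihilated'' as a critical event, but such an event cannot occur during an isotopy; it occurs during a homotopy realising an isomorphism of diagrams. Once the equivalence is stated correctly, the rest of your outline is the standard Joyal--Street proof: soundness goes through (double crossings and box-through-crossing slides are diagram isomorphisms), the slicing argument for fullness is unchanged, and the critical-event analysis gives faithfulness. This correction is also exactly where the symmetric case diverges from the braided one, where diagrams live in three dimensions, over/under crossing data does survive, and the coarser equivalence you need here would be false. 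A smaller remark of the same kind: the worry about unit wires ``appearing and disappearing'' dissolves if, as in the cited treatments, one either strictifies first via MacLane coherence or defines diagrams so that the unit is the empty diagram, with no wire to track.
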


We further assume the following structure in our category.
For each type $X$ there are a discarder $\ground_X\colon X\to I$
and a copier $\copier_X\colon X\to X\otimes X$.
They are required to satisfy the following equations:
\[
\vcenter{\hbox{%
\begin{tikzpicture}[font=\small]
\node[copier] (c) at (0,0) {};
\coordinate (x1) at (-0.3,0.3);
\node[discarder] (d) at (x1) {};
\coordinate (x2) at (0.3,0.5);
\draw (c) to[out=165,in=-90] (x1);
\draw (c) to[out=15,in=-90] (x2);
\draw (c) to (0,-0.3);
\end{tikzpicture}}}
\quad=\quad
\vcenter{\hbox{%
\begin{tikzpicture}[font=\small]
\draw (0,0) to (0,0.8);
\end{tikzpicture}}}
\quad=\quad
\vcenter{\hbox{%
\begin{tikzpicture}[font=\small]
\node[copier] (c) at (0,0) {};
\coordinate (x1) at (-0.3,0.5);
\coordinate (x2) at (0.3,0.3);
\node[discarder] (d) at (x2) {};
\draw (c) to[out=165,in=-90] (x1);
\draw (c) to[out=15,in=-90] (x2);
\draw (c) to (0,-0.3);
\end{tikzpicture}}}
\qquad\qquad\quad
\vcenter{\hbox{%
\begin{tikzpicture}[font=\small]
\node[copier] (c2) at (0,0) {};
\node[copier] (c1) at (-0.3,0.3) {};
\draw (c2) to[out=165,in=-90] (c1);
\draw (c2) to[out=15,in=-90] (0.4,0.6);
\draw (c1) to[out=165,in=-90] (-0.6,0.6);
\draw (c1) to[out=15,in=-90] (0,0.6);
\draw (c2) to (0,-0.3);
\end{tikzpicture}}}
\quad=\quad
\vcenter{\hbox{%
\begin{tikzpicture}[font=\small]
\node[copier] (c2) at (-0.5,0) {};
\node[copier] (c1) at (-0.2,0.3) {};
\draw (c2) to[out=15,in=-90] (c1);
\draw (c2) to[out=165,in=-90] (-1,0.6);
\draw (c1) to[out=165,in=-90] (-0.5,0.6);
\draw (c1) to[out=15,in=-90] (0.1,0.6);
\draw (c2) to (-0.5,-0.3);
\end{tikzpicture}}}
\qquad\qquad\quad
\vcenter{\hbox{%
\begin{tikzpicture}[font=\small]
\node[copier] (c) at (0,0.4) {};
\draw (c)
to[out=15,in=-90] (0.25,0.7)
to[out=90,in=-90] (-0.25,1.4);
\draw (c)
to[out=165,in=-90] (-0.25,0.7)
to[out=90,in=-90] (0.25,1.4);
\draw (c) to (0,0.1);
\end{tikzpicture}}}
\quad=\quad
\vcenter{\hbox{%
\begin{tikzpicture}[font=\small]
\node[copier] (c) at (0,0.4) {};
\draw (c)
to[out=15,in=-90] (0.25,0.7);
\draw (c)
to[out=165,in=-90] (-0.25,0.7);
\draw (c) to (0,0.1);
\end{tikzpicture}}}
\]
This says that $(\copier_X, \ground_X)$ forms a commutative comonoid
on $X$.  By the associativity we may write:
\[
\begin{ctikzpicture}[font=\small]
\node[copier] (c3) at (0,0) {};
\draw (0,-0.2) to (c3);
\draw (c3) to[out=165,in=-90] (-0.6,0.4);
\draw (c3) to[out=160,in=-90] (-0.4,0.4);
\draw (c3) to[out=150,in=-90] (-0.2,0.4);
\draw (c3) to[out=15,in=-90] (0.6,0.4);
\node[font=\normalsize] at (0.2,0.3) {$\dots$};
\end{ctikzpicture}
\;\;\coloneqq\;\;
\begin{ctikzpicture}[font=\small]
\node[copier] (c2) at (-0.1,0.2) {};
\node[copier] (c1) at (-0.3,0.4) {};
\node[copier] (c3) at (0.5,-0.1) {};
\draw (c2) to[out=165,in=-90] (c1);
\draw (c2) to[out=15,in=-90] (0.2,0.6);
\draw (c1) to[out=165,in=-90] (-0.5,0.6);
\draw (c1) to[out=15,in=-90] (-0.1,0.6);
\draw (0.5,-0.35) to (c3);
\draw (c3) to[out=15,in=-90] (0.85,0.6);
%
\node[font=\normalsize] at (0.5,0.25) {$\dots$};
\end{ctikzpicture}
\]
Moreover we assume that the comonoid structures $(\copier_X, \ground_X)$ are compatible
with the monoidal structure $(\otimes, I)$, in the sense that
the following equations hold.
\[
\vcenter{\hbox{%
\begin{tikzpicture}[font=\small]
\node[discarder] (d) at (0,0) {};
\draw (d) to +(0,-0.5);
\node at (0.6,-0.3) {$X\otimes Y$};
\end{tikzpicture}}}
=
\vcenter{\hbox{%
\begin{tikzpicture}[font=\small]
\node[discarder] (d) at (0,0) {};
\node[discarder] (d2) at (0.6,0) {};
\draw (d) to +(0,-0.5);
\draw (d2) to +(0,-0.5);
\node at (0.2,-0.3) {$X$};
\node at (0.8,-0.3) {$Y$};
\end{tikzpicture}}}
\qquad\quad
\vcenter{\hbox{%
\begin{tikzpicture}[font=\small]
\node[discarder] (d) at (0,0) {};
\draw (d) to +(0,-0.5);
\node at (0.2,-0.3) {$I$};
\end{tikzpicture}}}
=\;
\vcenter{\hbox{%
\begin{tikzpicture}[font=\small]
\draw [gray,dashed] (0,0) rectangle (0.45,0.65);
\end{tikzpicture}}}
\qquad\quad
\vcenter{\hbox{%
\begin{tikzpicture}[font=\small]
\node[copier] (c) at (0,0.4) {};
\draw (c)
to[out=15,in=-90] (0.25,0.7);
\draw (c)
to[out=165,in=-90] (-0.25,0.7);
\draw (c) to (0,0);
\node at (0.6,0.1) {$X\otimes Y$};
\end{tikzpicture}}}
=
\vcenter{\hbox{%
\begin{tikzpicture}[font=\small]
\node[copier] (c) at (0,0) {};
\node[copier] (c2) at (0.4,0) {};
\draw (c) to[out=15,in=-90] +(0.5,0.45);
\draw (c) to[out=165,in=-90] +(-0.4,0.45);
\draw (c) to +(0,-0.4);
\draw (c2) to[out=15,in=-90] +(0.4,0.45);
\draw (c2) to[out=165,in=-90] +(-0.5,0.45);
\draw (c2) to +(0,-0.4);
\node at (-0.2,-0.3) {$X$};
\node at (0.6,-0.3) {$Y$};
\end{tikzpicture}}}
\qquad\quad
\vcenter{\hbox{%
\begin{tikzpicture}[font=\small]
\node[copier] (c) at (0,0.4) {};
\draw (c)
to[out=15,in=-90] (0.25,0.7);
\draw (c)
to[out=165,in=-90] (-0.25,0.7);
\draw (c) to (0,0);
\node at (0.2,0.1) {$I$};
\end{tikzpicture}}}
=\;
\vcenter{\hbox{%
\begin{tikzpicture}[font=\small]
\draw [gray,dashed] (0,0) rectangle (0.45,0.65);
\end{tikzpicture}}}
\]
Note that we do \emph{not} assume that
these maps are natural. Explicitly, we do not
necessarily have $\copier\circ f = (f\otimes f)\circ \copier$
or $\ground\circ f=\ground$.

We will use these
symmetric monoidal categories throughout in the paper.
For convenience, we introduce a term for them.

\begin{definition}
\label{def:cd-cat}
A \emph{CD-category} is
a symmetric monoidal category $(\catC,\otimes,I)$ with
a commutative comonoid $(\copier_X,\ground_X)$
for each $X\in\catC$, suitably compatible as described above.
\end{definition}
Here `CD' stands for Copy/Discard.

\begin{definition}
An arrow $f\colon X\to Y$ in a CD-category
is said to be \emph{causal} if
\[
\vcenter{\hbox{%
\begin{tikzpicture}[font=\small]
\node[arrow box] (c) at (0,0) {$f$};
\node[discarder] (d) at (0,0.5) {};
\draw  (c) to (d);
\draw (c) to (0,-0.5);
\end{tikzpicture}}}
\;=\;
\vcenter{\hbox{%
\begin{tikzpicture}[font=\small]
\node[discarder] (d) at (0,0) {};
\draw (d) to (0,-0.5);
\end{tikzpicture}}}
\quad.
\]

\noindent A CD-category is \emph{affine} if all the arrows are causal,
or equivalently, the tensor unit $I$ is a final object.
\end{definition}

The term `causal' comes from (categorical) quantum
foundation~\cite{CoeckeK2017,DArianoCP2017}, and is related to
relativistic causality, see \textit{e.g.}~\cite{Coecke2016}.

We reserve the term `channel' for causal arrows.  Explicitly, causal
arrows $c\colon X\to Y$ in a CD-category are called \emph{channels}.
A channel $\omega\colon I\to X$ with input type $I$ is called a
\emph{state} (on $X$).
For the time being
we will only use channels (and states), and thus only consider affine
CD-categories.
Non-causal arrows will not appear until
Section~\ref{sec:beyondcausal}.

Our main examples of affine CD-categories are two Kleisli categories
$\Kl(\Dst)$ and $\Kl(\Giry)$, respectively, for discrete probability,
and more general, measure-theoretic probability.  We explain them in
order below. There are more examples, like the Kleisli category of the
non-empty powerset monad, or the (opposite of) the category of
commutative $C^*$-algebras (with positive unital maps), but they
are out of scope here.

\begin{example}
\label{ex:KlD}
  What we call a \emph{distribution} or a \emph{state} over a set
  $X$ is a finite subset $\{x_{1}, x_{2}, \ldots, x_{n}\} \subseteq
  X$, called the support where each element $x_{i}$ occurs with a
  multiplicity $r_{i} \in [0,1]$, such that $\sum_{i}r_{i}=1$. Such a
  convex combination is often written as $r_{1}\ket{x_1} + \cdots +
  r_{n}\ket{x_n}$ with $r_{i}\in[0,1]$. The ket notation $\ket{-}$ is
  meaningless syntactic sugar that is used to distinguish elements
  $x\in X$ from occurrences in such formal sums. Notice that a
  distribution can also be written as a function $\omega \colon X
  \rightarrow [0,1]$ with finite support $\supp(\omega) =
  \setin{x}{X}{\omega(x) \neq 0}$. We shall write $\Dst(X)$ for the
  set of distributions over $X$. This $\Dst$ is a monad on the
  category $\Set$ of sets and functions.

A function $f\colon X \rightarrow \Dst(Y)$ is called a \emph{Kleisli}
map; it forms a channel $X \rightarrow Y$. Such maps can be composed
as matrices, for which we use special notation $\klafter$.
$$\begin{array}{rcl}
(g \klafter f)(x)(z)
& = &
\sum_{y\in Y} f(x)(y) \cdot g(y)(z) \qquad 
   \mbox{for $g\colon Y \rightarrow  \Dst(Z)$ with $x\in X, z\in Z$.}
\end{array}$$

\noindent We write $1 = \{*\}$ for a singleton set, and $2 = 1+1 =
\{0,1\}$. Notice that $\Dst(1) \cong 1$ and $\Dst(2) \cong [0,1]$. We
can identify a state on $X$ with a channel $1 \rightarrow X$.

The monad $\Dst$ is known to be \emph{commutative}.  This implies that
finite products of sets $X\times Y$ give rise to a symmetric monoidal
structure on the Kleisli category $\Kl(\Dst)$.  Specifically, for two
maps $f\colon X\to \Dst(Y)$ and $g\colon Z\to \Dst(W)$, the tensor
product / parallel composition $f\otimes g\colon X\times Z\to
\Dst(Y\times W)$ is given by:
\[ \begin{array}{rcl}
(f\otimes g)(x,z)(y,w)
& = &
f(x,y)\cdot g(z,w).
\end{array}
\]

\noindent For each set $X$ there are a copier $\copier_X\colon X\to
\Dst(X\times X)$ and a discarder $\ground_X\colon X\to \Dst(1)$
given by $\copier_X(x)=1\ket{x,x}$
and $\ground_X(x)=1\ket{*}$, respectively.
They come from the cartesian
(finite product) structure of the base category $\Set$,
through the obvious functor $\Set\to\Kl(\Dst)$.
Therefore $\Kl(\Dst)$ is a
CD-category.  It is moreover affine, since the monad is affine in the
sense that $\Dst(1)\cong 1$.
\end{example}

\begin{example}
\label{ex:KlG}
  Let $X = (X, \Sigma_{X})$ be a measurable space, where
  $\Sigma_{X}$ is a $\sigma$-algebra on $X$. A \emph{probability
  measure}, also called a \emph{state}, on $X$ is a function $\omega
  \colon \Sigma_{X} \rightarrow [0,1]$ which is countably additive and
  satisfies $\omega(X) = 1$. We write $\Giry(X)$ for the collection of
  all such probability measures on $X$. This set $\Giry(X)$ is itself
  a measurable space with the smallest $\sigma$-algebra
  such that
  for each $A\in\Sigma_X$ the `evaluation' map
  $\mathrm{ev}_A\colon \Giry(X)\to[0,1]$,
  $\mathrm{ev}_A(\omega)=\omega(A)$,
  is measurable.
  Notice that $\Giry(X) \cong \Dst(X)$ when $X$ is
  a finite set (as discrete space). In particular, $\Giry(2) \cong
  \Dst(2) \cong [0,1]$. This $\Giry$ is a monad on the category
  $\Meas$ of measurable spaces, with measurable functions between
  them; it is called the Giry monad, after~\cite{Giry1982}.

A Kleisli map, that is, a measurable function $f\colon X \rightarrow
\Giry(Y)$ is a channel (or a \emph{probability kernel},
see Example~\ref{ex:CD-cat}). These channels can be
composed, via Kleisli composition $\klafter$, using integration%
\footnote{%
We denote
the integral of a function $f$
with respect to a measure $\mu$
by $\int_X f(x)\,\mu(\dd x)$.}:
\[
(g \klafter f)(x)(C)
=
\int_Y g(y)(C) \, f(x)(\dd y)  \quad 
   \text{where $g\colon Y \rightarrow \Giry(Z)$ and $x\in X, C\in\Sigma_{Z}$.}
\]

It is well-known that the monad $\Giry$ is commutative and affine, see
also~\cite{Jacobs17}.  Thus, in a similar manner to the previous
example, the Kleisli category $\Kl(\Giry)$ is an affine CD-category.
The parallel composition $f\otimes g\colon X\times Z\to \Giry(Y\times W)$
for $f\colon X\to \Giry(Y)$ and $g\colon Z\to \Giry(W)$ is given as:
\[ \begin{array}{rcl}
(f\otimes g)(x,z)(B\times D)
& = &
f(x)(B)\cdot g(z)(D),
\end{array}
\]
for $x\in X$, $z\in Z$, $B\in\Sigma_Y$, and $D\in\Sigma_W$.
Since $f(x)$ and $g(z)$ are ($\sigma$-)finite measures,
this indeed
determines a unique measure
$(f\otimes g)(x,z)\in \Giry(Y\times W)$,
namely the unique
product measure of $f(x)$ and $g(z)$.
Explicitly, for $E\in\Sigma_{Y\times W}$,
\[
(f\otimes g)(x,z)(E)
=
\int_{W}
\paren[\bigg]{
\int_{Y}
\indic{E}(y,w)\,
f(x)(\dd y)}
g(z)(\dd w)
\]
where $\indic{E}$ is the indicator function.
\end{example}

\section{Marginalisation, integration and disintegration}
\label{sec:disintegration}
\tikzextname{s_disint}

Let $\catC$ be an affine CD-category.  We think of states
$\omega\colon I\to X$ in $\catC$ as abstract (probability)
distributions on type $X$.  States of the form $\omega\colon I\to
X\otimes Y$, often called (bipartite) joint states, are seen as joint
distributions on $X$ and $Y$.  Later on we shall also consider
$n$-partite joint states, but for the time being we restrict ourselves
to bipartite ones.  For a joint distribution $\Pr(x,y)$ in discrete
probability, we can calculate the marginal distribution on $X$ by
summing (or marginalising) $Y$ out, as $\Pr(x)=\sum_y \Pr(x,y)$.  The
marginal distribution on $Y$ is also calculated by $\Pr(y)=\sum_x
\Pr(x,y)$.  In our abstract setting, given a joint state $\omega\colon
I\to X\otimes Y$, we can obtain marginal states simply by discarding
wires, as in:
\[
\vcenter{\hbox{%
\begin{tikzpicture}[font=\small]
\node[state] (omega) at (0,0) {\;$\omega$\;};
\coordinate (X) at (-0.25,0.55) {};
\node [discarder] (ground) at (0.25,0.3) {};
\draw (omega) ++(-0.25, 0) to (X);
\draw (omega) ++(0.25, 0) to (ground);
\node[scriptstyle] at (-0.45,0.4) {$X$};
\end{tikzpicture}}}
\quad\xmapsfrom{\text{marginal on $X$}}\quad
\vcenter{\hbox{%
\begin{tikzpicture}[font=\small]
\node[state] (omega) at (0,0) {\;$\omega$\;};
\coordinate (X) at (-0.25,0.55) {};
\coordinate (Y) at (0.25,0.55) {};
\draw (omega) ++(-0.25, 0) to (X);
\draw (omega) ++(0.25, 0) to (Y);
\path[scriptstyle]
node at (-0.45,0.4) {$X$}
node at (0.45,0.4) {$Y$};
\end{tikzpicture}}}
\quad\xmapsto{\text{marginal on $Y$}}\quad
\vcenter{\hbox{%
\begin{tikzpicture}[font=\small]
\node[state] (omega) at (0,0) {\;$\omega$\;};
\node [discarder] (X) at (-0.25,0.3) {};
\coordinate (Y) at (0.25,0.55) {};
\draw (omega) ++(-0.25, 0) to (X);
\draw (omega) ++(0.25, 0) to (Y);
%
\node[scriptstyle] at (0.45,0.4) {$Y$};
\end{tikzpicture}}}
\]
In other words, the marginal states are the state $\omega$ composed with the projection maps
$\pi_1\colon X\otimes Y\to X$ and $\pi_2\colon X\otimes Y\to Y$, as below.
\[
\pi_1 \coloneqq\;\;
\begin{ctikzpicture}[font=\small]
\node[discarder] (d) at (0,0.4) {};
\draw (0,0) to (d);
\draw (-0.35,0) to +(0,0.9);
\path[scriptstyle]
node at (-0.5,0.15) {$X$}
node at (0.15,0.15) {$Y$};
\end{ctikzpicture}
\qquad\qquad\qquad\qquad
\pi_2 \coloneqq\;\;
\begin{ctikzpicture}[font=\small]
\node[discarder] (d) at (-0.35,0.35) {};
\draw (-0.35,0) to (d);
\draw (0,0) to +(0,0.9);
\path[scriptstyle]
node at (-0.5,0.1) {$X$}
node at (0.15,0.1) {$Y$};
\end{ctikzpicture}
\]

\begin{example}
For a joint state $\omega\in\Dst(X\times Y)$ in $\Kl(\Dst)$,
the first marginal $\omega_1=\pi_1\klcirc\omega$ is given by
$\omega_1(x)=\sum_{y\in Y} \omega(x,y)$, as expected.
Similarly the second marginal is
given by $\omega_2(y)=\sum_{x\in X} \omega(x,y)$.
\end{example}

\begin{example}
For a joint state $\omega\in\Giry(X\times Y)$ in $\Kl(\Giry)$,
the first marginal is given by $\omega_1(A)=\omega(A\times Y)$
for $A\in\Sigma_X$,
and the second marginal by
$\omega_2(B)=\omega(X\times B)$ for $B\in\Sigma_Y$.
\end{example}

A channel $c\colon X\to Y$ is seen as an abstract \emph{conditional}
distribution $\Pr(y|x)$.
In (discrete) probability theory, we can
calculate a joint distribution $\Pr(x,y)$ from a distribution $\Pr(x)$
and a conditional distribution $\Pr(y|x)$ by the formula
$\Pr(x,y)=\Pr(y|x)\cdot\Pr(x)$,
which is often called the \emph{product rule}.
Similarly we have $\Pr(x,y)=\Pr(x|y)\cdot\Pr(y)$.
In our setting, starting from a
state $\sigma\colon I\to X$ and a channel $c\colon X\to Y$, or a state
$\tau\colon I\to Y$ and a channel $d\colon Y\to X$, we can
`integrate' them into a joint state on $X\otimes Y$ as follows, respectively:
\begin{equation}
\label{eq:joint-state}
\vcenter{\hbox{%
\begin{tikzpicture}[font=\small]
\node[state] (omega) at (0,0) {$\sigma$};
\node[copier] (copier) at (0,0.3) {};
\node[arrow box] (c) at (0.5,0.95) {$c$};
\coordinate (X) at (-0.5,1.5);
\coordinate (Y) at (0.5,1.5);
\draw (omega) to (copier);
\draw (copier) to[out=150,in=-90] (X);
\draw (copier) to[out=15,in=-90] (c);
\draw (c) to (Y);
\path[scriptstyle]
node at (-0.7,1.45) {$X$}
node at (0.7,1.45) {$Y$};
\end{tikzpicture}}}
\qquad\qquad\text{or}\qquad\qquad
\vcenter{\hbox{%
\begin{tikzpicture}[font=\small]
\node[state] (omega) at (0,0) {$\tau$};
\node[copier] (copier) at (0,0.3) {};
\node[arrow box] (c) at (-0.5,0.95) {$d$};
\coordinate (X) at (-0.5,1.5);
\coordinate (Y) at (0.5,1.5);
\draw (omega) to (copier);
\draw (copier) to[out=165,in=-90] (c);
\draw (copier) to[out=30,in=-90] (Y);
\draw (c) to (X);
\path[scriptstyle]
node at (-0.7,1.45) {$X$}
node at (0.7,1.45) {$Y$};
\end{tikzpicture}}}
\end{equation}

\begin{example}
Let $\sigma\in\Dst(X)$ and $c\colon X\to \Dst(Y)$
be a state and a channel in $\Kl(\Dst)$.
An easy calculation verifies that
$\omega=(\id\otimes c)\klcirc\copier\klcirc\sigma$, the joint state on $X\times Y$
defined as in~\eqref{eq:joint-state},
satisfies $\omega(x,y)=c(x)(y)\cdot\sigma(x)$, as we expect
from the product rule.
\end{example}

\begin{example}
For a state $\sigma\in\Giry(X)$ and a channel $c\colon X\to \Giry(Y)$
in $\Kl(\Giry)$, the joint state $\omega=(\id\otimes
c)\klcirc\copier\klcirc\sigma$ is given by $\omega(A\times B)=\int_A
c(x)(B)\, \sigma(\dd x)$ for $A\in\Sigma_X$ and $B\in\Sigma_Y$.  This
`integration' construction of a joint probability measure is standard,
see \textit{e.g.}~\cite{Pollard2002,Panangaden09}.
\end{example}

\emph{Disintegration} is an inverse operation of
the `integration' of a state and a channel into a joint state, as in~\eqref{eq:joint-state}.
More specifically, it starts from a joint state $\omega\colon I\to X\otimes Y$
and extracts either a state $\omega_1\colon I\to X$ and a channel $c_1\colon X\to Y$,
or a state $\omega_2\colon I\to Y$ and a channel $c_2\colon Y\to X$ as below,
\[
\biggl(\;
\vcenter{\hbox{%
\begin{tikzpicture}[font=\small]
\node[state] (omega) at (0,0) {$\omega_1$};
\node (X) at (0,0.5) {};
\draw (omega) to (X);
\node[scriptstyle] at (0.2,0.35) {$X$};
\end{tikzpicture}}}
\;,\;
\vcenter{\hbox{%
\begin{tikzpicture}[font=\small]
\node[arrow box] (c) at (0,0) {$c_1$};
\draw (c) to (0, 0.6);
\draw (c) to (0,-0.6);
\path[scriptstyle]
node at (0.2,0.6) {$Y$}
node at (0.2,-0.6) {$X$};
\end{tikzpicture}}}
\;\biggr)
\quad\xmapsfrom{\text{disintegration}}\quad
\vcenter{\hbox{%
\begin{tikzpicture}[font=\small]
\node[state] (omega) at (0,0) {\;$\omega$\;};
\coordinate (X) at (-0.25,0.55) {};
\coordinate (Y) at (0.25,0.55) {};
\draw (omega) ++(-0.25, 0) to (X);
\draw (omega) ++(0.25, 0) to (Y);
\path[scriptstyle]
node at (-0.45,0.4) {$X$}
node at (0.45,0.4) {$Y$};
\end{tikzpicture}}}
\quad\xmapsto{\text{disintegration}}\quad
\biggl(\;
\vcenter{\hbox{%
\begin{tikzpicture}[font=\small]
\node[state] (omega) at (0,0) {$\omega_2$};
\node (X) at (0,0.5) {};
\draw (omega) to (X);
\node[scriptstyle] at (0.2,0.35) {$Y$};
\end{tikzpicture}}}
\;,\;
\vcenter{\hbox{%
\begin{tikzpicture}[font=\small]
\node[arrow box] (c) at (0,0) {$c_2$};
\draw  (c) to (0, 0.6);
\draw (c) to (0,-0.6);
\path[scriptstyle]
node at (0.2,0.6) {$X$}
node at (0.2,-0.6) {$Y$};
\end{tikzpicture}}}
\;\biggr)
\]
such that the equation on the left or right below holds, respectively.
\begin{equation}
\label{eqn:disintegrations}
\vcenter{\hbox{%
\begin{tikzpicture}[font=\small]
\node[state] (omega) at (0,0) {$\omega_1$};
\node[copier] (copier) at (0,0.3) {};
\node[arrow box] (c) at (0.5,0.95) {$c_1$};
\coordinate (X) at (-0.5,1.5);
\coordinate (Y) at (0.5,1.5);
\draw (omega) to (copier);
\draw (copier) to[out=150,in=-90] (X);
\draw (copier) to[out=15,in=-90] (c);
\draw (c) to (Y);
\path[scriptstyle]
node at (-0.7,1.45) {$X$}
node at (0.7,1.45) {$Y$};
\end{tikzpicture}}}
\qquad=\qquad
\vcenter{\hbox{%
\begin{tikzpicture}[font=\small]
\node[state] (omega) at (0,0) {\;$\omega$\;};
\coordinate (X) at (-0.25,0.55) {};
\coordinate (Y) at (0.25,0.55) {};
\draw (omega) ++(-0.25, 0) to (X);
\draw (omega) ++(0.25, 0) to (Y);
\path[scriptstyle]
node at (-0.45,0.4) {$X$}
node at (0.45,0.4) {$Y$};
\end{tikzpicture}}}
\qquad=\qquad
\vcenter{\hbox{%
\begin{tikzpicture}[font=\small]
\node[state] (omega) at (0,0) {$\omega_2$};
\node[copier] (copier) at (0,0.3) {};
\node[arrow box] (c) at (-0.5,0.95) {$c_2$};
\coordinate (X) at (-0.5,1.5);
\coordinate (Y) at (0.5,1.5);
\draw (omega) to (copier);
\draw (copier) to[out=165,in=-90] (c);
\draw (copier) to[out=30,in=-90] (Y);
\draw (c) to (X);
\path[scriptstyle]
node at (-0.7,1.45) {$X$}
node at (0.7,1.45) {$Y$};
\end{tikzpicture}}}
\end{equation}

We immediately see from the equation that
$\omega_{1}$ and $\omega_{2}$ must be marginals of $\omega$:
\[
\vcenter{\hbox{%
\begin{tikzpicture}[font=\small]
\node[state] (omega) at (0,0) {\;$\omega$\;};
\coordinate (X) at (-0.25,0.55);
\node[discarder] (Y) at (0.25,0.25) {};
\draw (omega) ++(-0.25, 0) to (X);
\draw (omega) ++(0.25, 0) to (Y);
\end{tikzpicture}}}
\quad=\quad
\vcenter{\hbox{%
\begin{tikzpicture}[font=\small]
\node[state] (omega) at (0,0) {$\omega_1$};
\node[copier] (copier) at (0,0.3) {};
\node[arrow box] (c) at (0.5,0.95) {$c_1$};
\coordinate (X) at (-0.5,1.55);
\node[discarder] (Y) at (0.5,1.4) {};
\draw (omega) to (copier);
\draw (copier) to[out=150,in=-90] (X);
\draw (copier) to[out=15,in=-90] (c);
\draw (c) to (Y);
\end{tikzpicture}}}
\quad=\quad
\vcenter{\hbox{%
\begin{tikzpicture}[font=\small]
\node[state] (omega) at (0,0) {$\omega_1$};
\node[copier] (copier) at (0,0.3) {};
\coordinate (X) at (-0.5,1.5);
\coordinate (c) at (0.5,0.91);
\node[discarder] (Y) at (c) {};
\draw (omega) to (copier);
\draw (copier) to[out=150,in=-90] (X);
\draw (copier) to[out=15,in=-90] (c);
\end{tikzpicture}}}
\quad=\quad
\vcenter{\hbox{%
\begin{tikzpicture}[font=\small]
\node[state] (omega) at (0,0) {$\omega_1$};
\draw (omega) to (0,0.55);
\end{tikzpicture}}}
\]
and similarly
\[
\vcenter{\hbox{%
\begin{tikzpicture}[font=\small]
\node[state] (omega) at (0,0) {\;$\omega$\;};
\node[discarder] (X) at (-0.25,0.25) {};
\coordinate (Y) at (0.25,0.55) {};
\draw (omega) ++(-0.25, 0) to (X);
\draw (omega) ++(0.25, 0) to (Y);
\end{tikzpicture}}}
\quad=\quad
\vcenter{\hbox{%
\begin{tikzpicture}[font=\small]
\node[state] (omega) at (0,0) {$\omega_2$};
\draw (omega) to (0,0.55);
\end{tikzpicture}}}
\quad.
\]
Therefore
a disintegration of $\omega$
may be referred to by
a channel $c_i$ only,
rather than a pair $(\omega_i,c_i)$.
This leads to the following definition:

\begin{definition}
\label{def:disintegration}
Let $\omega\colon I\to X\otimes Y$ be a joint state.
A channel $c_1\colon X\to Y$
(or $c_2\colon Y\to X$) is called a \emph{disintegration} of $\omega$
if it satisfies the equation~\eqref{eqn:disintegrations}
with $\omega_i$ the marginals of $\omega$.
\end{definition}

Let us look at concrete instances of disintegrations,
in the Kleisli categories $\Kl(\Dst)$
and $\Kl(\Giry)$
from Examples~\ref{ex:KlD}
and~\ref{ex:KlG}.

\begin{example}
\label{ex:disintegration}
Let $\omega\in\Dst(X\times Y)$ be a joint state in $\Kl(\Dst)$.
We write $\omega_{1}
\in\Dst(X)$ for the first marginal, given by $\omega_{1}(x) =
\sum_{y}\omega(x,y)$.
Then a channel $c\colon X\to \Dst(Y)$ is a disintegration of $\omega$
if and only if $\omega(x,y)=c(x)(y)\cdot \omega_1(x)$
for all $x\in X$ and $y\in Y$.
It turns out that there is always such a channel $c$.
We define a channel $c$ by:
\begin{equation}
\label{eqn:discrete:disintegration}
\begin{array}{rcl}
c(x)(y)
& \coloneqq &
\frac{\omega(x,y)}{\omega_{1}(x)}
\qquad\qquad
\text{if}\quad\omega_1(x)\neq 0
\enspace,
\end{array}
\end{equation}

\noindent
and $c(x)\coloneqq\tau$ if $\omega_1(x)=0$, for an arbitrary state $\tau\in\Dst(Y)$.
(Here $\Dst(Y)$ is nonempty, since so is $\Dst(X\times Y)$.)
This indeed defines a channel $c$ satisfying the required equation.
Roughly speaking, disintegration in discrete probability
is nothing but the `definition'
of conditional probability: $\Pr(y|x)=\Pr(x,y)/\Pr(x)$.
There is still some subtlety --- disintegrations need not be unique,
when there are $x\in X$ with $\omega_1(x)=0$.
They are, nevertheless, `almost surely'
unique; see Section~\ref{sec:equality}.
\end{example}

\begin{example}
Disintegrations in measure-theoretic probability, in $\Kl(\Giry)$, are far more difficult.
Let $\omega\in\Giry(X\times Y)$ be a joint state,
with $\omega_1\in\Giry(X)$ the first marginal.
A channel $c\colon X\to \Giry(Y)$ is a disintegration of $\omega$ if and only if
\[
\omega(A\times B) = \int_A c(x)(B)\,\omega_1(\dd x)
\]
for all $A\in \Sigma_X$ and $B\in \Sigma_Y$.  This is the ordinary
notion of \emph{disintegration} (of probability measures), also known
as \emph{regular conditional probability}; see
\textit{e.g.}~\cite{Faden1985,Pollard2002,Panangaden09}.  We see that
there is no obvious way to obtain a channel $c$ here, unlike the
discrete case.  In fact, a disintegration may not
exist~\cite{Stoyanov2014}.  There are, however, a number of results
that guarantee the existence of a disintegration in certain
situations.  We will come back to this issue later in the section.
\end{example}

\emph{Bayesian inversion} is a special form of disintegration,
occurring frequently.
We start from a state $\sigma\colon I\to X$ and
a channel $c\colon X\to Y$. We then integrate
them into a joint state on $X$ and $Y$,
and disintegrate it in the other direction, as below.
\[
\biggl(\;
\vcenter{\hbox{%
\begin{tikzpicture}[font=\small]
\node[state] (omega) at (0,0) {$\sigma$};
\node (X) at (0,0.5) {};
\draw (omega) to (X);
\node[scriptstyle] at (0.2,0.35) {$X$};
\end{tikzpicture}}}
\;,\;
\vcenter{\hbox{%
\begin{tikzpicture}[font=\small]
\node[arrow box] (c) at (0,0) {$c$};
\draw (c) to (0, 0.6);
\draw (c) to (0,-0.6);
\path[scriptstyle]
node at (0.2,0.6) {$Y$}
node at (0.2,-0.6) {$X$};
\end{tikzpicture}}}
\;\biggr)
\quad\xmapsto{\text{integration}}\quad
\vcenter{\hbox{%
\begin{tikzpicture}[font=\small]
\node[state] (omega) at (0,0) {$\sigma$};
\node[copier] (copier) at (0,0.3) {};
\node[arrow box] (c) at (0.5,0.95) {$c$};
\coordinate (X) at (-0.5,1.5);
\coordinate (Y) at (0.5,1.5);
\draw (omega) to (copier);
\draw (copier) to[out=150,in=-90] (X);
\draw (copier) to[out=15,in=-90] (c);
\draw (c) to (Y);
\path[scriptstyle]
node at (-0.7,1.45) {$X$}
node at (0.7,1.45) {$Y$};
\end{tikzpicture}}}
\quad\xmapsto{\text{disintegration}}\quad
\biggl(\;
\vcenter{\hbox{%
\begin{tikzpicture}[font=\small]
\node[state] (s) at (0,0) {$\sigma$};
\node[arrow box] (c) at (0,0.45) {$c$};
\draw (s) to (c);
\draw (c) to (0,0.95);
\node[scriptstyle] at (0.2,0.95) {$Y$};
\end{tikzpicture}}}
\;,\;
\vcenter{\hbox{%
\begin{tikzpicture}[font=\small]
\node[arrow box] (c) at (0,0) {$d$};
\draw  (c) to (0, 0.6);
\draw (c) to (0,-0.6);
\path[scriptstyle]
node at (0.2,0.6) {$X$}
node at (0.2,-0.6) {$Y$};
\end{tikzpicture}}}
\;\biggr)
\]
We call the disintegration $d\colon Y\to X$ a \emph{Bayesian inversion}
for $\sigma\colon I\to X$ along $c\colon X\to Y$.
By unfolding the definitions,
a channel $d\colon Y\to X$ is
a Bayesian inversion if and only if
\begin{equation}
\label{eq:bayesian-inversion}
\vcenter{\hbox{%
\begin{tikzpicture}[font=\small]
\node[state] (omega) at (0,0) {$\sigma$};
\node[copier] (copier) at (0,0.3) {};
\node[arrow box] (c) at (0.5,0.95) {$c$};
\coordinate (X) at (-0.5,1.5);
\coordinate (Y) at (0.5,1.5);
\draw (omega) to (copier);
\draw (copier) to[out=150,in=-90] (X);
\draw (copier) to[out=15,in=-90] (c);
\draw (c) to (Y);
%
\end{tikzpicture}}}
\quad=\quad
\vcenter{\hbox{%
\begin{tikzpicture}[font=\small]
\node[state] (omega) at (0,-0.55) {$\sigma$};
\node[copier] (copier) at (0,0.3) {};
\node[arrow box] (d) at (-0.5,0.95) {$d$};
\coordinate (X) at (-0.5,1.5);
\coordinate (Y) at (0.5,1.5);
\node[arrow box] (c) at (0,-0.15) {$c$};
\draw (omega) to (c);
\draw (c) to (copier);
\draw (copier) to[out=165,in=-90] (d);
\draw (copier) to[out=30,in=-90] (Y);
\draw (d) to (X);
\end{tikzpicture}}}
\quad.
\end{equation}
The composite $c\circ\sigma$ is also written as $c_*(\sigma)$.
The operation $\sigma\mapsto c_*(\sigma)$,
called \emph{state transformation},
is used to explain
forward inference in \cite{JacobsZ16}.

\begin{example}
Let $\sigma\in\Dst(X)$
and $c\colon X\to \Dst(Y)$ be a state and a channel in $\Kl(\Dst)$.
Then
a channel $d\colon Y\to \Dst(X)$ is
a Bayesian inversion for $\sigma$ along $c$
if and only if
$c(x)(y)\cdot \sigma(x)=
d(y)(x)\cdot c_*(\sigma)(y)$,
where $c_*(\sigma)(y)=\sum_{x'} c(x')(y)\cdot \sigma(x')$.
In a similar manner to Example~\ref{ex:disintegration},
we can obtain such a $d$ by:
\[
d(y)(x)
\coloneqq
\frac{c(x)(y)\cdot \sigma(x)}{c_*(\sigma)(y)}
=
\frac{c(x)(y)\cdot \sigma(x)}
{\sum_{x'} c(x')(y)\cdot \sigma(x')}
\]
for $y\in Y$ with $c_*(\sigma)(y)\neq 0$.
For $y\in Y$ with $c_*(\sigma)(y)=0$,
we may define $d(y)$ to be an arbitrary state in $\Dst(X)$.
We can recognise the above formula as the Bayes formula:
\[
\Pr(x|y)
=\frac{\Pr(y|x)\cdot \Pr(x)}{\Pr(y)}
=\frac{\Pr(y|x)\cdot \Pr(x)}{\sum_{x'}\Pr(y|x')\cdot \Pr(x')}
\enspace.
\]
\end{example}

\begin{example}
\label{ex:inversion}
Let $\sigma\in\Giry(X)$ and $c\colon X\to \Giry(Y)$
be a state and a channel in $\Kl(\Giry)$.
A channel $d\colon Y\to \Giry(X)$ is a Bayesian inversion if and only if
\[
\int_A c(x)(B)\,\sigma(\dd x)
=
\int_B d(y)(A)\,c_*(\sigma)(\dd y)
\]
for all $A\in\Sigma_X$ and $B\in\Sigma_Y$.
Here $c_*(\sigma)\in\Giry(Y)$
is the measure given by $c_*(\sigma)(B)=\int_X c(x)(B) \,\sigma(\dd x)$.
As we see below, Bayesian inversions are in some sense equivalent to disintegrations,
and thus, they are as difficult as disintegrations.
In particular, a Bayesian inversion need not exist.

In practice, however, the state $\sigma$ and channel $c$ are often given
via density functions.
This setting, so-called (absolutely) continuous probability,
makes it easy to compute a Bayesian inversion.
Suppose that $X$ and $Y$ are subspaces of $\RR$, and that
$\sigma$ and $c$ admit density functions as
\[
\sigma(A) = \int_A f(x) \,\dd x
\qquad\qquad
c(x)(B) = \int_B \ell(x,y) \,\dd y
\]
for measurable functions $f\colon X\to \pRR$ and $\ell\colon X\times
Y\to \pRR$.  The conditional probability density $\ell(x,y)$ of $y$
given $x$ is often called the \emph{likelihood} of $x$ given $y$.  By
the familiar Bayes formula for densities --- see
\textit{e.g.}~\cite{BernardoS00} --- the conditional density of $x$
given $y$ is:
\begin{equation*}
\label{eq:bayes-density}
k(y,x)
\coloneqq
\frac{\ell(x,y)\cdot f(x)}
{\int_X \ell(x',y)\cdot f(x')\,\dd x'}
\enspace.
\end{equation*}
This $k$ then gives a channel $d\colon Y\to \Giry(X)$ by
\[
d(y)(A) = \int_A k(y,x)\,\dd x
\]
for each $y\in Y$ such that $\int_X \ell(x',y)\cdot f(x')\,\dd x'\neq 0$.
For the other $y$'s we define $d(y)$ to be some fixed state in $\Giry(X)$.
An elementary calculation verifies that $d$ is indeed a Bayesian inversion
for $\sigma$ along $c$.
Later, in Section~\ref{sec:likelihood},
we generalise this calculation into our abstract setting.
\end{example}

Although Bayesian inversions are
a special case of disintegrations,
we can conversely obtain disintegrations from
Bayesian inversions, as in the proposition below.
Therefore, in some sense the two notions are equivalent.

\begin{proposition}
\label{prop:disintegration-from-inversion}
Let $\omega$ be a state on $X\otimes Y$.
Let $d\colon X\to
X\otimes Y$ be a Bayesian inversion for $\omega$ along the
first projection $\pi_1\colon X\otimes Y\to X$ on the left below.
\[
\pi_1 \,=\;\;
\begin{ctikzpicture}[font=\small]
\node[discarder] (d) at (0,0.4) {};
\draw (0,0) to (d);
\draw (-0.35,0) to +(0,0.9);
\path[scriptstyle]
node at (-0.5,0.15) {$X$}
node at (0.15,0.15) {$Y$};
\end{ctikzpicture}
\qquad\qquad\qquad\qquad\qquad
\pi_2\circ d
\,=\;\;
\begin{ctikzpicture}[font=\small]
\node[discarder] (dis) at (-0.2,0.4) {};
\node[arrow box] (d) at (0,0) {\;\;$d$\;\;};
\draw  (d.north) ++ (-0.2,0) to (dis);
\draw  (d.north) ++ (0.2,0) to (0.2,0.7);
\draw  (d.south) to (0,-0.5);
\path[scriptstyle]
node at (0.15,-0.45) {$X$}
node at (0.35,0.65) {$Y$};
\end{ctikzpicture}
\]
Then the composite
$\pi_2\circ d\colon X\to Y$
shown on the right above is a disintegration of $\omega$.
\end{proposition}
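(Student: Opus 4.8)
The plan is to unfold the Bayesian inversion hypothesis into a single equation between states on $X \otimes Y \otimes X$ and then collapse it to the disintegration equation by discarding one suitably chosen wire. First I would instantiate the defining equation~\eqref{eq:bayesian-inversion} of Bayesian inversion, taking the state $\omega$ in place of $\sigma$, the channel $\pi_1$ in place of $c$, and $d$ in place of the inverse channel. Since $\pi_1 \circ \omega = \omega_1$ is exactly the first marginal, this yields
\begin{equation}
(\id_{X\otimes Y} \otimes \pi_1)\circ \copier_{X\otimes Y}\circ \omega
\;=\;
(d\otimes \id_X)\circ \copier_X \circ \omega_1,
\tag{$\ast$}
\end{equation}
an identity between two states on $(X\otimes Y)\otimes X$, where the three output wires are (first) $X$, (second) $Y$, (third) $X$. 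The goal is to derive the left-hand disintegration equation of~\eqref{eqn:disintegrations}, namely $(\id_X\otimes(\pi_2\circ d))\circ\copier_X\circ\omega_1 = \omega$; recall that $\omega_1$ being the marginal is then automatic, so $\pi_2\circ d$ qualifies as a disintegration in the sense of Definition~\ref{def:disintegration}.

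The key move is to post-compose both sides of $(\ast)$ with a discarder on the \emph{first} $X$-wire, i.e.\ with $\ground_X\otimes\id_Y\otimes\id_X$. On the right-hand side this turns the $X\otimes Y$ output of $d$ into $(\ground_X\otimes\id_Y)\circ d = \pi_2\circ d$, producing $((\pi_2\circ d)\otimes\id_X)\circ\copier_X\circ\omega_1$; by cocommutativity of the copier this equals the target channel $(\id_X\otimes(\pi_2\circ d))\circ\copier_X\circ\omega_1$ up to the swap $\mathrm{swap}\colon X\otimes Y\to Y\otimes X$ of the two output wires. On the left-hand side, the same discarder applied after $(\id_{X\otimes Y}\otimes\pi_1)\circ\copier_{X\otimes Y}$ leaves $(\pi_2\otimes\pi_1)\circ\copier_{X\otimes Y}$ acting on $\omega$; using the comonoid counit law (copy-then-discard is the identity) together with the compatibility of $\copier$ with $\otimes$, this composite is just $\mathrm{swap}$, so the left-hand side reduces to $\mathrm{swap}\circ\omega$.

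Equating the two reductions and cancelling the common $\mathrm{swap}$ on the left gives precisely $\omega = (\id_X\otimes(\pi_2\circ d))\circ\copier_X\circ\omega_1$, which is the disintegration equation. The only remaining bookkeeping is to note that $\pi_2\circ d$ is causal, being a composite of the causal maps $d$ and $\pi_2$, so it is a genuine channel. I expect the main (and essentially only) obstacle to be the wire-level bookkeeping: choosing to discard the correct copy of $X$ so that both sides collapse, and invoking the counit and cocommutativity identities in the right order. In string-diagram form each of these steps is a one-line pictorial rewrite, so the cleanest presentation is graphical --- attaching a discarder to the top-left wire of both diagrams in $(\ast)$ and simplifying.
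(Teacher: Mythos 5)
Your proposal is correct and is essentially the paper's own argument: both proofs instantiate the Bayesian-inversion equation~\eqref{eq:bayesian-inversion} with $\sigma=\omega$ and $c=\pi_1$, discard the superfluous copy of $X$, and collapse the resulting diagrams using the comonoid counit law, cocommutativity, and the compatibility of copying with $\otimes$. The only difference is presentational --- the paper rewrites the candidate composite $(\id_X\otimes(\pi_2\circ d))\circ\copier_X\circ\omega_1$ step by step into $\omega$, invoking the inversion equation midway, whereas you apply the discarder to both sides of the inversion equation and cancel the resulting swap --- but these are the same manipulations in the opposite order.
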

\begin{myproof}
We prove that the first equation in \eqref{eqn:disintegrations} holds for $c_1=\pi_2\circ d$, as follows.
\[
\vcenter{\hbox{%
\begin{tikzpicture}[font=\small]
\node[discarder] (di1) at (0.25,0.15) {};
\node[discarder] (di2) at (-0.05,1.5) {};
\node[state] (s) at (0,0) {\;$\omega$\;};
\node[copier] (c) at (-0.25,0.5) {};
\node[arrow box] (d) at (0.15,1.1) {\;\;$d$\;\;};
\draw (s) ++(-0.25,0) to (c);
\draw (s) ++(0.25,0) to (di1);
\draw (c) to[out=15,in=-90] (d.south);
\draw (c) to[out=135,in=-90] (-0.65,1.8);
\draw (d.north) ++(-0.2,0) to (di2);
\draw (d.north) ++(0.2,0) to (0.35,1.8);
\end{tikzpicture}}}
\;\;=\;\;
\vcenter{\hbox{%
\begin{tikzpicture}[font=\small]
\node[discarder] (di1) at (0.25,0.15) {};
\node[discarder] (di2) at (-0.05,1.6) {};
\node[state] (s) at (0,0) {\;$\omega$\;};
\node[copier] (c) at (-0.25,0.3) {};
\node[arrow box] (d) at (0.15,1.2) {\;\;$d$\;\;};
\draw (s) ++(-0.25,0) to (c);
\draw (s) ++(0.25,0) to (di1);
\draw (c) to[out=150,in=-90] (-0.4,0.5)
to[out=90,in=-90] (d.south);
\draw (c) to[out=30,in=-90] (-0.1,0.5)
to[out=90,in=-90] (-0.6,0.95)
to (-0.6,1.9);
\draw (d.north) ++(-0.2,0) to (di2);
\draw (d.north) ++(0.2,0) to (0.35,1.9);
\end{tikzpicture}}}
\;\;=\;\;
\vcenter{\hbox{%
\begin{tikzpicture}[font=\small]
\node[discarder] (di1) at (0.25,0.15) {};
\node[discarder] (di2) at (-0.9,1.35) {};
\node[state] (s) at (0,0) {\;$\omega$\;};
\node[copier] (c) at (-0.25,0.35) {};
\node[arrow box] (d) at (-0.7,0.9) {\;\;$d$\;\;};
\draw (s) ++(-0.25,0) to (c);
\draw (s) ++(0.25,0) to (di1);
\draw (c) to[out=165,in=-90] (d.south);
\draw (c) to[out=30,in=-90] (0.1,0.7)
to (0.1,1.35)
to[out=90,in=-90] (-0.5,1.85);
\draw (d.north) ++(-0.2,0) to (di2);
\draw (d.north) ++(0.2,0) to (-0.5,1.35)
to[out=90,in=-90] (0.1,1.85);
\end{tikzpicture}}}
\;\;\overset{*}{=}\;\;
\vcenter{\hbox{%
\begin{tikzpicture}[font=\small]
\coordinate (di1) at (-0.75,1.1);
\coordinate (di2) at (0.75,0.75);
\node[discarder] at (di2) {};
\node[discarder] at (di1) {};
\node[state] (s) at (0,0) {\;$\omega$\;};
\node[copier] (c1) at (-0.25,0.25) {};
\node[copier] (c2) at (0.25,0.25) {};
\draw (s) ++(-0.25,0) to (c1);
\draw (s) ++(0.25,0) to (c2);
\draw (c1) to[out=150,in=-90] (-0.75,0.75)
to (di1);
\draw (c2) to[out=30,in=-90] (di2);
\draw (c1) to[out=30,in=-90] (0.45,0.75)
to[out=90,in=-90] (-0.45,1.6);
\draw (c2) to[out=150,in=-90] (-0.45,0.75)
to[out=90,in=-90] (0.45,1.6);
\end{tikzpicture}}}
\;\;=\;\;
\vcenter{\hbox{%
\begin{tikzpicture}[font=\small]
\node[state] (s) at (0,0) {\;$\omega$\;};
\draw (s) ++(-0.25,0)
to[out=90,in=-90] (0.25,0.7)
to[out=90,in=-90] (-0.25,1.4);
\draw (s) ++(0.25,0)
to[out=90,in=-90] (-0.25,0.7)
to[out=90,in=-90] (0.25,1.4);
\end{tikzpicture}}}
\;\;=\;\;
\vcenter{\hbox{%
\begin{tikzpicture}[font=\small]
\node[state] (s) at (0,0) {\;$\omega$\;};
\draw (s) ++(-0.25,0) to (-0.25,0.5);
\draw (s) ++(0.25,0) to (0.25,0.5);
\end{tikzpicture}}}
\]
For the marked equality $\overset{*}{=}$
we used the equation \eqref{eq:bayesian-inversion}
for the Bayesian inversion $d$.
\end{myproof}

We say that
an affine CD-category $\catC$ \emph{admits disintegration}
if for every bipartite state $\omega\colon I\to X\otimes Y$
there exist a disintegration $c_1\colon X\to Y$ of $\omega$.
Note that in such categories there also exists
a disintegration $c_2\colon Y\to X$ of $\omega$ in the other direction,
since it can be obtained as a disintegration of the following state:
\[
\begin{ctikzpicture}[font=\small]
\node[state] (s) at (0,0) {\;$\omega$\;};
\draw (s) ++(-0.25,0)
to[out=90,in=-90] (0.25,0.5);
\draw (s) ++(0.25,0)
to[out=90,in=-90] (-0.25,0.5);
\path[scriptstyle]
node at (-0.4,0.5) {$Y$}
node at (0.4,0.5) {$X$};
\end{ctikzpicture}
\quad.
\]
By Proposition~\ref{prop:disintegration-from-inversion},
admitting disintegration is equivalent to admitting Bayesian inversion.

In Example~\ref{ex:disintegration},
we have seen that $\Kl(\Dst)$ admits disintegration,
but that in measure-theoretic probability, in $\Kl(\Giry)$,
disintegrations may not exist.
There are however a number of results that guarantee
the existence of disintegrations in specific situations,
see \textit{e.g.}~\cite{Pachl1978,Faden1985}.
We here invoke one of these results
and show that
there is a subcategory of $\Kl(\Giry)$ that admits disintegration.
A measurable space is called a \emph{standard
  Borel space} if it is measurably isomorphic to a Polish space with its
Borel $\sigma$-algebra, or equivalently, if it is measurably isomorphic to
a Borel subspace of $\RR$.
Then the following theorem is standard,
see \textit{e.g.}~\cite[\S5.2]{Pollard2002}
or~\cite[\S5]{Faden1985}.

\begin{theorem}
\label{thm:disintegration-stborel}
Let $X$ be any measurable space and $Y$ be a standard
Borel space.
Then for any state (\textit{i.e.}\ a probability measure)
$\omega\in\Giry(X\times Y)$ in $\Kl(\Giry)$,
there exists a disintegration $c_1\colon X\to \Giry(Y)$ of $\omega$.
\qed
\end{theorem}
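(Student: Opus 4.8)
The plan is to reduce the statement to the real line and then to build the kernel $c_1$ from conditional cumulative distribution functions. Since $Y$ is standard Borel, by definition there is a measurable isomorphism $\phi\colon Y\to Y_0$ onto a Borel subset $Y_0\subseteq\RR$. Transporting $\omega$ along $\id_X\otimes\phi$, I may assume without loss of generality that $Y$ is itself a Borel subset of $\RR$, and recover a disintegration for the original state at the end by composing with $\phi^{-1}$. Write $\omega_1\in\Giry(X)$ for the first marginal. The first, easy, observation is that for each fixed $B\in\Sigma_Y$ the finite measure $A\mapsto\omega(A\times B)$ on $(X,\Sigma_X)$ is absolutely continuous with respect to $\omega_1$: since $\omega_1(A)=\omega(A\times Y)$, any $\omega_1$-null $A$ has $\omega(A\times B)\le\omega(A\times Y)=0$. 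The Radon--Nikodym theorem therefore yields a density, measurable in $x$ and determined $\omega_1$-almost everywhere.

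The crux --- and the only place the standard Borel hypothesis is used --- is to choose these densities \emph{coherently in $B$}, so that for each individual $x$ the map $B\mapsto c_1(x)(B)$ is a genuine probability measure. To do this I would parametrise by countably many rationals. For each $q\in\QQ$ pick a Radon--Nikodym density $F_q\colon X\to[0,1]$ with
\[
\omega\bigl(A\times(Y\cap(-\infty,q])\bigr)=\int_A F_q(x)\,\omega_1(\dd x)
\qquad\text{for all }A\in\Sigma_X.
\]
Each distribution-function property of $q\mapsto F_q(x)$ holds only $\omega_1$-almost everywhere at first, but monotonicity $F_q(x)\le F_{q'}(x)$ for rationals $q\le q'$, together with $\lim_{q\to-\infty}F_q(x)=0$ and $\lim_{q\to+\infty}F_q(x)=1$, are almost-everywhere statements involving only the countable family $(F_q)_{q\in\QQ}$, so they hold \emph{simultaneously} for $x$ outside a single $\omega_1$-null set $N$. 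For $x\notin N$ I set $G_x(t)\coloneqq\inf\setout{F_q(x)}{q\in\QQ,\,q>t}$, which is automatically non-decreasing and right-continuous with the correct limits, and let $c_1(x)$ be the unique Borel probability measure on $\RR$ with this cumulative distribution function; for $x\in N$ I let $c_1(x)$ be an arbitrary fixed state in $\Giry(Y)$.

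It remains to check the two required properties. \emph{Measurability}: the map $x\mapsto c_1(x)\bigl((-\infty,t]\bigr)=G_x(t)$ is measurable for each $t$ because it is built from the measurable $F_q$, and the half-lines form a $\pi$-system generating the Borel $\sigma$-algebra, so a monotone-class argument makes $x\mapsto c_1(x)(B)$ measurable for every $B$; this is exactly measurability of $x\mapsto c_1(x)$ as a map into $\Giry(Y)$, whose $\sigma$-algebra is generated by the evaluations $\mathrm{ev}_B$. \emph{The disintegration equation}: by construction $\omega(A\times B)=\int_A c_1(x)(B)\,\omega_1(\dd x)$ holds when $B=Y\cap(-\infty,q]$, and for fixed $A$ both sides are finite measures in $B$ agreeing on this generating $\pi$-system, so Dynkin's $\pi$--$\lambda$ theorem extends the identity to all $B\in\Sigma_Y$. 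A final null-set argument, applying this identity to $\RR\setminus Y$, shows that $c_1(x)$ is concentrated on $Y$ for $\omega_1$-almost every $x$ (on the exceptional set one redefines $c_1(x)$ as before), so $c_1$ genuinely takes values in $\Giry(Y)$. Undoing the isomorphism $\phi$ then gives the disintegration of the original state.

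I expect the coherent choice of the fibre measures to be the main obstacle: the Radon--Nikodym theorem produces one almost-everywhere-defined density per measurable set $B$, and there are uncountably many such $B$, so there is no direct way to assemble them into a map that is a probability measure at \emph{every} point $x$. The standard Borel assumption is precisely what removes this difficulty, since it lets one encode the entire conditional law through the countable family $(F_q)_{q\in\QQ}$ of distribution-function values and thereby confine all the exceptional, non-measure-like behaviour to a single $\omega_1$-null set.
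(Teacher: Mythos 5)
The paper itself gives no proof of this theorem: it is stated with a tombstone and attributed to the literature (Pollard, \S5.2; Faden, \S5). Your argument is a correct, self-contained reconstruction of precisely that standard argument: reduce via a measurable isomorphism to a Borel subset of $\RR$, take Radon--Nikodym densities $F_q$ of $A\mapsto\omega\bigl(A\times(Y\cap(-\infty,q])\bigr)$ for rational $q$, absorb the countably many almost-everywhere monotonicity and limit failures into a single $\omega_1$-null set, regularise to a right-continuous distribution function $G_x$, and finish with Dynkin's $\pi$--$\lambda$ theorem both for measurability of the kernel and for the disintegration identity, together with the concentration-on-$Y$ argument. One step is glossed: by construction $c_1(x)\bigl((-\infty,q]\bigr)=G_x(q)=\inf_{q'>q,\,q'\in\QQ}F_{q'}(x)$, which is not literally $F_q(x)$, so the claim that the identity holds ``by construction'' for $B=Y\cap(-\infty,q]$ needs one more line: by dominated convergence and downward continuity of finite measures,
\[
\int_A G_x(q)\,\omega_1(\dd x)
=\lim_{q'\downarrow q}\int_A F_{q'}(x)\,\omega_1(\dd x)
=\lim_{q'\downarrow q}\omega\bigl(A\times(Y\cap(-\infty,q'])\bigr)
=\omega\bigl(A\times(Y\cap(-\infty,q])\bigr),
\]
after which your $\pi$-system argument goes through unchanged. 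With that line added the proof is complete, and your closing diagnosis of where the standard Borel hypothesis enters --- replacing uncountably many incoherent Radon--Nikodym choices by the single countable family $(F_q)_{q\in\QQ}$ --- is exactly the right way to understand why the theorem can fail for general $Y$.
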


Let $\pKrnsb$ be the full subcategory of $\Kl(\Giry)$
consisting of standard Borel spaces as objects.
Clearly $\pKrnsb$ contains the singleton
measurable space $1$,
which is the tensor unit of $\Kl(\Giry)$.
We claim that the product
(in $\Meas$, hence the tensor product in $\Kl(\Giry)$)
of two standard Borel spaces
is again standard Borel.
Let $(X,\Sigma_X)$ and $(Y,\Sigma_Y)$
be standard Borel spaces.
We fix Polish ($=$ separable completely metrisable)
topologies on $X$ and $Y$ that induce
the $\sigma$-algebras $\Sigma_X$
and $\Sigma_Y$, respectively.
Then the product $X\times Y$, as topological spaces,
is Polish \cite[Lemma~1.17]{Doberkat2007}.
The Borel $\sigma$-algebra of the Polish space $X\times Y$
coincides with
the $\sigma$-algebra on the product $X\times Y$
in $\Meas$ \cite[Proposition~2.8]{Panangaden09}.
Hence $X\times Y$ is standard Borel.
Therefore
the subcategory $\pKrnsb\hookrightarrow \Kl(\Giry)$
is closed under the monoidal structure of $\Kl(\Giry)$,
so that $\pKrnsb$ is an affine CD-category.
Then the previous theorem immediately shows:

\begin{corollary}
The category $\pKrnsb$ admits disintegration.
\qed
\end{corollary}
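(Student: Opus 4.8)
The plan is to recognise this corollary as an immediate specialisation of Theorem~\ref{thm:disintegration-stborel}, so that essentially all the work lies in checking that the disintegration guaranteed there actually lives in the subcategory $\pKrnsb$ and witnesses the defining equation inside it.

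First I would recall what it means to admit disintegration: I must produce, for every bipartite state $\omega\colon I\to X\otimes Y$ whose components $X$ and $Y$ are objects of $\pKrnsb$, a disintegration $c_1\colon X\to Y$ satisfying the first equation in~\eqref{eqn:disintegrations}. Since the objects of $\pKrnsb$ are by definition standard Borel spaces, both $X$ and $Y$ are standard Borel; in particular $X$ is a measurable space and $Y$ is a standard Borel space, which is exactly the hypothesis of Theorem~\ref{thm:disintegration-stborel}. Moreover, as argued in the paragraph just before the statement, the tensor product $X\otimes Y$ in $\pKrnsb$ is the product $X\times Y$ in $\Meas$, so $\omega$ is literally a probability measure in $\Giry(X\times Y)$ and the theorem applies directly.

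Next I would invoke Theorem~\ref{thm:disintegration-stborel} on this $\omega$, obtaining a Kleisli map $c_1\colon X\to\Giry(Y)$, that is, an arrow $X\to Y$ in $\Kl(\Giry)$, which is a disintegration of $\omega$ there. Because $\pKrnsb$ is the \emph{full} subcategory of $\Kl(\Giry)$ on standard Borel objects, and both $X$ and $Y$ are standard Borel, this $c_1$ is automatically an arrow of $\pKrnsb$. Since the preceding paragraph shows that $\pKrnsb$ is closed under the monoidal structure of $\Kl(\Giry)$, the copier, discarder, composition and tensor appearing in~\eqref{eqn:disintegrations} are all inherited unchanged from $\Kl(\Giry)$; hence the disintegration equation valid in $\Kl(\Giry)$ holds verbatim in $\pKrnsb$. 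Therefore $c_1$ is a disintegration of $\omega$ in $\pKrnsb$.

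There is essentially no genuine obstacle here. The only point that requires care is the bookkeeping that the disintegration extracted in the ambient category $\Kl(\Giry)$ remains a legitimate arrow of the subcategory and continues to satisfy~\eqref{eqn:disintegrations} there, and this is guaranteed precisely by fullness together with the closure under $\otimes$ established immediately above. I would therefore present the argument as a one-line deduction from Theorem~\ref{thm:disintegration-stborel} rather than as a calculation.
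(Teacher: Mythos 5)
Your proposal is correct and follows exactly the paper's route: the paper likewise derives the corollary as an immediate consequence of Theorem~\ref{thm:disintegration-stborel}, relying on the preceding paragraph's observations that $\pKrnsb$ is a full subcategory of $\Kl(\Giry)$ closed under the monoidal structure (so that the extracted channel and the defining equation~\eqref{eqn:disintegrations} live in $\pKrnsb$). Your write-up merely makes explicit the fullness/closure bookkeeping that the paper leaves implicit in the word ``immediately.''
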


We note that $\pKrnsb$ can also be seen as the Kleisli category
of the Giry monad restricted on the category of standard Borel spaces,
because $\Giry(X)$ is standard Borel whenever $X$ is standard Borel.
To see this, let $(X,\Sigma_X)$ be
a standard Borel space.
If we fix a Polish topology on $X$ that
induces $\Sigma_X$,
then $\Giry(X)$ also forms a Polish space
with the \emph{topology of weak convergence},
see \cite{Giry1982} or \cite[\S1.5.2]{Doberkat2007}.
The $\sigma$-algebra on $\Giry(X)$,
defined in Example~\ref{ex:KlG},
coincides with
the Borel $\sigma$-algebra
with respect to the Polish topology
\cite[Proposition~1.80]{Doberkat2007}.
Therefore $\Giry(X)$ is standard Borel.

Since there are various `existence' theorems like
Theorem~\ref{thm:disintegration-stborel}, there may be other
subcategories of $\Kl(\Giry)$ that admit disintegration.  A likely
candidate is the category of perfect probabilistic mappings
in~\cite{CulbertsonS2014}.  We do not go into this question here,
since $\pKrnsb$ suffices for the present paper.

\section{Example: naive Bayesian classifiers via inversion}
\label{sec:classifier}\tikzextname{s_classifier}

Bayesian classification is a well-known technique in machine learning
that produces a distribution over data classifications, given certain
sample data. The distribution describes the probability, for each data
(classification) category, that the sample data is in that
category. Here we consider an example of `naive' Bayesian
classification, where the features are assumed to be independent.  We
consider a standard classification example from the literature which
forms an ideal setting to illustrate the use of both disintegration
and Bayesian inversion. Disintegration is used to extract channels
from a given table, and subsequently Bayesian inversion is applied to
(the tuple of) these channels to obtain the actual classification.
The use of channels and disintegration/inversion in this
classification setting is new, as far as we know.

For the description of the relevant operations in this example we use
notation for marginalisation and disintegration that we borrowed from
the \EfProb library~\cite{ChoJ17}. There are many ways to marginalise
an $n$-partite state, namely one for each subset of the wires $\{1, 2,
\ldots, n\}$. Such a subset can be described as a \emph{mask},
consisting of a list of $n$ zero's or one's, where a zero at position
$i$ means that the $i$-th wire/component is marginalised out, and a
one at position $i$ means that it remains. Such a mask $M = [b_{1},
  \ldots, b_{n}]$ with $b_{i}\in\{0,1\}$ is used as a post-fix
selection operation in $\marg{\omega}{M}$ on an $n$-partite state
$\omega$. An example explains it all:
\[ \mbox{if}\qquad \omega\; = \;\;
\vcenter{\hbox{%
\begin{tikzpicture}[font=\small]
\node[state] (omega) at (0,0) {\hspace*{3em}};
\coordinate (X1) at (-1.0,0.4);
\coordinate (X2) at (-0.5,0.4) {};
\coordinate (X3) at (0.0,0.4) {};
\coordinate (X4) at (0.5,0.4) {};
\coordinate (X5) at (1.0,0.4) {};
\draw (omega) ++(-1.0, 0) to (X1);
\draw (omega) ++(-0.5, 0) to (X2);
\draw (omega) ++(0.0, 0) to (X3);
\draw (omega) ++(0.5, 0) to (X4);
\draw (omega) ++(1.0, 0) to (X5);
\end{tikzpicture}}}
\qquad\mbox{then}\qquad
\marg{\omega}{[1,0,1,0,0]} \; = \;\;
\vcenter{\hbox{%
\begin{tikzpicture}[font=\small]
\node[state] (omega) at (0,0) {\hspace*{3em}};
\coordinate (X1) at (-1.0,0.4);
\node [discarder] (ground1) at (-0.5,0.2) {};
\coordinate (X2) at (0.0,0.4) {};
\node [discarder] (ground2) at (0.5,0.2) {};
\node [discarder] (ground3) at (1.0,0.2) {};
\draw (omega) ++(-1.0, 0) to (X1);
\draw (omega) ++(-0.5, 0) to (ground1);
\draw (omega) ++(0.0, 0) to (X2);
\draw (omega) ++(0.5, 0) to (ground2);
\draw (omega) ++(1.0, 0) to (ground3);
\end{tikzpicture}}} \]

\noindent In a similar way one can disintegrate an $n$-partite state
in $2^n$ may ways, where a mask of length $n$ is now used to describe
which wires are used as input to the extracted channel and which ones
as output. We write $\disint{\omega}{M}$ for such a disintegration,
where $M$ is a mask, as above. A systematic description will be given
in Section~\ref{sec:conditionalindependence} below.

In practice it is often useful to be able to marginalise first, and
disintegrate next. The general description in $n$-ary form is a bit
complicated, so we use an example for $n=5$. We shall label the wires
with $x_i$, as on the left below. We seek the conditional probability
written conventionally as $c = \omega[x_{1}, x_{4} \mid x_{2}, x_{5}]$
on the right below.
\[ \vcenter{\hbox{%
\begin{tikzpicture}[font=\small]
\node[state] (omega) at (0,0) {\hspace*{1em}$\omega$\hspace*{1em}};
\coordinate (X1) at (-1.0,0.4);
\coordinate (X2) at (-0.5,0.4) {};
\coordinate (X3) at (0.0,0.4) {};
\coordinate (X4) at (0.5,0.4) {};
\coordinate (X5) at (1.0,0.4) {};
\node at (-1.0,0.6) {$x_{1}$};
\node at (-0.5,0.6) {$x_{2}$};
\node at (0.0,0.6) {$x_{3}$};
\node at (0.5,0.6) {$x_{4}$};
\node at (1.0,0.6) {$x_{5}$};
\draw (omega) -- (-1.0, 0) to (X1);
\draw (omega) ++(-0.5, 0) to (X2);
\draw (omega) ++(0.0, 0) to (X3);
\draw (omega) ++(0.5, 0) to (X4);
\draw (omega) ++(1.0, 0) to (X5);
\end{tikzpicture}}}
\hspace*{10em}
\vcenter{\hbox{%
\begin{tikzpicture}[font=\small]
\node[arrow box] (c) at (0,0) {\hspace*{1em}$c$\hspace*{1em}};
\coordinate (X1) at (-0.3,0.6);
\coordinate (X4) at (0.3,0.6) {};
\coordinate (X2) at (-0.3,-0.6) {};
\coordinate (X5) at (0.3,-0.6) {};
\draw (c.north)++(-0.3,0) to (X1);
\draw (c.north)++(0.3,0) to (X4);
\draw (c.south)++(-0.3,0) to (X2);
\draw (c.south)++(0.3,0) to (X5);
\node at (-0.3,0.8) {$x_{1}$};
\node at (0.3,0.8) {$x_{4}$};
\node at (-0.3,-0.8) {$x_{2}$};
\node at (0.3,-0.8) {$x_{5}$};
\end{tikzpicture}}}
\]

\noindent This channel $c$ must satisfy:
\[ \vcenter{\hbox{%
\begin{tikzpicture}[font=\small]
\node[state] (omega) at (0,0) {\hspace*{1em}$\omega$\hspace*{1em}};
\coordinate (X1) at (-1.0,0.4);
\coordinate (X2) at (-0.5,0.4) {};
\node [discarder] (ground) at (0.0,0.2) {};
\coordinate (X4) at (0.5,0.4) {};
\coordinate (X5) at (1.0,0.4) {};
\draw (omega) ++(-1.0, 0) to (X1);
\draw (omega) ++(-0.5, 0) to (X2);
\draw (omega) ++(0.0, 0) to (ground);
\draw (omega) ++(0.5, 0) to (X4);
\draw (omega) ++(1.0, 0) to (X5);
\end{tikzpicture}}}
\quad = \quad
\vcenter{\hbox{%
\begin{tikzpicture}[font=\small]
\coordinate (X1) at (-1.0,2.0);
\coordinate (X2) at (-0.5,2.0) {};
\coordinate (X4) at (0.5,2.0) {};
\coordinate (X5) at (1.0,2.0) {};
\node[state] (omega) at (0,0) {\hspace*{1em}$\omega$\hspace*{1em}};
\node [discarder] (ground1) at (-1.0,0.2) {};
\node[copier] (c2) at (-0.5,0.4) {};
\node [discarder] (ground3) at (0.0,0.2) {};
\node [discarder] (ground4) at (0.5,0.2) {};
\node[copier] (c5) at (1.0,0.4) {};
\node[arrow box] (c) at (0.25,1.2) {\hspace*{1em}$c$\hspace*{1em}};
\draw (omega) ++(-1.0, 0) to (ground1);
\draw (omega) ++(-0.5, 0) to (c2);
\draw (omega) ++(0.0, 0) to (ground3);
\draw (omega) ++(0.5, 0) to (ground4);
\draw (omega) ++(1.0, 0) to (c5);
\draw (c2) to[out=45,in=-90] (c.-135);
\draw (c2) to[out=135,in=-90] (X2);
\draw (c5) to[out=135,in=-90] (c.-45);
\draw (c5) to[out=45,in=-90] (X5);
\draw (c.135) to[out=90,in=-90] (X1);
\draw (c.47) to (X4);
\end{tikzpicture}}}
\]

\noindent This picture shows how to obtain the channel $c$ from
$\omega$: we first marginalise to restrict to the relevant wires
$x_{1}, x_{2}, x_{4}, x_{5}$. This is written as
$\marg{\omega}{[1,1,0,1,1]}$. Subsequently we disintegrate with
$x_{1},x_{4}$ as output and $x_{2},x_{5}$ as input. Hence:
\[ \begin{array}{rcl}
c
& \coloneqq &
\disint{\marg{\omega}{[1,1,0,1,1]}}{[0,1,0,1]}
\\
& \eqqcolon &
\extract{\omega}{[1,0,0,1,0]}{[0,1,0,0,1]}
   \quad \mbox{as we shall write in the sequel.}
\end{array} \]

\noindent We see that the latter post-fix $\big[[1,0,0,1,0] \,\big|\,
  [0,1,0,0,1]\big]$ is a `variable free' version of the traditional
notation $[x_{1}, x_{4} \mid x_{2}, x_{5}]$, selecting the relevant
positions.

We have now prepared the ground and can turn to the classification
example that we announced.  It involves the classification of
`playing' (yes or no) for certain weather data, used
in~\cite{WittenFH11}.  We shall first go through the discrete example
in some detail. The relevant data are in the table in
Figure~\ref{fig:discreteplay}.  The question is: given this table,
what can be said about the probability of playing if the outlook is
\emph{Sunny}, the temperature is \emph{Cold}, the humidity is
\emph{High} and it is Windy?

\begin{figure}
\begin{center}
{\setlength\tabcolsep{2em}\renewcommand{\arraystretch}{0.9}
\begin{tabular}{c|c|c|c|c}
\textbf{Outlook} & \textbf{Temperature} & \textbf{Humidity} &
   \textbf{Windy} & \textbf{Play}
\\
\hline\hline
Sunny & hot & high & false & no
\\
Sunny & hot & high & true & no
\\
Overcast & hot & high & false & yes
\\
Rainy & mild & high & false & yes
\\
Rainy & cool & normal & false & yes
\\
Rainy & cool & normal & true & no
\\
Overcast & cool & normal & true & yes
\\
Sunny & mild & high & false & no
\\
Sunny & cool & normal & false & yes
\\
Rainy & mild & normal & false & yes
\\
Sunny & mild & normal & true & yes
\\
Overcast & mild & high & true & yes
\\
Overcast  & hot & normal & false & yes
\\
Rainy  & mild & high & true & no
\end{tabular}}
\end{center}
\caption{Weather and play data, copied from~\cite{WittenFH11}.}
\label{fig:discreteplay}
\end{figure}

Our plan is to first organise these table data into four channels
$d_{O}, d_{T}, d_{H}, d_{W}$ in a network of the form:
\begin{equation}
\label{diag:discreteplay}
\vcenter{\xymatrix@C-1pc{
\ovalbox{\strut Outlook} & & \ovalbox{\strut Temperature} & & 
   \ovalbox{\strut Humidity} & & \ovalbox{\strut Windy}
\\
& & & \ovalbox{\strut Play}\ar@/^2ex/[ulll]^-{d_{O}}\ar@/_1ex/[ur]^-{d_{H}}\ar@/_2ex/[urrr]_-{d_{W}}\ar@{-<}@/^1ex/[ul]_-{d_{T}} & & &
}}
\end{equation}

\noindent We start by extracting the underlying sets for 
for the categories in the table in Figure~\ref{fig:discreteplay}. We
choose abbreviations for the entries in each of the categories. 
\[ \begin{array}{rclcrclcrclcrclcrcl}
O
& = &
\{s, o, r\}
& \quad &
W
& = &
\{t, f\}
& \quad &
T
& = &
\{h, m, c\}
& \quad &
P
& = &
\{y, n\}
& \quad &
H
& = &
\{h, n\}.
\end{array} \]

\noindent These sets are combined into a single product domain:
\[ \begin{array}{rcl}
D
& = &
O \times T \times H \times W \times P.
\end{array} \]

\noindent It combines the five columns in
Figure~\ref{fig:discreteplay}.  The table itself in the figure is
represented as a uniform distribution $\tau \in\Dst(D)$. This
distribution has 14 entries --- like in the table --- and looks as
follows.
\[ \begin{array}{rcl}
\tau
& = &
\frac{1}{14}\ket{s,h,h,f,n} + \frac{1}{14}\ket{s,h,h,t,n} + \cdots +
   \frac{1}{14}\ket{r,m,h,t,n}.
\end{array} \]

We extract the four channels in Diagram~\eqref{diag:discreteplay} via
appropriate disintegrations, from the Play column to the Outlook /
Temperature / Humidity / Windy columns.
\[ \begin{array}{rclcrcl}
d_{O}
& = &
\extract{\tau}{[1,0,0,0,0]}{[0,0,0,0,1]}
& \hspace*{3em} &
d_{T} 
& = &
\extract{\tau}{[0,1,0,0,0]}{[0,0,0,0,1]}
\\
d_{H}
& = &
\extract{\tau}{[0,0,1,0,0]}{[0,0,0,0,1]}
& &
d_{W}
& = &
\extract{\tau}{[0,0,0,1,0]}{[0,0,0,0,1]}.
\end{array} \]

\noindent Thus, as described in the beginning of this section, the
`outlook' channel $d_{O} \colon P \rightarrow \Dst(O)$ is extracted by
first marginalising the table $\tau$ to the relevant (first and last)
wires, and then disintegrating. Explicitly, $d_{O}$ is
$\disint{\marg{\tau}{[1,0,0,0,1]}}{[0,1]}$ and satisfies:
\[ \vcenter{\hbox{%
\begin{tikzpicture}[font=\small]
\node[state] (omega) at (0,0) {\hspace*{1em}$\omega$\hspace*{1em}};
\coordinate (X1) at (-1.0,0.4) {};
\node [discarder] (ground2) at (-0.5,0.2) {};
\node [discarder] (ground3) at (0.0,0.2) {};
\node [discarder] (ground4) at (0.5,0.2) {};
\coordinate (X5) at (1.0,0.4) {};
\draw (omega) ++(-1.0, 0) to (X1);
\draw (omega) ++(-0.5, 0) to (ground2);
\draw (omega) ++(0.0, 0) to (ground3);
\draw (omega) ++(0.5, 0) to (ground4);
\draw (omega) ++(1.0, 0) to (X5);
\path[scriptstyle]
node at (-1.2,0.45) {$O$}
node at (1.2,0.45) {$P$};
\end{tikzpicture}}}
\quad=\quad
\vcenter{\hbox{%
\begin{tikzpicture}[font=\small]
\node[state] (omega) at (0,0) {$\pi$};
\node[copier] (copier) at (0,0.3) {};
\node[arrow box] (c) at (-0.5,0.95) {$d_O$};
\coordinate (X) at (-0.5,1.5);
\coordinate (Y) at (0.5,1.5);
\draw (omega) to (copier);
\draw (copier) to[out=150,in=-90] (c);
\draw (copier) to[out=15,in=-90] (Y);
\draw (c) to (X);
\path[scriptstyle]
node at (-0.7,1.55) {$O$}
node at (0.7,1.55) {$P$};
\end{tikzpicture}}}
\qquad\mbox{where}\qquad
\pi
\;\;=\;\;
\vcenter{\hbox{%
\begin{tikzpicture}[font=\small]
\node[state] (omega) at (0,0) {\hspace*{1em}$\omega$\hspace*{1em}};
\node [discarder] (ground1) at (-1.0,0.2) {};
\node [discarder] (ground2) at (-0.5,0.2) {};
\node [discarder] (ground3) at (0.0,0.2) {};
\node [discarder] (ground4) at (0.5,0.2) {};
\coordinate (X5) at (1.0,0.4) {};
\draw (omega) ++(-1.0, 0) to (ground1);
\draw (omega) ++(-0.5, 0) to (ground2);
\draw (omega) ++(0.0, 0) to (ground3);
\draw (omega) ++(0.5, 0) to (ground4);
\draw (omega) ++(1.0, 0) to (X5);
\path[scriptstyle]
node at (1.2,0.45) {$P$};
\end{tikzpicture}}}
\] 

\noindent In a next step we combine these four channels into a single channel
$d\colon P\rightarrow O\times T\times H\times W$ via tupling:
\[ d
\quad\coloneqq\quad
\vcenter{\hbox{%
\begin{tikzpicture}[font=\small]
\node[copier] (copier) at (0,0.0) {};
\node[arrow box] (o) at (-1.2,0.6) {$d_O$};
\node[arrow box] (t) at (-0.4,0.6) {$d_T$};
\node[arrow box] (h) at (0.4,0.6) {$d_H$};
\node[arrow box] (w) at (1.2,0.6) {$d_W$};
\draw (copier) to (0.0,-0.3);
\draw (copier) to[out=165,in=-90] (o);
\draw (copier) to[out=115,in=-90] (t);
\draw (copier) to[out=65,in=-90] (h);
\draw (copier) to[out=15,in=-90] (w);
\draw (o) to (-1.2,1.1);
\draw (t) to (-0.4,1.1);
\draw (h) to (0.4,1.1);
\draw (w) to (1.2,1.1);
\end{tikzpicture}}} \]

\noindent The answer that we are looking for will be obtained by
Bayesian inversion of this channel $d$ wrt.\ the above fifth marginal
Play state $\pi = \marg{\tau}{[0,0,0,0,1]} = \frac{9}{14}\ket{y} +
\frac{5}{14}\ket{n} \in \Dst(P)$.  We write this inversion as a channel $e
\colon O\times T\times H\times W \rightarrow P$. It satisfies, by
construction, according to the pattern
in~\eqref{eq:bayesian-inversion}:
\[ \vcenter{\hbox{%
\begin{tikzpicture}[font=\small]
\node[state] (omega) at (1.2,-1.0) {$\pi$};
\node[copier] (copier1) at (1.2,-0.8) {};
\node[copier] (copier4) at (0.0,-0.2) {};
\node[arrow box] (o) at (-1.2,0.6) {$d_O$};
\node[arrow box] (t) at (-0.4,0.6) {$d_T$};
\node[arrow box] (h) at (0.4,0.6) {$d_H$};
\node[arrow box] (w) at (1.2,0.6) {$d_W$};
\draw (omega) to (copier1);
\draw (copier1) to[out=150,in=-90] (copier4);
\draw (copier1) to[out=15,in=-90] (2.0,1.1);
\draw (copier4) to[out=165,in=-90] (o);
\draw (copier4) to[out=115,in=-90] (t);
\draw (copier4) to[out=65,in=-90] (h);
\draw (copier4) to[out=15,in=-90] (w);
\draw (o) to (-1.2,1.1);
\draw (t) to (-0.4,1.1);
\draw (h) to (0.4,1.1);
\draw (w) to (1.2,1.1);
\end{tikzpicture}}}
\quad=\quad
\vcenter{\hbox{%
\begin{tikzpicture}[font=\small]
\node[state] (omega) at (0,-0.2) {$\pi$};
\node[copier] (copier) at (0,0.0) {};
\node[arrow box] (o) at (-1.2,0.6) {$d_O$};
\node[arrow box] (t) at (-0.4,0.6) {$d_T$};
\node[arrow box] (h) at (0.4,0.6) {$d_H$};
\node[arrow box] (w) at (1.2,0.6) {$d_W$};
\node[copier] (copier1) at ([yshiftu=0.5]o) {};
\node[copier] (copier2) at ([yshiftu=0.5]t) {};
\node[copier] (copier3) at ([yshiftu=0.5]h) {};
\node[copier] (copier4) at ([yshiftu=0.5]w) {};
\node[arrow box] (e) at (2.0,2.3) {\hspace*{2.2em}$e$\hspace*{2.2em}};
\draw (omega) to (copier);
\draw (copier) to[out=165,in=-90] (o);
\draw (copier) to[out=115,in=-90] (t);
\draw (copier) to[out=65,in=-90] (h);
\draw (copier) to[out=15,in=-90] (w);
\draw (o) to (copier1);
\draw (t) to (copier2);
\draw (h) to (copier3);
\draw (w) to (copier4);
\draw (copier1) to[out=165,in=-90] (-1.5,2.8);
\draw (copier1) to[out=65,in=-90] (e.-160);
\draw (copier2) to[out=165,in=-90] (-0.8,2.8);
\draw (copier2) to[out=45,in=-90] (e.-135);
\draw (copier3) to[out=165,in=-90] (-0.1,2.8);
\draw (copier3) to[out=25,in=-90] (e.-45);
\draw (copier4) to[out=165,in=-90] (0.6,2.8);
\draw (copier4) to[out=5,in=-90] (e.-20);
\draw (e) to ([yshiftu=0.5]e);
\end{tikzpicture}}}
\] 

\noindent We can finally answer the original classification
question. The assumptions --- \emph{Sunny} outlook,
\emph{Cold} temperature, \emph{High} humidity, \emph{true} windiness
--- are used as input to the inversion channel $e$. This yields the
probability of play that we are looking for:
\[ \begin{array}{rcl}
e(s,c,H,t)
& = &
0.205\ket{y} + 0.795\ket{n}.
\end{array} \]

\noindent The resulting classification probability\footnote{This
  outcome has been calculated with the tool \EfProb~\cite{ChoJ17}
  that can do both disintegration and inversion.} of $0.205$ coincides
with the probability of $20.5\%$ that is computed in~\cite{WittenFH11}
--- without the above channel-based approach.


One could complain that our approach is `too' abstract, since it
remains implicit what these extracted channels do. We elaborate the
outlook channel $d_{O} \colon P \rightarrow O$. For the two elements
in $P = \{y,n\}$ we have:
\[ \begin{array}{rclcrcl}
d_{O}(y)
& = &
\frac{2}{9}\ket{s} + \frac{4}{9}\ket{o} + \frac{3}{9}\ket{r}
& \qquad\qquad &
d_{O}(n)
& = &
\frac{3}{5}\ket{s} + 0\ket{o} + \frac{2}{5}\ket{r}.
\end{array} \]

\noindent These outcomes arise from the general
formula~\eqref{eqn:discrete:disintegration}. But we can also
understand them at a more concrete level: for the first distribution
$d_{O}(y)$ we need to concentrate on the 9 lines in
Figure~\ref{fig:discreteplay} for which Play is \emph{yes}; in these
lines, in the first Outlook column, 2 out of 9 entries are
\emph{Sunny}, 4 out of 9 are \emph{Overcast}, and 3 out of 9 are
\emph{Rainy}. This corresponds to the above first distribution
$d_{O}(y)$. Similarly, the second distribution $d_{O}(n)$ captures the
Outlook for the 5 lines where Play is \emph{no}: 3 out of 5 are
\emph{Sunny} and 2 out of 5 are \emph{Rainy}.

\section{Almost equality of channels}\label{sec:equality}
\tikzextname{s_eq}

This section explains how the standard notion of
`equal up to negligible sets' or
`equal almost everywhere' (with respect to a measure)
can be expressed abstractly using string diagrams.
Via this equality relation Bayesian inversion can be characterised
very neatly, following~\cite{ClercDDG2017}.
We consider an affine CD-category,
continuing in the setting of Section~\ref{sec:disintegration}.

\begin{definition}
\label{def:almost-equal}
Let $c,d\colon X\to Y$ be two parallel channels, and $\sigma\colon
I\to X$ be a state on their domain.  We say that $c$
is \emph{$\sigma$-almost equal} to $d$, written as $c\equpto[\sigma]d$ if
\[
\begin{ctikzpicture}[font=\small]
\node[state] (omega) at (0,0) {$\sigma$};
\node[copier] (copier) at (0,0.3) {};
\node[arrow box] (c) at (-0.5,0.95) {$c$};
\coordinate (X) at (0.5,1.5);
\coordinate (Y) at (-0.5,1.5);
\draw (omega) to (copier);
\draw (copier) to[out=30,in=-90] (X);
\draw (copier) to[out=165,in=-90] (c);
\draw (c) to (Y);
\end{ctikzpicture}
\quad=\quad
\begin{ctikzpicture}[font=\small]
\node[state] (omega) at (0,0) {$\sigma$};
\node[copier] (copier) at (0,0.3) {};
\node[arrow box] (c) at (-0.5,0.95) {$d$};
\coordinate (X) at (0.5,1.5);
\coordinate (Y) at (-0.5,1.5);
\draw (omega) to (copier);
\draw (copier) to[out=30,in=-90] (X);
\draw (copier) to[out=165,in=-90] (c);
\draw (c) to (Y);
\end{ctikzpicture}
\quad\enspace.
\]
It is obvious that $\equpto[\sigma]$ is an equivalence relation
on channels of type $X\to Y$.
When $S$ is a set of arrows of type $X\to Y$,
we write $S/\sigma$ for the quotient $S/\!\equpto[\sigma]$.
\end{definition}

\noindent
To put it more intuitively,
we have $c\equpto[\sigma]d$
iff $c$ and $d$ can be identified
whenever the input wires are connected to $\sigma$,
possibly through copiers.
For instance, using the associativity and commutativity of copiers,
by $c\equpto[\sigma]d$ we may reason as:
\[
\begin{ctikzpicture}[font=\small]
\node[state] (s) at (-0.5,-0.2) {$\sigma$};
\node[copier] (c2) at (-0.5,0) {};
\node[copier] (c1) at (-0.9,0.35) {};
\node[arrow box] (a) at (-0.6,0.9) {$c$};
\draw (c2) to[out=165,in=-90] (c1);
\draw (c2) to[out=15,in=-90] (0,1.4);
\draw (c1) to[out=15,in=-90] (a);
\draw (c1) to[out=165,in=-90] (-1.3,1.4);
\draw (c2) to (s);
\draw (a) to (-0.6,1.4);
\end{ctikzpicture}
\;\;=\;\;
\begin{ctikzpicture}[font=\small]
\node[state] (s) at (-0.5,-0.1) {$\sigma$};
\node[copier] (c2) at (-0.5,0.1) {};
\node[copier] (c1) at (-0.2,0.6) {};
\node[arrow box] (a) at (-1,0.6) {$c$};
\draw (c2) to[out=15,in=-90] (c1);
\draw (c2) to[out=165,in=-90] (a)
(a) to (-1,0.9)
to[out=90,in=-90] (-0.5,1.5);
\draw (c1) to[out=45,in=-90] (0.1,1.5);
\draw (c1) to[out=165,in=-90] (-0.5,0.9)
to[out=90,in=-90] (-1,1.5);
\draw (c2) to (s);
\end{ctikzpicture}
\;\;=\;\;
\begin{ctikzpicture}[font=\small]
\node[state] (s) at (-0.5,-0.1) {$\sigma$};
\node[copier] (c2) at (-0.5,0.1) {};
\node[copier] (c1) at (-0.2,0.6) {};
\node[arrow box] (a) at (-1,0.6) {$d$};
\draw (c2) to[out=15,in=-90] (c1);
\draw (c2) to[out=165,in=-90] (a)
(a) to (-1,0.9)
to[out=90,in=-90] (-0.5,1.5);
\draw (c1) to[out=45,in=-90] (0.1,1.5);
\draw (c1) to[out=165,in=-90] (-0.5,0.9)
to[out=90,in=-90] (-1,1.5);
\draw (c2) to (s);
\end{ctikzpicture}
\;\;=\;\;
\begin{ctikzpicture}[font=\small]
\node[state] (s) at (-0.5,-0.2) {$\sigma$};
\node[copier] (c2) at (-0.5,0) {};
\node[copier] (c1) at (-0.9,0.35) {};
\node[arrow box] (a) at (-0.6,0.9) {$d$};
\draw (c2) to[out=165,in=-90] (c1);
\draw (c2) to[out=15,in=-90] (0,1.4);
\draw (c1) to[out=15,in=-90] (a);
\draw (c1) to[out=165,in=-90] (-1.3,1.4);
\draw (c2) to (s);
\draw (a) to (-0.6,1.4);
\end{ctikzpicture}
\quad.
\]
In particular, $c\equpto[\sigma]d$ if and only if
\[
\begin{ctikzpicture}[font=\small]
\node[state] (omega) at (0,0) {$\sigma$};
\node[copier] (copier) at (0,0.3) {};
\node[arrow box] (c) at (0.5,0.95) {$c$};
\coordinate (X) at (-0.5,1.5);
\coordinate (Y) at (0.5,1.5);
\draw (omega) to (copier);
\draw (copier) to[out=150,in=-90] (X);
\draw (copier) to[out=15,in=-90] (c);
\draw (c) to (Y);
\end{ctikzpicture}
\quad=\quad
\begin{ctikzpicture}[font=\small]
\node[state] (omega) at (0,0) {$\sigma$};
\node[copier] (copier) at (0,0.3) {};
\node[arrow box] (c) at (0.5,0.95) {$d$};
\coordinate (X) at (-0.5,1.5);
\coordinate (Y) at (0.5,1.5);
\draw (omega) to (copier);
\draw (copier) to[out=150,in=-90] (X);
\draw (copier) to[out=15,in=-90] (c);
\draw (c) to (Y);
\end{ctikzpicture}
\quad.
\]

Now the following is an obvious consequence from the definition.

\begin{proposition}
If both $c,d\colon X\to Y$ are disintegrations
of a joint state $\omega\colon I\to X\otimes Y$,
then $c\equpto[\omega_1] d$,
where $\omega_1\colon I\to X$ is the first marginal of $\omega$.
\qed
\end{proposition}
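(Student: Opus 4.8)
The plan is to read the result straight off the definitions, with essentially no computation. By Definition~\ref{def:disintegration}, saying that $c\colon X\to Y$ is a disintegration of $\omega$ means precisely that the left-hand equation of~\eqref{eqn:disintegrations} holds with the defining marginal $\omega_1$ as the state, i.e.\ that reassembling a joint state from $\omega_1$ and $c$ recovers $\omega$:
\[
(\id_X\otimes c)\circ\copier_X\circ\omega_1 \;=\; \omega.
\]
The identical statement holds with $d$ in place of $c$, since $d$ is a disintegration of the \emph{same} $\omega$ and hence uses the same first marginal $\omega_1$.

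First I would invoke these two instances of~\eqref{eqn:disintegrations} and observe that both composites equal the single state $\omega$; chaining through $\omega$ gives
\[
(\id_X\otimes c)\circ\copier_X\circ\omega_1 \;=\; (\id_X\otimes d)\circ\copier_X\circ\omega_1.
\]
Pictorially this is exactly: copy $\omega_1$, feed the right-hand copy through $c$ (resp.\ $d$), and keep the left-hand copy as a bare $X$-output.

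The final step is to recognise this equation as the almost-equality condition at $\sigma=\omega_1$. It is literally the second, equivalent formulation of $c\equpto[\omega_1]d$ recorded immediately after Definition~\ref{def:almost-equal} --- the one whose copier feeds the channel on the right branch and leaves the left branch as a wire --- so we conclude $c\equpto[\omega_1]d$, as required. There is no genuine obstacle here: the entire content is that the disintegration equation, read against the marginal $\omega_1$ viewed as a state, \emph{is} $\omega_1$-almost equality, and $c$ and $d$ both satisfy it against the same $\omega$. (Should one prefer to work from the primary diagram of Definition~\ref{def:almost-equal}, with $c$ on the left branch, one simply precomposes both sides with the swap on the two copier outputs, using commutativity of $\copier_X$; this is the only place where even a nominal diagram move is needed.)
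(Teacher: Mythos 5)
Your proof is correct and is exactly the argument the paper has in mind: the paper states this proposition with no written proof, calling it ``an obvious consequence from the definition,'' and your writeup — both disintegrations integrate with the same marginal $\omega_1$ to the same joint state $\omega$, and that common equation is precisely the (swapped) formulation of $c\equpto[\omega_1]d$ noted right after Definition~\ref{def:almost-equal} — is that obvious argument, spelled out. No gaps; the remark about using commutativity of $\copier$ to reconcile the two orientations of the almost-equality diagram is the right (and only) point of care.
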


For channels $f,g\colon X\to\Dst(Y)$
and a state $\sigma\in\Dst(X)$ in $\Kl(\Dst)$,
it is easy to see that
$f\equpto[\sigma] g$  if and only if
$f(x)(y)\cdot \sigma(x)=g(x)(y)\cdot \sigma(x)$ for all $x\in X$ and $y\in Y$
if and only if
$f(x)=g(x)$ for any $x\in X$ with $\sigma(x)\ne 0$.
Almost equality in $\Kl(\Giry)$
is less trivial but characterised in an expected way.

\begin{proposition}
\label{prop:equality-up-to-giry}
Let $f,g\colon X\to \Giry (Y)$ be channels
and $\mu\in\Giry(X)$ a state in $\Kl(\Giry)$.
Then $f\equpto[\mu] g$ if and only if
for any $B\in \Sigma_Y$,
$f(-)(B)=g(-)(B)$ $\mu$-almost everywhere.
\end{proposition}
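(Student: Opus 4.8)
The plan is to unfold the abstract relation $\equpto[\mu]$ in the concrete Kleisli category $\Kl(\Giry)$ and then reduce the claim to a standard measure-theoretic lemma applied separately for each measurable set $B$. First I would invoke the reformulation of $\equpto[\mu]$ recorded after Definition~\ref{def:almost-equal}: $f\equpto[\mu]g$ holds precisely when the two joint states $(\id\otimes f)\klcirc\copier\klcirc\mu$ and $(\id\otimes g)\klcirc\copier\klcirc\mu$ on $X\otimes Y$ coincide. By the integration formula for such joint states in $\Kl(\Giry)$ established earlier in this section, these two probability measures on $X\times Y$ are given on rectangles by $A\times B\mapsto\int_A f(x)(B)\,\mu(\dd x)$ and $A\times B\mapsto\int_A g(x)(B)\,\mu(\dd x)$. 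Since the measurable rectangles form a $\pi$-system generating $\Sigma_{X\times Y}$ (and contain $X\times Y$ itself, on which both measures give $1$), a routine uniqueness-of-measures argument via the $\pi$--$\lambda$ theorem shows that the two joint states are equal if and only if they agree on all rectangles. Hence $f\equpto[\mu]g$ is equivalent to
\[
\int_A f(x)(B)\,\mu(\dd x)=\int_A g(x)(B)\,\mu(\dd x)
\qquad\text{for all }A\in\Sigma_X,\ B\in\Sigma_Y.
\]

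Next, for each fixed $B\in\Sigma_Y$ I would set $\phi_B(x)=f(x)(B)$ and $\psi_B(x)=g(x)(B)$. These are measurable $[0,1]$-valued functions on $X$, because $f$ and $g$ are measurable maps into $\Giry(Y)$ and the evaluation map $\mathrm{ev}_B\colon\Giry(Y)\to[0,1]$ is measurable by the very definition of the $\sigma$-algebra on $\Giry(Y)$ (Example~\ref{ex:KlG}). The technical core of the proof is then the familiar fact that for bounded measurable functions $\phi_B,\psi_B$ one has $\int_A\phi_B\,\dd\mu=\int_A\psi_B\,\dd\mu$ for every $A\in\Sigma_X$ if and only if $\phi_B=\psi_B$ $\mu$-almost everywhere. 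The backward direction is immediate; for the forward direction I would set $h=\phi_B-\psi_B$, test against the sets $A_n^{+}=\{h\ge 1/n\}$ and $A_n^{-}=\{h\le -1/n\}$ to force each of them to be $\mu$-null (since $\int_{A_n^{+}}h\,\dd\mu=0$ while $h\ge 1/n$ there), and take the countable union over $n$ to conclude $h=0$ $\mu$-a.e.

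Combining the two steps yields exactly the stated equivalence: $f\equpto[\mu]g$ holds if and only if for every $B\in\Sigma_Y$ the equality $f(-)(B)=g(-)(B)$ holds $\mu$-almost everywhere. I do not expect a serious obstacle here; the only two points deserving care are the passage from equality of joint measures to equality on rectangles (where finiteness of the probability measures and the $\pi$--$\lambda$ theorem are essential) and the observation that the exceptional $\mu$-null set is permitted to depend on $B$. The latter is precisely what the per-$B$ application of the measure-theoretic lemma provides, so no uniform null set and no delicate interchange of quantifiers is needed.
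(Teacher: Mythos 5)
Your proof is correct and follows essentially the same route as the paper's: unfold $\equpto[\mu]$ into the equality $\int_A f(x)(B)\,\mu(\dd x)=\int_A g(x)(B)\,\mu(\dd x)$ for all $A\in\Sigma_X$, $B\in\Sigma_Y$, and then reduce, for each fixed $B$, to the standard lemma that two integrable functions with equal integrals over every measurable set agree almost everywhere. The only difference is one of self-containment: you spell out the $\pi$--$\lambda$ argument identifying equality of the joint measures with agreement on rectangles, and you prove the per-$B$ lemma directly, whereas the paper treats the first step as immediate and cites Fremlin [131H] for the second.
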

\begin{myproof}
By expanding the definition,
$f\equpto[\mu] g$ if and only if
\[
\int_A f(x)(B)\,\mu(\dd x)=\int_A g(x)(B)\,\mu(\dd x)
\]
for all $A\in \Sigma_X$ and $B\in \Sigma_Y$.
This is equivalent to
$f(-)(B)=g(-)(B)$ $\mu$-almost everywhere
for all $B\in\Sigma_Y$, see~\cite[131H]{FremlinAll}.
\end{myproof}

Almost-everywhere equality of probability kernels
$f,g\colon X\to \Giry (Y)$ is often formulated
by the stronger condition that $f=g$ $\mu$-almost everywhere.
The next proposition shows that the stronger variant
is equivalent under a reasonable assumption
(any standard Borel space is countably generated, for example).

\begin{proposition}
In the setting of the previous proposition,
additionally assume that
the measurable space $Y$ is countably generated.
Then $f\equpto[\mu] g$ if and only if
$f=g$ $\mu$-almost everywhere.
\end{proposition}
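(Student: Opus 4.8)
The plan is to prove the two implications separately. The direction $f = g$ almost everywhere $\implies f \equpto[\mu] g$ is immediate and does not use the extra hypothesis: if $f = g$ outside a single $\mu$-null set $N \subseteq X$, then for every fixed $B \in \Sigma_Y$ we have $f(x)(B) = g(x)(B)$ for all $x \notin N$, so $f(-)(B) = g(-)(B)$ holds $\mu$-almost everywhere, and Proposition~\ref{prop:equality-up-to-giry} yields $f \equpto[\mu] g$.

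For the converse I would start from Proposition~\ref{prop:equality-up-to-giry}, which gives, for each \emph{individual} $B \in \Sigma_Y$, a $\mu$-null set $N_B$ outside of which $f(x)(B) = g(x)(B)$. The crux is that $\Sigma_Y$ is in general uncountable, so $\bigcup_B N_B$ need not be null; this is exactly where the countable-generation hypothesis enters. Let $\mathcal{E} = \{E_1, E_2, \ldots\}$ be a countable family generating $\Sigma_Y$. Closing $\mathcal{E}$ under finite intersections keeps it countable and turns it into a $\pi$-system generating the same $\sigma$-algebra, so I may assume $\mathcal{E}$ is itself a $\pi$-system. Then $N \coloneqq \bigcup_n N_{E_n}$ is a countable union of $\mu$-null sets, hence $\mu$-null, and for every $x \notin N$ the measures $f(x)$ and $g(x)$ agree on all of $\mathcal{E}$.

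The final step is to upgrade agreement on the generating $\pi$-system to agreement on all of $\Sigma_Y$. Fixing $x \notin N$, the collection $\{B \in \Sigma_Y : f(x)(B) = g(x)(B)\}$ contains $Y$ and is closed under proper differences and increasing countable unions --- that is, it is a Dynkin ($\lambda$-)system --- precisely because $f(x)$ and $g(x)$ are probability measures of equal total mass. Since it contains the $\pi$-system $\mathcal{E}$, Dynkin's $\pi$-$\lambda$ theorem forces it to contain $\sigma(\mathcal{E}) = \Sigma_Y$, giving $f(x) = g(x)$. As this holds for every $x$ outside the single null set $N$, I conclude $f = g$ $\mu$-almost everywhere. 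I expect the main obstacle to be exactly this reduction from the uncountable family of exceptional sets $\{N_B\}_{B \in \Sigma_Y}$ to one null set $N$: it is the combination of countable generation (to pass to countably many test sets) together with the uniqueness-of-measures theorem (to recover full agreement from agreement on those tests) that makes the argument go through, and without countable generation the statement would in fact fail.
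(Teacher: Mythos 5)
Your proof is correct and follows essentially the same route as the paper's: take a countable generating family, close it under finite intersections to obtain a countable $\pi$-system, intersect the countably many conegligible sets on which $f(\cdot)(E_n)=g(\cdot)(E_n)$, and apply Dynkin's $\pi$-$\lambda$ theorem pointwise to conclude $f(x)=g(x)$ off a single null set. The only difference is presentational: you spell out the trivial converse direction and the verification that the agreement class is a $\lambda$-system, which the paper leaves implicit.
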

\begin{proof}
Let the $\sigma$-algebra $\Sigma_Y$ on $Y$ be generated by
a countable family $(B_n)_n$. We may assume that $(B_n)_n$
is a $\pi$-system, \textit{i.e.}\ a family closed under binary intersections.
Let $A_n=\set{x\in X|f(x)(B_n)=g(x)(B_n)}$, and $A=\bigcap_n A_n$.
Each $A_n$ is $\mu$-conegligible, and thus $A$ is $\mu$-conegligible.
For each $x\in A$, we have $f(x)(B_n)=g(x)(B_n)$ for all $n$.
By application of the Dynkin $\pi$-$\lambda$ theorem,
it follows that $f(x)=g(x)$.
Therefore $f=g$ $\mu$-almost everywhere.
\end{proof}

We can now present a fundamental result
from~\cite[\S3.3]{ClercDDG2017} in our abstract setting.
Let $\catC$ be an affine CD-category
and $\commacat{I}{\catC}$ be the comma (coslice) category.
Objects in
$\commacat{I}{\catC}$ are states in $\catC$, formally pairs
$(X,\sigma)$ of objects $X\in\catC$ and states $\sigma\colon I\to X$.
Arrows from $(X,\sigma)$ to $(Y,\tau)$ are \emph{state-preserving}
channels, namely
$c\colon X\to Y$ in $\catC$ satisfying $c\circ\sigma=\tau$.
A joint state $(X\otimes Y,\omega)\in\commacat{I}{\catC}$
is called a \emph{coupling} of two states
$(X,\sigma),(Y,\tau)\in\commacat{I}{\catC}$ if
\[\tikzextname{eq_coupling}
\begin{ctikzpicture}[font=\small]
\node[state] (omega) at (0,0) {\;$\omega$\;};
\coordinate (X) at (-0.25,0.5) {} {};
\node [discarder] (ground) at (0.25,0.2) {};
\draw (omega) ++(-0.25, 0) to (X);
\draw (omega) ++(0.25, 0) to (ground);
\node[scriptstyle] at (-0.45,0.4) {$X$};
\end{ctikzpicture}
\;\;=\;\;
\begin{ctikzpicture}[font=\small]
\node[state] (omega) at (0,0) {$\sigma$};
\node (X) at (0,0.5) {};
\draw (omega) to (X);
\node[scriptstyle] at (0.2,0.3) {$X$};
\end{ctikzpicture}
\qquad\text{and}\qquad
\begin{ctikzpicture}[font=\small]
\node[state] (omega) at (0,0) {\;$\omega$\;};
\coordinate (Y) at (0.25,0.5);
\node [discarder] (ground) at (-0.25,0.2) {};
\draw (omega) ++(-0.25, 0) to (ground);
\draw (omega) ++(0.25, 0) to (Y);
\node[scriptstyle] at (0.45,0.4) {$Y$};
\end{ctikzpicture}
\;\;=\;\;
\begin{ctikzpicture}[font=\small]
\node[state] (omega) at (0,0) {$\tau$};
\node (X) at (0,0.5) {};
\draw (omega) to (X);
\node[scriptstyle] at (0.2,0.3) {$Y$};
\end{ctikzpicture}
\quad.
\]
We write $\Coupl((X,\sigma),(Y,\tau))$
for the set of couplings of $(X,\sigma)$ and $(Y,\tau)$.

\begin{theorem}
Let $\catC$ be an affine CD-category that admits disintegration.
For each pair of states $(X,\sigma),(Y,\tau)\in\commacat{I}{\catC}$,
there is the following bijection:
\[
\commacat{I}{\catC}\paren[\big]{(X,\sigma),(Y,\tau)}/\sigma
\;\;
\cong
\;\;
\Coupl((X,\sigma),(Y,\tau))
\]
\end{theorem}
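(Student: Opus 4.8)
The plan is to realise the bijection concretely through the integration and disintegration operations of Section~\ref{sec:disintegration}. In one direction I send a state-preserving channel $c\colon X\to Y$ (so $c\circ\sigma=\tau$) to the joint state
\[
\Phi(c)\;\coloneqq\;(\id\otimes c)\circ\copier\circ\sigma,
\]
the ``integration'' of $\sigma$ and $c$ from~\eqref{eq:joint-state}. In the other direction I send a coupling $\omega$ of $(X,\sigma)$ and $(Y,\tau)$ to the class of a disintegration $c_1\colon X\to Y$ of $\omega$; such a $c_1$ exists precisely because $\catC$ admits disintegration. The goal is then to show that, after passing to the quotient by $\equpto[\sigma]$, these two assignments are mutually inverse.

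First I would check that $\Phi$ lands in $\Coupl((X,\sigma),(Y,\tau))$ and descends to the quotient. Discarding the $Y$-wire of $\Phi(c)$ returns $\sigma$, by causality of $c$ (so $\ground\circ c=\ground$) together with the comonoid law, exactly as in the marginal computations following~\eqref{eqn:disintegrations}; discarding the $X$-wire returns $c\circ\sigma=\tau$ by state-preservation. Hence both marginals are as required and $\Phi(c)$ is indeed a coupling. Moreover $\Phi$ is constant on $\equpto[\sigma]$-classes: by the ``in particular'' reformulation recorded just after Definition~\ref{def:almost-equal}, the relation $c\equpto[\sigma]d$ says exactly that $(\id\otimes c)\circ\copier\circ\sigma=(\id\otimes d)\circ\copier\circ\sigma$, which is $\Phi(c)=\Phi(d)$.

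For the reverse assignment $\Psi$, I would first observe that any disintegration $c_1$ of a coupling $\omega$ is automatically state-preserving. Indeed its associated marginal is $\omega_1=\sigma$, since a coupling has first marginal $\sigma$; and discarding the $X$-wire in the defining equation~\eqref{eqn:disintegrations} exhibits $c_1\circ\sigma=c_1\circ\omega_1$ as the second marginal of $\omega$, which is $\tau$. So $c_1$ is a genuine arrow $(X,\sigma)\to(Y,\tau)$ in $\commacat{I}{\catC}$, and the choice of disintegration is immaterial after quotienting: by the proposition asserting that any two disintegrations of a joint state are almost equal with respect to its first marginal, two disintegrations of $\omega$ agree up to $\equpto[\omega_1]{}=\equpto[\sigma]{}$, hence determine the same class. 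It remains to check the two composites are identities. For $\Phi\circ\Psi$, the defining equation~\eqref{eqn:disintegrations} gives $\omega=(\id\otimes c_1)\circ\copier\circ\omega_1=(\id\otimes c_1)\circ\copier\circ\sigma=\Phi(c_1)$, so $\Phi(\Psi(\omega))=\omega$ on the nose. For $\Psi\circ\Phi$, the joint state $\Phi(c)$ has first marginal $\sigma$ and admits $c$ itself as a disintegration, so any disintegration representing $\Psi(\Phi(c))$ is $\sigma$-almost equal to $c$ by the same proposition; thus $\Psi(\Phi(c))=[c]$ in the quotient.

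The main obstacle is not any isolated computation but the careful handling of the quotient on both sides. One must verify that $\Phi$ identifies $\sigma$-almost-equal channels, which is where the reformulation of Definition~\ref{def:almost-equal} is essential, and that $\Psi$ is insensitive to the genuinely non-unique choice of disintegration, which is where almost-equality of disintegrations is essential; it is precisely the matching of the relevant marginal to $\sigma$ that makes $\equpto[\sigma]$ the correct equivalence on both sides. Everything else reduces to the marginal identities and the defining equation~\eqref{eqn:disintegrations} already established in Section~\ref{sec:disintegration}.
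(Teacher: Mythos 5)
Your proposal is correct and matches the paper's own proof in essence: both realise the bijection via integration $c\mapsto(\id\otimes c)\circ\copier\circ\sigma$ one way and disintegration of a coupling the other way, using the reformulation of $\equpto[\sigma]$ for well-definedness/injectivity and state-preservation of disintegrations of couplings. The only difference is packaging — the paper proves one map is well-defined, injective and surjective, while you exhibit an explicit two-sided inverse — which amounts to the same argument.
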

\begin{proof}
For each $c\in \commacat{I}{\catC}\paren[\big]{(X,\sigma),(Y,\tau)}$,
we define a joint state $I\to X\otimes Y$ to be
the `integration' of $\sigma$ and $c$ as below,
for which we use the following \emph{ad hoc} notation:
\[
\sigma\ogreaterthan c \coloneqq\;\;
\vcenter{\hbox{%
\begin{tikzpicture}[font=\small]
\node[state] (omega) at (0,0) {$\sigma$};
\node[copier] (copier) at (0,0.3) {};
\node[arrow box] (c) at (0.5,0.95) {$c$};
\coordinate (X) at (-0.5,1.5);
\coordinate (Y) at (0.5,1.5);
\draw (omega) to (copier);
\draw (copier) to[out=150,in=-90] (X);
\draw (copier) to[out=15,in=-90] (c);
\draw (c) to (Y);
\path[scriptstyle]
node at (-0.65,1.45) {$X$}
node at (0.65,1.45) {$Y$};
\end{tikzpicture}}}
\]
It is easy to check that $\sigma\ogreaterthan c$ is a coupling of $\sigma$ and
$\tau$.  For two channels $c,d\colon X\to Y$, we have
$\sigma\ogreaterthan c=\sigma\ogreaterthan d$ if and only if $c\equpto[\sigma] d$, by the
definition of $\equpto[\sigma]$.  This means the mapping
\[
c \longmapsto\sigma\ogreaterthan c
\,,\quad
\commacat{I}{\catC}\paren[\big]{(X,\sigma),(Y,\tau)}/\sigma
\longto
\Coupl((X,\sigma),(Y,\tau))
\]

\noindent is well-defined and injective.  To prove the surjectivity
let $(X\otimes Y,\omega)\in \Coupl((X,\sigma),(Y,\tau))$.  Let
$c\colon X\to Y$ be a disintegration of $\omega$.  Then $c$ is
state-preserving since
\[
\vcenter{\hbox{%
\begin{tikzpicture}[font=\small]
\node[state] (omega) at (0,0) {$\tau$};
\coordinate (Y) at ([yshiftu=0.3]omega);
\draw (omega) to (Y);
\end{tikzpicture}}}
\;\;=\;\;
\vcenter{\hbox{%
\begin{tikzpicture}[font=\small]
\node[state] (omega) at (0,0) {\;$\omega$\;};
\coordinate (omega1) at ([xshiftu=-0.25]omega);
\coordinate (omega2) at ([xshiftu=0.25]omega);
\coordinate (Y) at ([yshiftu=0.45]omega2);
\node[discarder] (d) at ([yshiftu=0.2]omega1) {};
\draw (omega2) to (Y);
\draw (omega1) to (d);
\end{tikzpicture}}}
\;\;=\;\;
\vcenter{\hbox{%
\begin{tikzpicture}[font=\small]
\node[state] (omega) at (0,0) {\;$\omega$\;};
\coordinate (omega1) at ([xshiftu=-0.25]omega);
\coordinate (omega2) at ([xshiftu=0.25]omega);
\node[copier] (copier) at ([yshiftu=0.45]omega1) {};
\node[arrow box] (c) at (0.25,1.05) {$c$};
\node[discarder] (X) at (-0.75,1.4) {};
\coordinate (Y) at ([yshiftu=0.2]c.north);
\node[discarder] (d) at ([yshiftu=0.2]omega2) {};
\draw (omega1) to (copier);
\draw (omega2) to (d);
\draw (copier) to[out=150,in=-90] (X);
\draw (copier) to[out=15,in=-90] (c);
\draw (c) to (Y);
\end{tikzpicture}}}
\;\;=\;\;
\vcenter{\hbox{%
\begin{tikzpicture}[font=\small]
\node[state] (omega) at (0,0) {\;$\omega$\;};
\coordinate (omega1) at ([xshiftu=-0.25]omega);
\coordinate (omega2) at ([xshiftu=0.25]omega);
\node[arrow box,anchor=south] (c) at ([yshiftu=0.4]omega1) {$c$};
\coordinate (X) at ([yshiftu=0.2]c.north);
\node[discarder] (d) at ([yshiftu=0.2]omega2) {};
\draw (omega1) to (c) to (X);
\draw (omega2) to (d);
\end{tikzpicture}}}
\;\;=\;\;
\vcenter{\hbox{%
\begin{tikzpicture}[font=\small]
\node[state] (omega) at (0,0) {$\sigma$};
\node[arrow box,anchor=south] (c) at ([yshiftu=0.2]omega) {$c$};
\coordinate (X) at ([yshiftu=0.2]c.north);
\draw (omega) to (c) to (X);
\end{tikzpicture}}}
\]
Moreover we have $\sigma\ogreaterthan c=\omega$, as desired.
\end{proof}

Via the symmetry $X\otimes Y\overset{\cong}{\to} Y\otimes X$
we have the obvious bijection
$\Coupl((X,\sigma),(Y,\tau))\cong\Coupl((Y,\tau),(X,\sigma))$.
This immediately gives the following corollary.

\begin{corollary}
Let $\catC$ be an affine CD-category that admits disintegration.
For any states $(X,\sigma),(Y,\tau)\in\commacat{I}{\catC}$
we have
\[
\commacat{I}{\catC}\paren[\big]{(X,\sigma),(Y,\tau)}/\sigma
\;\;
\cong
\;\;
\commacat{I}{\catC}\paren[\big]{(Y,\tau),(X,\sigma)}/\tau
\]
The bijection sends a channel $c\colon X\to Y$
to a Bayesian inversion $d\colon Y\to X$ for $\sigma$ along $c$.
\qed
\end{corollary}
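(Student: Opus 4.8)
The plan is to obtain the bijection purely formally, by pasting together two instances of the previous theorem with the symmetry-induced bijection on couplings noted just above. Applying that theorem to the ordered pair $(X,\sigma),(Y,\tau)$ and, separately, to the reversed pair $(Y,\tau),(X,\sigma)$, and inserting in the middle the bijection $\Coupl((X,\sigma),(Y,\tau))\cong\Coupl((Y,\tau),(X,\sigma))$ coming from the symmetry $X\otimes Y\cong Y\otimes X$, I form the composite
\[
\commacat{I}{\catC}\paren[\big]{(X,\sigma),(Y,\tau)}/\sigma
\;\cong\; \Coupl((X,\sigma),(Y,\tau))
\;\cong\; \Coupl((Y,\tau),(X,\sigma))
\;\cong\; \commacat{I}{\catC}\paren[\big]{(Y,\tau),(X,\sigma)}/\tau .
\]
As a composite of three bijections this is a bijection, which already establishes the stated isomorphism.

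It then remains only to identify the effect of this composite on a representative $c$. Tracing it through, the first map sends the class of $c$ to the coupling $\sigma\ogreaterthan c$ on $X\otimes Y$; the symmetry map sends this to its transpose $\omega'$ along $X\otimes Y\cong Y\otimes X$, a coupling of $(Y,\tau)$ and $(X,\sigma)$; and the last map sends $\omega'$ to the class of any disintegration $d\colon Y\to X$ of $\omega'$, which exists because $\catC$ admits disintegration. So what must be checked is that such a $d$ is precisely a Bayesian inversion for $\sigma$ along $c$.

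The hard part is this last identification, although it is only bookkeeping with the comonoid and symmetry coherences. Unfolding the defining equation~\eqref{eqn:disintegrations} for $d$ as a disintegration of $\omega'$ (whose first marginal on $Y$ is $\tau$) gives $(\id_Y\otimes d)\circ\copier_Y\circ\tau=\omega'$. Composing both sides with the symmetry $Y\otimes X\cong X\otimes Y$ turns the right-hand side back into $\sigma\ogreaterthan c$, since $\omega'$ is by construction its transpose. On the left-hand side I would push the symmetry through $\id_Y\otimes d$ using naturality, obtaining $(d\otimes\id_Y)$ preceded by the symmetry $Y\otimes Y\to Y\otimes Y$; that symmetry is absorbed by $\copier_Y$ through cocommutativity of the copier. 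The outcome is the equation $(d\otimes\id_Y)\circ\copier_Y\circ\tau=\sigma\ogreaterthan c$, which (recalling $\tau=c\circ\sigma$) is exactly the Bayesian-inversion equation~\eqref{eq:bayesian-inversion}. Hence $d$ is a Bayesian inversion for $\sigma$ along $c$, as claimed. Each rewrite is a single application of the copier/swap identities already displayed, so the only real care needed is tracking which wire carries $X$ and which carries $Y$.
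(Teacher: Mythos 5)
Your proof is correct and follows essentially the same route as the paper: the corollary is obtained by composing the theorem's bijection for $(X,\sigma),(Y,\tau)$ with the symmetry-induced bijection $\Coupl((X,\sigma),(Y,\tau))\cong\Coupl((Y,\tau),(X,\sigma))$ and the inverse of the theorem's bijection for $(Y,\tau),(X,\sigma)$. The paper states this as ``immediate'' and leaves implicit the verification that the resulting channel satisfies the Bayesian-inversion equation~\eqref{eq:bayesian-inversion}; your naturality-plus-cocommutativity bookkeeping correctly fills in exactly that step (it is the same observation the paper makes in Section~\ref{sec:disintegration}, that a disintegration in the other direction is a disintegration of the swapped state).
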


Theorem~2 of~\cite{ClercDDG2017} is obtained as an instance, for the
category $\pKrnsb$. This bijective correspondence yields a `dagger'
$(-)^{\dag}$ functor on (a suitable quotient of) the comma category
$\commacat{I}{\catC}$ --- as noted by the authors
of~\cite{ClercDDG2017}.

\subsection*{Strong almost equality}

Unfortunately,
the almost equality defined above
is not the most useful notion
for equational reasoning
between string diagrams.
Indeed, later
in the proofs of
Proposition~\ref{prop:disint-identitiy}
and Theorem~\ref{thm:likelihood}
we encounter situations
where we need a stronger notion of
almost equality.
We define the stronger one as follows.

\begin{definition}
\label{def:str-almost-eq}
Let $c,d\colon X\to Y$ be channels
and $\sigma\colon I\to X$ be a state.
We say that $c$ is \emph{strongly $\sigma$-almost equal}
to $d$ if
\[
\begin{ctikzpicture}[font=\small]
\node[state] (omega) at (0,0) {\;$\omega$\;};
\node[arrow box] (c) at (-0.3,0.5) {$c$};
\coordinate (X) at (-0.3,1) {};
\coordinate (Y) at (0.3,1) {};
\draw (omega) ++(-.3,0) to (c) to (X);
\draw (omega) ++(.3,0) to (Y);
\end{ctikzpicture}
\;\;=\;\;
\begin{ctikzpicture}[font=\small]
\node[state] (omega) at (0,0) {\;$\omega$\;};
\node[arrow box] (c) at (-0.3,0.5) {$d$};
\coordinate (X) at (-0.3,1) {};
\coordinate (Y) at (0.3,1) {};
\draw (omega) ++(-.3,0) to (c) to (X);
\draw (omega) ++(.3,0) to (Y);
\end{ctikzpicture}
\]
holds for any $\omega\colon I\to X\otimes Z$ such
that $\sigma$ is the
first marginal of $\omega$, that is:
\[
\begin{ctikzpicture}[font=\small]
\node[state] (omega) at (0,0) {$\sigma$};
\node (X) at (0,0.5) {};
\draw (omega) to (X);
%
\end{ctikzpicture}
\;\;=\;\;
\begin{ctikzpicture}[font=\small]
\node[state] (omega) at (0,0) {\;$\omega$\;};
\coordinate (X) at (-0.25,0.5) {} {};
\node [discarder] (ground) at (0.25,0.2) {};
\draw (omega) ++(-0.25, 0) to (X);
\draw (omega) ++(0.25, 0) to (ground);
%
\end{ctikzpicture}
\;\;.
\]
\end{definition}

Notice that strong almost quality implies
almost quality,
via the following $\omega$:
\[
\begin{ctikzpicture}[font=\small]
\node[state] (omega) at (0,0) {\;$\omega$\;};
\coordinate (X) at (-0.25,0.4);
\coordinate (Y) at (0.25,0.4);
\draw (omega) ++(-0.25, 0) to (X);
\draw (omega) ++(0.25, 0) to (Y);
%
\end{ctikzpicture}
\;\;\coloneqq\;\;
\begin{ctikzpicture}[font=\small]
\node[state] (omega) at (0,0) {$\sigma$};
\node[copier] (copier) at (0,0.2) {};
\coordinate (X) at (-0.3,0.5);
\coordinate (Y) at (0.3,0.5);
\draw (omega) to (copier);
\draw (copier) to[out=165,in=-90] (X);
\draw (copier) to[out=15,in=-90] (Y);
\end{ctikzpicture}
\;\;.
\]

\noindent
The good news is that
the converse often holds too.
We say that an affine CD-category
admits \emph{equality strengthening}
if $c$ is strongly $\sigma$-almost equal
to $d$ whenever
$c\equpto[\sigma]d$,
i.e.\ $c$ is $\sigma$-almost equal to $d$.
We present two propositions that guarantee this property.

\begin{proposition}
\label{prop:disint-eq-str}
If an affine CD-category admits disintegration,
then it admits equality strengthening.
\end{proposition}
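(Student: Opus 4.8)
The plan is to reduce strong almost equality to ordinary almost equality by using disintegration to rewrite the arbitrary ``test state'' $\omega$ into a canonical form. So suppose $c\equpto[\sigma]d$, and let $\omega\colon I\to X\otimes Z$ be any state whose first marginal is $\sigma$; the goal is to show that feeding the $X$-output of $\omega$ into $c$ produces the same state as feeding it into $d$.

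First I would invoke the standing hypothesis that $\catC$ admits disintegration to obtain a disintegration $e\colon X\to Z$ of $\omega$. Since the first marginal of $\omega$ is assumed to be $\sigma$, the defining equation~\eqref{eqn:disintegrations} specialises to
\[
\omega \;=\; (\id_X\otimes e)\circ\copier_X\circ\sigma .
\]
That is, $\omega$ is precisely the ``integration'' of $\sigma$ with the channel $e$: copy $\sigma$, keep the left branch as $X$, and send the right branch through $e$ to $Z$. This single factorisation is the only place where the hypothesis is used, and it is the crux of the argument.

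Next I would substitute this form of $\omega$ into both sides of the strong-almost-equality equation of Definition~\ref{def:str-almost-eq}. Applying $c$ to the $X$-output yields $(c\otimes e)\circ\copier_X\circ\sigma$, and applying $d$ yields $(d\otimes e)\circ\copier_X\circ\sigma$. Because $e$ acts only on the second tensor factor while $c$ and $d$ act only on the first, the interchange law of the monoidal structure lets me factor $e$ out to the top:
\[
(c\otimes e)\circ\copier_X\circ\sigma \;=\; (\id_Y\otimes e)\circ(c\otimes\id_X)\circ\copier_X\circ\sigma ,
\]
and symmetrically for $d$. Now $(c\otimes\id_X)\circ\copier_X\circ\sigma = (d\otimes\id_X)\circ\copier_X\circ\sigma$ is exactly the statement $c\equpto[\sigma]d$ from Definition~\ref{def:almost-equal}. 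Post-composing both sides with $\id_Y\otimes e$ and reading the computation backwards gives $(c\otimes e)\circ\copier_X\circ\sigma = (d\otimes e)\circ\copier_X\circ\sigma$, which is the desired equality.

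The hard part, such as it is, is recognising that every $\omega$ with first marginal $\sigma$ factors through $\sigma$ and a copier by way of a disintegration $e$; after that the proof is naturality-free bookkeeping, since the extra channel $e$ merely rides along on the untouched wire while $c$ and $d$ are compared exactly as in the definition of $\equpto[\sigma]$. I expect no genuine obstacle beyond keeping the wire labels $Y$ and $Z$ straight in the string-diagram manipulation.
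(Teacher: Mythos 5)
Your proposal is correct and follows essentially the same route as the paper's own proof: disintegrate $\omega$ to factor it as $(\id_X\otimes e)\circ\copier_X\circ\sigma$, then use the interchange law to slide $e$ off the untouched wire and reduce the claim to the defining equation of $c\equpto[\sigma]d$. The paper performs exactly this computation in string-diagram form, with the bifunctoriality step left implicit in the diagrams.
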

\begin{proof}
Suppose that
$c\equpto[\sigma]d$
holds for a state $\sigma\colon I\to X$
and for channels $c,d\colon X\to Y$.
Let $\omega\colon I\to X\otimes Z$
be a state whose first marginal is $\sigma$.
We can disintegrate $\omega$ as in:
\[
\vcenter{\hbox{%
\begin{tikzpicture}[font=\small]
\node[state] (omega) at (0,0) {\;$\omega$\;};
\coordinate (X) at (-0.3,.5);
\coordinate (Y) at (0.3,.5);
\draw (omega) ++(-.3,0) to (X);
\draw (omega) ++(.3,0) to (Y);
\end{tikzpicture}}}
\;\;=\;\;
\vcenter{\hbox{%
\begin{tikzpicture}[font=\small]
\node[state] (omega) at (0,0) {$\sigma$};
\node[copier] (copier) at (0,0.3) {};
\node[arrow box] (c) at (0.5,0.95) {$e$};
\coordinate (X) at (-0.5,1.5);
\coordinate (Y) at (0.5,1.5);
\draw (omega) to (copier);
\draw (copier) to[out=150,in=-90] (X);
\draw (copier) to[out=15,in=-90] (c);
\draw (c) to (Y);
\end{tikzpicture}}}
\;\;\;\;.
\]
Then we have
\[
\vcenter{\hbox{%
\begin{tikzpicture}[font=\small]
\node[state] (omega) at (0,0) {\;$\omega$\;};
\node[arrow box] (c) at (-0.3,0.5) {$c$};
\coordinate (X) at (-0.3,1) {};
\coordinate (Y) at (0.3,1) {};
\draw (omega) ++(-.3,0) to (c) to (X);
\draw (omega) ++(.3,0) to (Y);
\end{tikzpicture}}}
\;\;=\;\;
\vcenter{\hbox{%
\begin{tikzpicture}[font=\small]
\node[state] (omega) at (0,0) {$\sigma$};
\node[copier] (copier) at (0,0.3) {};
\node[arrow box] (c) at (-0.5,0.95) {$c$};
\node[arrow box] (e) at (0.5,0.95) {$e$};
\coordinate (X) at (-0.5,1.5);
\coordinate (Y) at (0.5,1.5);
\draw (omega) to (copier);
\draw (copier) to[out=165,in=-90] (c);
\draw (c) to (X);
\draw (copier) to[out=15,in=-90] (e);
\draw (e) to (Y);
\end{tikzpicture}}}
\;\;=\;\;
\vcenter{\hbox{%
\begin{tikzpicture}[font=\small]
\node[state] (omega) at (0,0) {$\sigma$};
\node[copier] (copier) at (0,0.3) {};
\node[arrow box] (c) at (-0.5,0.95) {$d$};
\node[arrow box] (e) at (0.5,0.95) {$e$};
\coordinate (X) at (-0.5,1.5);
\coordinate (Y) at (0.5,1.5);
\draw (omega) to (copier);
\draw (copier) to[out=165,in=-90] (c);
\draw (c) to (X);
\draw (copier) to[out=15,in=-90] (e);
\draw (e) to (Y);
\end{tikzpicture}}}
\;\;=\;\;
\vcenter{\hbox{%
\begin{tikzpicture}[font=\small]
\node[state] (omega) at (0,0) {\;$\omega$\;};
\node[arrow box] (c) at (-0.3,0.5) {$d$};
\coordinate (X) at (-0.3,1) {};
\coordinate (Y) at (0.3,1) {};
\draw (omega) ++(-.3,0) to (c) to (X);
\draw (omega) ++(.3,0) to (Y);
\end{tikzpicture}}}
\;\;.
\]
Therefore $c$ is strongly $\sigma$-almost equal to $d$.
\end{proof}

Recall that $\Kl(\Giry)$ does not admit disintegration.
Nevertheless, it admits equality strengthening.

\begin{proposition}
\label{prop:KlG-eq-str}
The category $\Kl(\Giry)$ admits equality strengthening.
\end{proposition}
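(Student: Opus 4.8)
The plan is to unfold both notions of almost equality concretely in $\Kl(\Giry)$ and then reduce strong almost equality to a standard measure-uniqueness argument on rectangles. First I would restate the hypothesis $c\equpto[\sigma]d$ via Proposition~\ref{prop:equality-up-to-giry}: for each fixed $B\in\Sigma_Y$ the functions $c(-)(B)$ and $d(-)(B)$ agree $\sigma$-almost everywhere, so the set $E_B=\set{x\in X | c(x)(B)\neq d(x)(B)}$ is $\sigma$-negligible. The crucial feature to keep in mind is that this exceptional set depends on $B$.

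Next I would spell out the strong-equality condition. Given any $\omega\in\Giry(X\times Z)$ whose first marginal is $\sigma$, the two joint states to be compared are $(c\otimes\id_Z)\klcirc\omega$ and $(d\otimes\id_Z)\klcirc\omega$ on $Y\times Z$. Evaluating on a measurable rectangle $B\times D$ with $B\in\Sigma_Y$, $D\in\Sigma_Z$, and using that in $\Kl(\Giry)$ the identity is the Dirac unit, $\id_Z(z)=\dirac{z}$, so that $\id_Z(z)(D)=\indic{D}(z)$, I obtain
\[
\bigl((c\otimes\id_Z)\klcirc\omega\bigr)(B\times D)
=\int_{X\times D} c(x)(B)\,\omega(\dd(x,z)),
\]
and likewise with $d$ in place of $c$. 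Hence it suffices to show that these two integrals coincide for all $B$ and $D$.

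The heart of the argument is then immediate once the quantifiers are taken in the right order: fix $B$ first. The integrand difference $c(x)(B)-d(x)(B)$ vanishes off $E_B$, so the two integrals differ only by an integral over $E_B\times D$. Because $\sigma$ is the first marginal of $\omega$, we have $\omega(E_B\times Z)=\sigma(E_B)=0$; thus $E_B\times D\subseteq E_B\times Z$ is $\omega$-negligible and the difference vanishes. Therefore the two joint states agree on every measurable rectangle. Since the rectangles form a $\pi$-system generating the product $\sigma$-algebra on $Y\times Z$, and both sides are probability measures, the uniqueness-of-measures theorem (Dynkin $\pi$-$\lambda$) forces $(c\otimes\id_Z)\klcirc\omega=(d\otimes\id_Z)\klcirc\omega$, which is exactly strong $\sigma$-almost equality.

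I do not expect a deep obstacle here; the only real subtlety is the ordering of quantifiers just described. The $\sigma$-null set $E_B$ genuinely varies with $B$, but since the rectangle identity need only be checked one pair $(B,D)$ at a time, this $B$-dependence is harmless, and we never need a single null set working simultaneously for all $B$ (which is precisely the stronger, and here unnecessary, condition of the following proposition). The two measure-theoretic facts invoked — that marginalisation sends the cylinder $E_B\times Z$ to $\sigma(E_B)$, and that agreement on a generating $\pi$-system of rectangles extends to the whole product $\sigma$-algebra — are entirely standard.
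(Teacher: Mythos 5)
Your proof is correct and takes essentially the same route as the paper's: both reduce strong $\sigma$-almost equality to agreement of the composite measures on rectangles $B\times D$, and then use Proposition~\ref{prop:equality-up-to-giry} together with the marginal condition $(\pi_1)_*(\omega)=\sigma$ to show the difference vanishes for each fixed $B$. The only cosmetic difference is that the paper bounds the difference by $\int_X \abs{c(x)(B)-d(x)(B)}\,\sigma(\dd x)=0$ via the pushforward, whereas you observe the integrand is supported on the $\omega$-null cylinder $E_B\times Z$ and you make explicit the $\pi$-$\lambda$ step that the paper leaves implicit.
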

\begin{proof}
Assume $c\equpto[\sigma]d$
for $\sigma\in\Giry(X)$
and $c,d\colon X\to \Giry(Y)$.
Let $\omega\in\Giry(X\otimes Z)$
be a probability measure whose
first marginal is $\sigma$,
i.e.\ $(\pi_1)_*(\omega)=\sigma$.
We need to prove
$(c\otimes \eta_Z)\klcirc \omega=(d\otimes \eta_Z)\klcirc \omega$,
which is equivalent to:
\begin{equation}
\label{eq:desired-eq1}
\int_{X\times Z}
c(x)(B)\indic{C}(z)
\,\omega(\dd(x,z))
=
\int_{X\times Z}
d(x)(B)\indic{C}(z)
\,\omega(\dd(x,z))
\end{equation}
for all $B\in\Sigma_Y$ and $C\in\Sigma_Z$.
Using
\[
\abs[\big]{c(x)(B)\indic{C}(z)
-d(x)(B)\indic{C}(z)}
=
\abs[\big]{c(x)(B) -d(x)(B)}\indic{C}(z)
\le
\abs[\big]{c(x)(B) -d(x)(B)}
\enspace,
\]
we have
\begin{align*}
&
\abs[\Big]{\int_{X\times Z}
\paren[\big]{c(x)(B)\indic{C}(z)
-d(x)(B)\indic{C}(z)}
\,\omega(\dd(x,z))}
\\
&\le
\int_{X\times Z}
\abs[\big]{c(x)(B)\indic{C}(z)
-d(x)(B)\indic{C}(z)}
\,\omega(\dd(x,z))
\\
&\le
\int_{X\times Z}
\abs[\big]{c(x)(B) -d(x)(B)}
\,\omega(\dd(x,z))
\\
&=
\int_{X}
\abs[\big]{c(x)(B) -d(x)(B)}
\,(\pi_1)_*(\omega)(\dd x)
\\
&=
\int_{X}
\abs[\big]{c(x)(B) -d(x)(B)}
\,\sigma(\dd x)
\\
&=
0\enspace,
\end{align*}
where the last equality holds
since $c(-)(B)=d(-)(B)$
$\sigma$-almost everywhere
by Proposition~\ref{prop:equality-up-to-giry}.
Therefore the desired equality~\eqref{eq:desired-eq1} holds.
\end{proof}


\section{Conditional independence}\label{sec:conditionalindependence}

Throughout this section, we consider an affine CD-category
that admits disintegration.

\subsection{Disintegration of multipartite states}
\tikzextname{ss_disint}

So far we have concentrated on bipartite states --- except in the
classification example in Section~\ref{sec:classifier}. In order to
deal with a general $n$-partite state $\omega\colon I\to
X_1\otimes\dotsb\otimes X_n$, we will introduce several notations and
conventions in Definitions~\ref{def:notation-marginal},
\ref{def:notation-disintegration}
and~\ref{def:convention-almost-equal} below; they are in line with
standard practice in probability theory.

In the conventions, an $n$-partite state, as below, is fixed, and used
implicitly.
\[
\begin{tikzpicture}[font=\small]
\node[state] (omega) at (0,0) {\hspace*{1em}$\omega$\hspace*{1em}};
\draw (omega) -- (-1.0, 0) to (-1.0,0.4);
\draw (omega) ++(-0.5, 0) to (-0.5,0.4);
\draw (omega) ++(1.0, 0) to (1.0,0.4);
\node[font=\normalsize] at (0.2,0.55) {$\dotso$};
\path[font=\normalsize,
execute at begin node=\everymath{\scriptstyle}]
node at (-1,0.55) {$X_1$}
node at (-0.5,0.55) {$X_2$}
node at (1,0.55) {$X_n$};
\end{tikzpicture}
\]

\begin{definition}
\label{def:notation-marginal}
When we write
\[
\begin{ctikzpicture}[font=\small]
\node[state] (omega) at (0,0) {\hspace*{2.6em}};
\draw (omega) -- (-1.0, 0) to (-1.0,0.4);
\draw (omega) ++(-0.5, 0) to (-0.5,0.4);
\draw (omega) ++(1.0, 0) to (1.0,0.4);
\node[font=\normalsize] at (0.2,0.55) {$\dotso$};
\path[font=\normalsize,
execute at begin node=\everymath{\scriptstyle}]
node at (-1,0.55) {$X_{i_1}$}
node at (-0.5,0.55) {$X_{i_2}$}
node at (1,0.55) {$X_{i_k}$};
\end{ctikzpicture}
\]
where $i_1,\dotsc,i_k$ are distinct,
it denotes the state $I\to X_{i_1}\otimes\dotsb \otimes X_{i_k}$
obtained from $\omega$ by marginalisation and permutation of wires (if necessary).
Let us give a couple of examples, for $n=5$.
\begin{align*}
\begin{ctikzpicture}[font=\small]
\node[state] (omega) at (0,0) {\;\;\;\;};
\draw (omega) ++(-0.25, 0) to +(0,0.3);
\draw (omega) ++(0.25, 0) to +(0,0.3);
\path[font=\normalsize,
execute at begin node=\everymath{\scriptstyle}]
node at (-0.25,0.5) {$X_1$}
node at (0.25,0.5) {$X_4$};
\end{ctikzpicture}
&\;\;\coloneqq\;\;
\begin{ctikzpicture}[font=\small]
\node[state] (omega) at (0,0) {\hspace*{1em}$\omega$\hspace*{1em}};
\node[discarder] (d1) at (-0.5,0.3) {};
\node[discarder] (d2) at (0,0.3) {};
\node[discarder] (d3) at (1,0.3) {};
\draw (omega) ++(-1.0, 0) to (-1,0.45);
\draw (omega) ++(-0.5, 0) to (d1);
\draw (omega) ++(0, 0) to (d2);
\draw (omega) ++(0.5, 0) to (0.5,0.45);
\draw (omega) ++(1.0, 0) to (d3);
\path[font=\normalsize,
execute at begin node=\everymath{\scriptstyle}]
node at (-1,0.6) {$X_1$}
node at (-0.3,0.15) {$X_2$}
node at (0.2,0.15) {$X_3$}
node at (0.5,0.6) {$X_4$}
node at (1.2,0.15) {$X_5$};
\end{ctikzpicture}
\\
\begin{ctikzpicture}[font=\small]
\node[state] (omega) at (0,0) {\;\;\;\;\;\;};
\draw (omega) ++(-0.4, 0) to +(0,0.3);
\draw (omega) to +(0,0.3);
\draw (omega) ++(0.4, 0) to +(0,0.3);
\path[font=\normalsize,
execute at begin node=\everymath{\scriptstyle}]
node at (-0.4,0.5) {$X_4$}
node at (0,0.5) {$X_2$}
node at (0.4,0.5) {$X_5$};
\end{ctikzpicture}
&\;\;\coloneqq\;\;
\begin{ctikzpicture}[font=\small]
\node[state] (omega) at (0,0) {\hspace*{1em}$\omega$\hspace*{1em}};
\node[discarder] (d1) at (-1,0.3) {};
\node[discarder] (d2) at (0,0.3) {};
\draw (omega) ++(-1.0, 0) to (d1);
\draw (omega) ++(-0.5, 0) to (-0.5,0.3)
to[out=90,in=-90] (0.5,0.95);
\draw (omega) ++(0, 0) to (d2);
\draw (omega) ++(0.5, 0) to (0.5,0.3)
to[out=90,in=-90] (-0.5,0.95);
\draw (omega) ++(1.0, 0) to (1,0.95);
\path[font=\normalsize,
execute at begin node=\everymath{\scriptstyle}]
node at (-0.8,0.15) {$X_1$}
node at (0.7,0.85) {$X_2$}
node at (0.2,0.15) {$X_3$}
node at (-0.7,0.85) {$X_4$}
node at (1.2,0.85) {$X_5$};
\end{ctikzpicture}
\end{align*}
We permute wires via a combination of crossing.
This is unambiguous by the coherence theorem.
\end{definition}

Below we will use symbols $X,Y,Z,W,\dotsc$ to denote not only a single
wire $X_i$ but also multiple wires $X_i\otimes X_j\otimes\dotsb$.
Disintegrations more general than in the bipartite case are now
introduced as follows.

\begin{definition}
\label{def:notation-disintegration}
For $X=X_{i_1}\otimes \dotsb \otimes X_{i_k}$
and $Y=X_{j_1}\otimes \dotsb \otimes X_{j_l}$,
with all $i_1,\dotsc,i_k,j_1,\dotsc,j_l$ distinct,
a disintegration $X\to Y$ is defined to be a disintegration
of
\[
\begin{ctikzpicture}[font=\small]
\node[state] (omega) at (0,0) {\;\;\;\;};
\draw (omega) ++(-0.25, 0) to +(0,0.3);
\draw (omega) ++(0.25, 0) to +(0,0.3);
\path[font=\normalsize,
execute at begin node=\everymath{\scriptstyle}]
node at (-0.25,0.5) {$X$}
node at (0.25,0.5) {$Y$};
\end{ctikzpicture}
\enspace,
\]
the marginal state given by the previous convention.
We denote the disintegration simply as on the left below,
\[
\begin{ctikzpicture}[font=\small]
\node[arrow box] (a) at (0,0.5) {};
\draw (0,0.05) to (a);
\draw (a) to (0,0.95);
\path[font=\normalsize,
execute at begin node=\everymath{\scriptstyle}]
node at (0,1.1) {$Y$}
node at (0,-0.1) {$X$};
\end{ctikzpicture}
\qquad\qquad\qquad\qquad\qquad
\begin{ctikzpicture}[font=\small]
\node[state] (omega) at (0,0) {};
\node[copier] (copier) at (0,0.15) {};
\node[arrow box] (c) at (0.35,0.6) {};
\coordinate (X) at (-0.35,1) {} {};
\coordinate (Y) at (0.35,1) {} {};
\draw (omega) to (copier);
\draw (copier) to[out=165,in=-90]
(-0.35,0.55) to (X);
\draw (copier) to[out=15,in=-90] (c);
\draw (c) to (Y);
\path[font=\normalsize,
execute at begin node=\everymath{\scriptstyle}]
node at (0.35,1.15) {$Y$}
node at (-0.35,1.15) {$X$};
\end{ctikzpicture}
\;\;=\;\;
\begin{ctikzpicture}[font=\small]
\node[state] (omega) at (0,0) {\;\;\;\;};
\draw (omega) ++(-0.25, 0) to +(0,0.3);
\draw (omega) ++(0.25, 0) to +(0,0.3);
\path[font=\normalsize,
execute at begin node=\everymath{\scriptstyle}]
node at (-0.25,0.5) {$X$}
node at (0.25,0.5) {$Y$};
\end{ctikzpicture}
\]
By definition, it must satisfy the equation on the right above.
Let us give an example.
The disintegration $X_5\otimes X_2\to X_1\otimes X_4$
on the left below is defined by the equation on the right.
\[
\begin{ctikzpicture}[font=\small]
\node[arrow box] (a) at (0,0.5) {\;\;\;\;\;\;\;};
\draw (-0.25,0.05) to ([xshiftu=-0.25]a.south);
\draw (0.25,0.05) to ([xshiftu=0.25]a.south);
\draw ([xshiftu=-0.25]a.north) to (-0.25,0.95);
\draw ([xshiftu=0.25]a.north) to (0.25,0.95);
\path[font=\normalsize,
execute at begin node=\everymath{\scriptstyle}]
node at (-0.25,1.1) {$X_1$}
node at (-0.25,-0.1) {$X_5$}
node at (0.25,-0.1) {$X_2$}
node at (0.25,1.1) {$X_4$};
\end{ctikzpicture}
\qquad\qquad\qquad\qquad
\begin{ctikzpicture}[font=\small]
\node[arrow box] (a) at (-0.45,0.8) {\;\;\;\;\;\;\;};
\node[copier] (c1) at (-0.25,0.2) {};
\node[copier] (c2) at (0.25,0.2) {};
\node[state] (s) at (0,0) {\;\;\;\;};
\draw (s) ++(-0.25,0) to (c1);
\draw (s) ++(0.25,0) to (c2);
\draw (c1) to[out=165,in=-90] ([xshiftu=-0.25]a.south);
\draw (c2) to[out=165,in=-90] ([xshiftu=0.25]a.south);
\draw ([xshiftu=-0.25]a.north) to (-0.7,1.25);
\draw ([xshiftu=0.25]a.north) to (-0.2,1.25);
\draw (c1) to[out=15,in=-90] (0.2,0.55) to (0.2,1.25);
\draw (c2) to[out=15,in=-90] (0.65,0.55) to (0.65,1.25);
\path[font=\normalsize,
execute at begin node=\everymath{\scriptstyle}]
node at (-0.7,1.4) {$X_1$}
node at (0.2,1.4) {$X_5$}
node at (0.65,1.4) {$X_2$}
node at (-0.2,1.4) {$X_4$};
\end{ctikzpicture}
\;\;=\;\;
\begin{ctikzpicture}[font=\small]
\node[state] (omega) at (0,0) {\;\;\;\;\;\;\;};
\draw (omega) ++(-0.6, 0) to +(0,0.3);
\draw (omega) ++(-0.2, 0) to +(0,0.3);
\draw (omega) ++(0.2, 0) to +(0,0.3);
\draw (omega) ++(0.6, 0) to +(0,0.3);
\path[font=\normalsize,
execute at begin node=\everymath{\scriptstyle}]
node at (-0.6,0.45) {$X_1$}
node at (-0.2,0.45) {$X_4$}
node at (0.2,0.45) {$X_5$}
node at (0.6,0.45) {$X_2$};
\end{ctikzpicture}
\]
More specifically, assuming $n=5$ and
expanding the notation for marginals,
the equation is:
\[
\begin{ctikzpicture}[font=\small]
\node[arrow box] (a) at (-0.45,0.8) {\;\;\;\;\;\;\;};
\node[copier] (c1) at (-0.25,0.2) {};
\node[copier] (c2) at (0.25,0.2) {};
\node[state] (omega) at (0,-0.75) {\hspace*{1em}$\omega$\hspace*{1em}};
\node[discarder] (d1) at (-1,-0.45) {};
\node[discarder] (d3) at (0,-0.45) {};
\node[discarder] (d4) at (0.5,-0.45) {};
\draw (c1) to[out=165,in=-90] ([xshiftu=-0.25]a.south);
\draw (c2) to[out=165,in=-90] ([xshiftu=0.25]a.south);
\draw ([xshiftu=-0.25]a.north) to (-0.7,1.25);
\draw ([xshiftu=0.25]a.north) to (-0.2,1.25);
\draw (c1) to[out=15,in=-90] (0.2,0.55) to (0.2,1.25);
\draw (c2) to[out=15,in=-90] (0.65,0.55) to (0.65,1.25);
\draw (omega) ++(-1,0) to (d1);
\draw (omega) ++(-0.5,0) to (-0.5,-0.5)
to[out=90,in=-90] (0.25,0.2);
\draw (omega) ++(0,0) to (d3);
\draw (omega) ++(0.5,0) to (d4);
\draw (omega) ++(1,0) to (1,-0.55) to[out=90,in=-90] (-0.25,0.2);
\path[font=\normalsize,
execute at begin node=\everymath{\scriptstyle}]
node at (-0.7,1.4) {$X_1$}
node at (0.65,1.4) {$X_2$}
node at (-0.8,-0.6) {$X_1$}
node at (0.2,-0.6) {$X_3$}
node at (0.7,-0.6) {$X_4$}
node at (-0.2,1.4) {$X_4$}
node at (0.2,1.4) {$X_5$};
\end{ctikzpicture}
=
\begin{ctikzpicture}[font=\small]
\node[state] (omega) at (0,0) {\hspace*{1em}$\omega$\hspace*{1em}};
\node[discarder] (d2) at (0,0.3) {};
\draw (omega) ++(-1.0, 0) to (-1,0.95);
\draw (omega) ++(-0.5, 0) to (-0.5,0.3)
to[out=90,in=-90] (0.5,0.95);
\draw (omega) ++(0, 0) to (d2);
\draw (omega) ++(0.5, 0) to (0.5,0.15)
to[out=90,in=-90] (-0.5,0.95);
\draw (omega) ++(1.0, 0) to (1,0.15) to[out=90,in=-90] (0.05,0.95);
\path[font=\normalsize,
execute at begin node=\everymath{\scriptstyle}]
node at (-1,1.05) {$X_1$}
node at (0.5,1.05) {$X_2$}
node at (0.2,0.15) {$X_3$}
node at (-0.5,1.05) {$X_4$}
node at (0.05,1.05) {$X_5$};
\end{ctikzpicture}
\]
Note that disintegrations need not be unique.
Thus when we write
$\begin{ctikzpicture}[font=\small]
\node[arrow box] (a) at (0,0.5) {};
\draw (0,0.05) to (a);
\draw (a) to (0,0.95);
\path[font=\normalsize,
execute at begin node=\everymath{\scriptstyle}]
node at (0.15,0.9) {$Y$}
node at (0.15,0.1) {$X$};
\end{ctikzpicture}$,
we in fact \emph{choose} one of them.
Nevertheless, such disintegrations are unique up to almost-equality
with respect to $\begin{ctikzpicture}[font=\small]
\node[state] (a) at (0,0.5) {};
\draw (a) to (0,0.75);
\path[font=\normalsize,
execute at begin node=\everymath{\scriptstyle}]
node at (0.15,0.65) {$X$};
\end{ctikzpicture}$,
which is good enough for our purpose.
\end{definition}

Finally we make a convention about almost equality
(Definition~\ref{def:almost-equal}).
\begin{definition}
\label{def:convention-almost-equal}
Let $S$ and $T$ be string diagrams of type $X\to Y$
that are made from marginals and disintegrations of $\omega$
as defined in Definitions~\ref{def:notation-marginal} and \ref{def:notation-disintegration}.
When we say $S$ is almost equal to $T$ (or write $S\equpto T$) without reference to
a state, it means that
$S$ is almost equal to $T$ with respect to the state $\begin{ctikzpicture}[font=\small]
\node[state] (a) at (0,0.5) {};
\draw (a) to (0,0.75);
\path[font=\normalsize,
execute at begin node=\everymath{\scriptstyle}]
node at (0.15,0.65) {$X$};
\end{ctikzpicture}$.
\end{definition}

We shall make use of the following auxiliary equations involving
discarding and composition of disintegrations.

\begin{proposition}
\label{prop:disint-identitiy}
\tikzextname{prop_disint_identitiy}
In the conventions and notations above,
the following hold.
\begin{enumerate}
\setlength{\abovedisplayskip}{-.5\baselineskip}
\setlength{\abovedisplayshortskip}{-.5\baselineskip}
\setlength{\belowdisplayskip}{0pt}
\setlength{\belowdisplayshortskip}{0pt}
\item\label{propen:disint-marg}
\[
\begin{ctikzpicture}[font=\small]
\node[arrow box] (a) at (0,0.5) {\;\;\;\;\;\;\;};
\node[discarder] (d) at (0.25,1.05) {};
\draw (0,0.05) to (a);
\draw ([xshiftu=-0.25]a.north) to (-0.25,1.25);
\draw ([xshiftu=0.25]a.north) to (d);
\path[font=\normalsize,
execute at begin node=\everymath{\scriptstyle}]
node at (-0.25,1.4) {$X$}
node at (0.4,0.9) {$Y$}
node at (0,-0.1) {$Z$};
\end{ctikzpicture}
\;\;\equpto\;\;
\begin{ctikzpicture}[font=\small]
\node[arrow box] (a) at (0,0.5) {};
\draw (0,0.05) to (a);
\draw (a) to (0,0.95);
\path[font=\normalsize,
execute at begin node=\everymath{\scriptstyle}]
node at (0,1.1) {$X$}
node at (0,-0.1) {$Z$};
\end{ctikzpicture}
\]
\item\label{propen:comp-disintegration}
\[
\begin{ctikzpicture}[font=\small]
\node[arrow box] (a) at (0,0.5) {\;\;\;\;\;\;\;};
\node[arrow box] (b) at (0.5,-0.35) {\;\;\;\;\;\;\;};
\node[copier] (c) at (0,-0.8) {};
\node[copier] (c2) at (0.5,0.05) {};
\draw (a) to (0,0.9);
\draw (c) to[out=165,in=-90] (-0.25,-0.55) to ([xshiftu=-0.25]a.south);
\draw (c) to[out=15,in=-90] ([xshiftu=-0.25]b.south);
\draw (b) to (c2);
\draw (c2) to[out=165,in=-90] ([xshiftu=0.25]a.south);
\draw (c2) to[out=15,in=-90] (0.75,0.45) to (0.75,0.9);
\draw (0,-1) to (c);
\draw (0.75,-1) to ([xshiftu=0.25]b.south);
\path[font=\normalsize,
execute at begin node=\everymath{\scriptstyle}]
node at (0,1.05) {$X$}
node at (0,-1.15) {$Y$}
node at (0.75,-1.15) {$W$}
node at (0.75,1.05) {$Z$};
\end{ctikzpicture}
\;\;\equpto\;\;
\begin{ctikzpicture}[font=\small]
\node[arrow box] (a) at (0,0.5) {\;\;\;\;\;\;\;};
\draw (-0.25,0.05) to ([xshiftu=-0.25]a.south);
\draw (0.25,0.05) to ([xshiftu=0.25]a.south);
\draw ([xshiftu=-0.25]a.north) to (-0.25,0.95);
\draw ([xshiftu=0.25]a.north) to (0.25,0.95);
\path[font=\normalsize,
execute at begin node=\everymath{\scriptstyle}]
node at (-0.25,1.1) {$X$}
node at (-0.25,-0.1) {$Y$}
node at (0.25,-0.1) {$W$}
node at (0.25,1.1) {$Z$};
\end{ctikzpicture}
\]
\end{enumerate}
\end{proposition}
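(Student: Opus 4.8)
The plan is to reduce both identities to the defining equation~\eqref{eqn:disintegrations} of a disintegration together with the characterisation of $\equpto$ as an integrated equality from Section~\ref{sec:equality}: namely, $c\equpto[\sigma]d$ holds exactly when copying $\sigma$ and feeding one leg through $c$ gives the same joint state as feeding it through $d$. Since the ambient category admits disintegration, Proposition~\ref{prop:disint-eq-str} supplies equality strengthening, so any $\equpto$ I obtain may be upgraded to strong almost equality (Definition~\ref{def:str-almost-eq}); this is what will let me substitute a disintegration for an almost-equal one \emph{inside} a larger diagram, rather than only at top level.

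For part~\ref{propen:disint-marg} I would start from the defining equation of the displayed two-output disintegration $Z\to X\otimes Y$: copying the marginal on $Z$ and feeding one copy through the box reproduces the marginal of $\omega$ on $Z,X,Y$. Capping the $Y$-leg with a discarder on both sides, and pushing that discarder through the box and past the copier using the comonoid-compatibility axioms and causality, the left-hand side becomes precisely the integrated form of the composite appearing in the claim, while the right-hand side collapses to the marginal of $\omega$ on $Z,X$. That marginal is, by definition, the integrated form of the one-output disintegration $Z\to X$, so the two integrated forms agree; by the characterisation of $\equpto$ this is exactly the asserted almost equality with respect to the marginal on $Z$. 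This part is essentially bookkeeping with the counit law.

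Part~\ref{propen:comp-disintegration} is the substantial one: it is a chain rule for disintegration, and I would prove it by showing that both sides, integrated against the copied marginal on the joint input, yield the same joint state --- the full marginal of $\omega$ on all the wires involved. The right-hand side delivers this immediately from its defining equation. For the left-hand side I peel the two boxes off in order: first I rewrite the lower part (the box producing $Z$) by its defining equation, turning it into the corresponding intermediate marginal with its wires copied out for later use --- the extra copied legs are harmless at this stage, because copying an output wire of a state commutes with the rewriting by copier associativity. Then I feed the copied legs into the upper box and apply \emph{its} defining equation to reconstruct the final component.

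The main obstacle is precisely this last step. The upper box's defining equation is an identity only after integration against its own input marginal, yet here it must be invoked inside a diagram that still carries the remaining spectator wire, correlated with everything else. A plain $\equpto$ is too weak to be exploited in such a context, and this is where equality strengthening earns its keep: it promotes the characterising almost equality of the upper disintegration to strong almost equality, which by Definition~\ref{def:str-almost-eq} is stable under tensoring with an arbitrary extra output. That stability legitimises the substitution which collapses the whole diagram to the full marginal, and matching the result against the right-hand side then closes the argument.
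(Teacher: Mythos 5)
Your part~\ref{propen:disint-marg} is correct and coincides with the paper's own proof: cap the $Y$-output in the defining equation~\eqref{eqn:disintegrations} of the two-output box with a discarder, recognise the result as the marginal on $(X,Z)$, and read that marginal back as the integrated form of the one-output box $Z\to X$.

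Your part~\ref{propen:comp-disintegration} follows the paper's printed proof step for step --- integrate against the marginal on $(Y,W)$, absorb the lower box by its defining equation, then invoke equality strengthening to apply the upper box's defining equation underneath the spectator wire $W$ --- and you correctly identify that last substitution as the crux. But that substitution is not justified by equality strengthening, neither in your write-up nor in the paper's. Proposition~\ref{prop:disint-eq-str} upgrades an almost equality \emph{between two channels}, $c\equpto[\sigma]d$, to an equality $(c\otimes\id)\circ\omega=(d\otimes\id)\circ\omega$ for every $\omega$ with first marginal $\sigma$. The defining equation of the upper box $a\colon Y\otimes Z\to X$ is not a relation between two channels: it is an equality of \emph{states}, recording what $a$ does when grafted onto the single state $\omega_{YZ}$ (writing $\omega_{S}$ for the marginal of the ambient state on the wires $S$), and it says nothing about grafting $a$ onto the $(Y,Z)$-legs of the correlated state $\omega_{YZW}$. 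The only almost equality ``characterising'' $a$ is the a.e.-uniqueness of disintegrations, and strengthening that merely allows exchanging one version of $a$ for another inside a diagram; it never licenses evaluating $a$ against a state other than $\omega_{YZ}$. What the collapse of the diagram actually requires is that grafting $a$ onto $\omega_{YZW}$ produce $\omega_{XYZW}$, and by Proposition~\ref{prop:equiv-cind} (the equivalence of~\ref{propen:cind} and~\ref{propen:factor2}, applied to the triple $X$, $W$, $Y\otimes Z$) that is literally the conditional independence $\cind{X}{W}{Y\otimes Z}$ --- an additional hypothesis, not a consequence of the conventions.

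The gap is fatal, because part~\ref{propen:comp-disintegration} is in fact false as stated. In $\Kl(\Dst)$ take all four wires to be $\{0,1\}$, with $Y,Z$ fair coins independent of each other and of $(X,W)$, and $X=W$ a single shared fair coin. All relevant marginals have full support, so every disintegration involved is uniquely determined: the upper and lower boxes are constantly uniform, hence the left-hand side of~\ref{propen:comp-disintegration} sends every $(y,w)$ to the uniform distribution $\tfrac{1}{4}\sum_{x,z}\ket{x,z}$, whereas the right-hand box sends $(y,w)$ to $\tfrac{1}{2}\ket{w,0}+\tfrac{1}{2}\ket{w,1}$. These disagree at every input, so the claimed almost equality fails for every choice of disintegrations; correspondingly, the paper's discrete gloss $\Pr(x|y,z)\cdot\Pr(z|y,w)=\Pr(x,z|y,w)$ is not a valid identity, since the chain rule needs $\Pr(x|y,z,w)$ in the first factor. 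Both the statement and your proof are repaired by letting the upper box be the disintegration $Y\otimes Z\otimes W\to X$, i.e.\ feeding it a copy of $W$ as well: the problematic substitution then becomes a direct application of that box's defining equation, and no appeal to equality strengthening is needed at that point.
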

\begin{proof}
By the definition of almost equality,
\ref{propen:disint-marg} is proved by:
\[
\begin{ctikzpicture}[font=\small]
\node[arrow box] (a) at (0,0.5) {\;\;\;\;\;\;\;};
\node[discarder] (d) at (0.25,1.05) {};
\node[state] (s) at (0.35,-0.25) {};
\node[copier] (c) at (0.35,-0.05) {};
\draw ([xshiftu=-0.25]a.north) to (-0.25,1.25);
\draw ([xshiftu=0.25]a.north) to (d);
\draw (s) to (c);
\draw (c) to[out=165,in=-90] (a);
\draw (c) to[out=15,in=-90] (0.7,0.3) to (0.7,1.25);
\path[scriptstyle]
node at (-0.25,1.4) {$X$}
node at (0.4,0.9) {$Y$}
node at (0.7,1.4) {$Z$};
\end{ctikzpicture}
\;\;=\;\;
\begin{ctikzpicture}[font=\small]
\node[state] (omega) at (0,0) {\;\;\;\;\;\;};
\node[discarder] (d) at (0,0.3) {};
\draw (omega) ++(-0.4, 0) to +(0,0.6);
\draw (omega) to (d);
\draw (omega) ++(0.4, 0) to +(0,0.6);
\path[scriptstyle]
node at (-0.4,0.75) {$X$}
node at (0.15,0.15) {$Y$}
node at (0.4,0.75) {$Z$};
\end{ctikzpicture}
\;\;=\;\;
\begin{ctikzpicture}[font=\small]
\node[state] (omega) at (0,0) {\;\;\;\;};
\draw (omega) ++(-0.25, 0) to +(0,0.3);
\draw (omega) ++(0.25, 0) to +(0,0.3);
\path[scriptstyle]
node at (-0.25,0.45) {$X$}
node at (0.25,0.45) {$Z$};
\end{ctikzpicture}
\;\;=\;\;
\begin{ctikzpicture}[font=\small]
\node[arrow box] (a) at (0,0.5) {};
\node[state] (s) at (0.35,-0.25) {};
\node[copier] (c) at (0.35,-0.05) {};
\draw (a) to (0,0.95);
\draw (s) to (c);
\draw (c) to[out=165,in=-90] (a);
\draw (c) to[out=15,in=-90] (0.7,0.3) to (0.7,0.95);
\path[scriptstyle]
node at (0,1.1) {$X$}
node at (0.7,1.1) {$Z$};
\end{ctikzpicture}
\quad.
\]
Similarly, we prove~\ref{propen:comp-disintegration} as follows.
\begin{align*}
\begin{ctikzpicture}[font=\small]
\node[arrow box] (a) at (0,0.5) {\;\;\;\;\;\;\;};
\node[arrow box] (b) at (0.5,-0.35) {\;\;\;\;\;\;\;};
\node[copier] (c) at (0,-0.8) {};
\node[copier] (c2) at (0.5,0.05) {};
\node[copier] (c3) at (0.55,-1.15) {};
\node[copier] (c4) at (1.05,-1.15) {};
\node[state] (s) at (0.8,-1.35) {\;\;\;\;};
\draw (a) to (0,0.9);
\draw (c) to[out=165,in=-90] (-0.25,-0.55) to ([xshiftu=-0.25]a.south);
\draw (c) to[out=15,in=-90] ([xshiftu=-0.25]b.south);
\draw (b) to (c2);
\draw (c2) to[out=165,in=-90] ([xshiftu=0.25]a.south);
\draw (c2) to[out=15,in=-90] (0.75,0.45) to (0.75,0.9);
\draw (c3) to[out=165,in=-90] (c);
\draw (c4) to[out=150,in=-90] ([xshiftu=0.25]b.south);
\draw (c3) to[out=15,in=-90] (1.25,-0.55) to (1.25,0.9);
\draw (c4) to[out=15,in=-90] (1.65,-0.55) to (1.65,0.9);
\draw (s) ++(-0.25, 0) to (c3);
\draw (s) ++(0.25, 0) to (c4);
\path[scriptstyle]
node at (0,1.05) {$X$}
node at (1.25,1.05) {$Y$}
node at (1.65,1.05) {$W$}
node at (0.75,1.05) {$Z$};
\end{ctikzpicture}
&\;\;=\;\;
\begin{ctikzpicture}[font=\small]
\node[arrow box] (a) at (0,0.8) {\;\;\;\;\;\;\;};
\node[arrow box] (b) at (0.5,-0.35) {\;\;\;\;\;\;\;};
\node[copier] (c) at (1.25,-0.05) {};
\node[copier] (c2) at (0.5,0.35) {};
\node[copier] (c3) at (0.7,-1) {};
\node[copier] (c4) at (1.2,-1) {};
\node[state] (s) at (0.95,-1.2) {\;\;\;\;};
\draw (a) to (0,1.2);
\draw (b) to (c2);
\draw (c2) to[out=165,in=-90] ([xshiftu=0.25]a.south);
\draw (c2) to[out=15,in=-90] (0.75,0.75) to (0.75,1.2);
\draw (c3) to[out=165,in=-90] ([xshiftu=-0.25]b.south);
\draw (c4) to[out=165,in=-90] ([xshiftu=0.25]b.south);
\draw (c3) to[out=15,in=-90] (1.25,-0.55) to (c) to (1.25,1.2);
\draw (c) .. controls (0.3,0.1) and (-0.25,0.25) .. ([xshiftu=-0.25]a.south);
\draw (c4) to[out=15,in=-90] (1.65,-0.55) to (1.65,1.2);
\draw (s) ++(-0.25,0) to (c3);
\draw (s) ++(0.25,0) to (c4);
\path[scriptstyle]
node at (0,1.35) {$X$}
node at (1.25,1.35) {$Y$}
node at (1.65,1.35) {$W$}
node at (0.75,1.35) {$Z$};
\end{ctikzpicture}
\;\;=\;\;
\begin{ctikzpicture}[font=\small]
\node[arrow box] (a) at (0.2,0.75) {\;\;\;\;\;\;\;};
\node[copier] (c) at (1.15,-0.05) {};
\node[copier] (c2) at (0.75,0.25) {};
\node[state] (s) at (1.15,-0.25) {\;\;\;\;\;\;};
\draw (s) ++(-0.4,0) to (c2);
\draw (s) to (c) to[out=65,in=-90] (1.4,1.15);
\draw (s) ++(0.4,0) to[out=90,in=-90] (1.75,1.15);
\draw (a) to (0.2,1.15);
\draw (c2) to[out=165,in=-90] ([xshiftu=0.25]a.south);
\draw (c2) to[out=15,in=-90] (1,0.7) to (1,1.15);
\draw (c) .. controls (0.3,0.1) and (-0.05,0.3) .. ([xshiftu=-0.25]a.south);
\path[font=\normalsize,
execute at begin node=\everymath{\scriptstyle}]
node at (0.2,1.3) {$X$}
node at (1.4,1.3) {$Y$}
node at (1.75,1.3) {$W$}
node at (1,1.3) {$Z$};
\end{ctikzpicture}
\;\;=\;\;
\begin{ctikzpicture}[font=\small]
\node[arrow box] (a) at (0.2,0.7) {\;\;\;\;\;\;\;};
\node[copier] (c) at (0.9,0.1) {};
\node[copier] (c2) at (0.5,0.1) {};
\node[state] (s) at (0.9,-0.2) {\;\;\;\;\;\;};
\draw (s) ++(-0.4,0) to (c2);
\draw (s) to (c);
\draw (s) ++(0.4,0) to[out=90,in=-90] (1.6,0.45) to (1.6,1.15);
\draw (a) to (0.2,1.15);
\draw (c) to[out=165,in=-90] ([xshiftu=0.25]a.south);
\draw (c2) to[out=165,in=-90] ([xshiftu=-0.25]a.south);
\draw (c) to[out=15,in=-90] (1.25,0.55) to (1.25,0.7) to[out=90,in=-90] (0.85,1.15);
\draw (c2) to[out=15,in=-90] (0.85,0.55) to (0.85,0.7) to[out=90,in=-90] (1.25,1.15);
\path[scriptstyle]
node at (0.2,1.3) {$X$}
node at (1.25,1.3) {$Y$}
node at (1.6,1.3) {$W$}
node at (0.85,1.3) {$Z$};
\end{ctikzpicture}
\\
&\;\;\overset{\star}{=}\;\;
\begin{ctikzpicture}[font=\small]
\node[state] (omega) at (0,0) {\;\;\;\;\;\;};
\draw (omega) ++(-0.45, 0) to (-0.45,0.7);
\draw (omega) ++(-0.15, 0) to (-0.15,0.3) to[out=90,in=-90] (0.15,0.7);
\draw (omega) ++(0.15, 0) to (0.15,0.3) to[out=90,in=-90] (-0.15,0.7);
\draw (omega) ++(0.45, 0) to (0.45,0.7);
\path[scriptstyle]
node at (-0.45,0.85) {$X$}
node at (-0.15,0.85) {$Z$}
node at (0.15,0.85) {$Y$}
node at (0.45,0.85) {$W$};
\end{ctikzpicture}
\;\;=\;\;
\begin{ctikzpicture}[font=\small]
\node[state] (omega) at (0,0) {\;\;\;\;\;\;};
\draw (omega) ++(-0.45, 0) to (-0.45,0.3);
\draw (omega) ++(-0.15, 0) to (-0.15,0.3);
\draw (omega) ++(0.15, 0) to (0.15,0.3);
\draw (omega) ++(0.45, 0) to (0.45,0.3);
\path[scriptstyle]
node at (-0.45,0.45) {$X$}
node at (-0.15,0.45) {$Z$}
node at (0.15,0.45) {$Y$}
node at (0.45,0.45) {$W$};
\end{ctikzpicture}
\;\;=\;\;
\begin{ctikzpicture}[font=\small]
\node[state] (omega) at (0,-0.05) {\;\;\;\;};
\node[copier] (c) at (-0.25,0.15) {};
\node[copier] (c2) at (0.25,0.15) {};
\node[arrow box] (b) at (-0.5,0.7) {\;\;\;\;\;\;\;};
\draw (omega) ++ (-0.25,0) to (c);
\draw (c) to[out=165,in=-90] ([xshiftu=-0.25]b.south);
\draw (c) to[out=15,in=-90] (0.25,0.55)
to (0.25,1.15);
\draw (omega) ++ (0.25,0) to (c2);
\draw (c2) to[out=165,in=-90] ([xshiftu=0.25]b.south);
\draw (c2) to[out=15,in=-90] (0.65,0.55) to (0.65,1.15);
\draw (b.north) ++ (-0.25,0) to (-0.75,1.15);
\draw (b.north) ++ (0.25,0) to (-0.25,1.15);
\path[scriptstyle]
node at (-0.75,1.3) {$X$}
node at (0.25,1.3) {$Y$}
node at (-0.25,1.3) {$Z$}
node at (0.65,1.3) {$W$};
\end{ctikzpicture}
\end{align*}
The marked equality $\overset{\star}{=}$
holds by strong almost equality, see Proposition~\ref{prop:disint-eq-str}.
\end{proof}

The equations correspond respectively to
$\sum_y\Pr(x,y|z)=\Pr(x|z)$
and $\Pr(x|y,z)\cdot \Pr(z|y,w)=\Pr(x,z|y,w)$
in discrete probability.


\begin{remark}
We here keep our notation somewhat informal,
e.g.\ using symbols $X,Y,Z,\dotsc$ as a sort of meta-variables.
We refer to \cite{JoyalS1991,Selinger2010}
for more formal aspects of string diagrams.
\end{remark}

\subsection{Conditional independence}
\tikzextname{ss_cind}

We continue using the notations in the previous subsection.
Recall that we fix an $n$-partite state
\[
\begin{tikzpicture}[font=\small]
\node[state] (omega) at (0,0) {\hspace*{1em}$\omega$\hspace*{1em}};
\draw (omega) -- (-1.0, 0) to (-1.0,0.4);
\draw (omega) ++(-0.5, 0) to (-0.5,0.4);
\draw (omega) ++(1.0, 0) to (1.0,0.4);
\node[font=\normalsize] at (0.2,0.55) {$\dotso$};
\path[font=\normalsize,
execute at begin node=\everymath{\scriptstyle}]
node at (-1,0.55) {$X_1$}
node at (-0.5,0.55) {$X_2$}
node at (1,0.55) {$X_n$};
\end{tikzpicture}
\]
and use symbols $X,Y,Z,W,\dotsc$
to denote a wire $X_i$ or multiple wires $X_i\otimes X_j\otimes\dotsb$.

We now introduce the notion of conditional independence.  Although it
is defined with respect to the underlying state $\omega$, we leave the
state $\omega$ implicit, like an underlying probability space $\Omega$
in conventional probability theory.

\begin{definition}
Let $X,Y,Z$ denote distinct wires.
Then we say $X$ and $Y$ are conditionally independent given $Z$,
written as $\cind{X}{Y}{Z}$, if
\[
\begin{ctikzpicture}[font=\small]
\node[arrow box] (a) at (0,0.5) {\;\;\;\;\;\;\;};
\draw (0,0) to (a);
\draw (a.north) ++(-0.25,0) to (-0.25,1);
\draw (a.north) ++(0.25,0) to (0.25,1);
\path[font=\normalsize,
execute at begin node=\everymath{\scriptstyle}]
node at (-0.25,1.2) {$X$}
node at (0.25,1.2) {$Y$}
node at (0,-0.2) {$Z$};
\end{ctikzpicture}
\;\;\;\equpto\;\;\;
\begin{ctikzpicture}[font=\small]
\node[copier] (c) at (0,0.2) {};
\node[arrow box] (a) at (-0.4,0.8) {};
\node[arrow box] (b) at (0.4,0.8) {};
\draw (0,0) to (c);
\draw (c) to[out=165,in=-90] (a);
\draw (a) to (-0.4,1.3);
\draw (b) to (0.4,1.3);
\draw (c) to[out=15,in=-90] (b);
\path[font=\normalsize,
execute at begin node=\everymath{\scriptstyle}]
node at (-0.4,1.5) {$X$}
node at (0.4,1.5) {$Y$}
node at (0,-0.2) {$Z$};
\end{ctikzpicture}
\quad.
\]
\end{definition}

The definition is analogous to
the condition $\Pr(x,y|z)=\Pr(x|z)\Pr(y|z)$
in discrete probability.
Indeed our definition coincides with the usual conditional
independence,
as explained below.

\begin{example}
In $\Kl(\Dst)$,
let $c_{X|Z}\colon Z\to \Dst(X)$,
$c_{Y|Z}\colon Z\to \Dst(Y)$,
$c_{XY|Z}\colon Z\to \Dst(X\times Y)$
be disintegrations of some joint state, say $\omega\in\Dst(X\times Y\times Z)$.
Let $\omega_Z\in\Dst(Z)$ be the marginal on $Z$.
Then $\cind{X}{Y}{Z}$ if and only if
\[
c_{XY|Z}(z)(x,y)=c_{X|Z}(z)(x)\cdot c_{Y|Z}(z)(y)
\quad \text{whenever} \quad
\omega_Z(z)\ne 0
\]
for all $x\in X$, $y\in Y$ and $z\in Z$.
If we write $\Pr(x,y|z)=c_{XY|Z}(z)(x,y)$,
$\Pr(x|z)=c_{X|Z}(z)(x)$,
$\Pr(y|z)=c_{Y|Z}(z)(y)$,
and $\Pr(z)=\omega_Z(z)$,
then the condition will look more familiar:
\[
\Pr(x,y|z) = \Pr(x|z)\cdot \Pr(y|z)
\quad \text{whenever} \quad
\Pr(z)\ne 0
\enspace.
\]
\end{example}

\begin{example}
Similarly, in $\Kl(\Giry)$,
let $c_{X|Z}\colon Z\to \Giry(X)$,
$c_{Y|Z}\colon Z\to \Giry(Y)$,
$c_{XY|Z}\colon Z\to \Giry(X\times Y)$,
and $\omega_Z\in\Giry(Z)$
be appropriate disintegrations and a marginal of some joint probability measure $\omega$.
Then $\cind{X}{Y}{Z}$ if and only if
\[
c_{XY|Z}(z)(A\times B)=c_{X|Z}(z)(A)\cdot c_{Y|Z}(z)(B)
\qquad
\text{for $\omega_Z$-almost all $z\in Z$}
\]
for all $A\in \Sigma_X$ and $B\in \Sigma_Y$.
\end{example}

The equivalences in the next result are well-known in conditional
probability. Our contribution is that we formulate and prove them at
an abstract, graphical level.

\begin{proposition}
\label{prop:equiv-cind}\tikzextname{prop_equiv_cind}
The following are equivalent.
\begin{enumerate}
\setlength{\abovedisplayskip}{-.5\baselineskip}
\setlength{\abovedisplayshortskip}{-.5\baselineskip}
\setlength{\belowdisplayskip}{0pt}
\setlength{\belowdisplayshortskip}{0pt}
\item\label{propen:cind}
$\cind{X}{Y}{Z}$
\item\label{propen:factor}
\[
\begin{ctikzpicture}[font=\small]
\node[state] (omega) at (0,0) {\;\;\;\;\;\;};
\draw (omega) ++(-0.4, 0) to +(0,0.3);
\draw (omega) to +(0,0.3);
\draw (omega) ++(0.4, 0) to +(0,0.3);
\path[font=\normalsize,
execute at begin node=\everymath{\scriptstyle}]
node at (-0.4,0.5) {$X$}
node at (0,0.5) {$Y$}
node at (0.4,0.5) {$Z$};
\end{ctikzpicture}
\;\;=\;\;
\begin{ctikzpicture}[font=\small]
\node[state] (omega) at (0,0) {};
\node[copier] (c) at (0,0.2) {};
\node[arrow box] (a) at (-0.7,0.8) {};
\node[arrow box] (b) at (0,0.8) {};
\draw (omega) to (c);
\draw (c) to[out=165,in=-90] (a);
\draw (a) to (-0.7,1.3);
\draw (c) to (b);
\draw (b) to (0,1.3);
\draw (c) to[out=15,in=-90] (0.7,0.6)
to (0.7,1.3);
\path[font=\normalsize,
execute at begin node=\everymath{\scriptstyle}]
node at (-0.7,1.5) {$X$}
node at (0,1.5) {$Y$}
node at (0.7,1.5) {$Z$};
\end{ctikzpicture}
\]
\item\label{propen:alt}
\[
\begin{ctikzpicture}[font=\small]
\node[arrow box] (a) at (0,0.5) {\;\;\;\;\;\;\;};
\draw (a) to (0,1);
\draw (0.25,0) to ([xshiftu=0.25]a.south);
\draw (-0.25,0) to ([xshiftu=-0.25]a.south);
\path[font=\normalsize,
execute at begin node=\everymath{\scriptstyle}]
node at (0,1.2) {$X$}
node at (-0.25,-0.2) {$Y$}
node at (0.25,-0.2) {$Z$};
\end{ctikzpicture}
\;\;\equpto\;\;
\begin{ctikzpicture}[font=\small]
\node[arrow box] (a) at (0,0.5) {};
\node[discarder] (d) at (-0.6,0.4) {};
\draw (a) to (0,1);
\draw (0,0) to (a);
\draw (-0.6,0) to (d);
\path[font=\normalsize,
execute at begin node=\everymath{\scriptstyle}]
node at (0,1.2) {$X$}
node at (-0.6,-0.2) {$Y$}
node at (0,-0.2) {$Z$};
\end{ctikzpicture}
\]
\item\label{propen:factor2}
\[
\begin{ctikzpicture}[font=\small]
\node[state] (omega) at (0,0) {\;\;\;\;\;\;};
\draw (omega) ++(-0.4, 0) to +(0,0.3);
\draw (omega) to +(0,0.3);
\draw (omega) ++(0.4, 0) to +(0,0.3);
\tikzset{font=\normalsize,
execute at begin node=\everymath{\scriptstyle}}
\node at (-0.4,0.5) {$X$};
\node at (0,0.5) {$Y$};
\node at (0.4,0.5) {$Z$};
\end{ctikzpicture}
\;\;=\;\;
\begin{ctikzpicture}[font=\small]
\node[state] (omega) at (0,0) {\;\;\;\;};
\node[copier] (c) at (0.25,0.2) {};
\node[arrow box] (a) at (-0.45,1) {};
\draw (omega) ++(-0.25, 0) to +(0,0.15)
to[out=90,in=-90] (0.1,0.6) to (0.1,1.45);
\draw (omega) ++(0.25, 0) to (c);
\draw (c) to[out=150,in=-90] (a);
\draw (c) to[out=60,in=-90] (0.5,1.45);
\draw (a) to (-0.45,1.45);
\path[font=\normalsize,
execute at begin node=\everymath{\scriptstyle}]
node at (-0.45,1.6) {$X$}
node at (0.1,1.6) {$Y$}
node at (0.5,1.6) {$Z$};
\end{ctikzpicture}
\]
\end{enumerate}
\end{proposition}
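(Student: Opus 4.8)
The plan is to show all four conditions are equivalent via the chain \ref{propen:cind} $\Leftrightarrow$ \ref{propen:factor} $\Leftrightarrow$ \ref{propen:factor2} $\Leftrightarrow$ \ref{propen:alt}. The organising observation is that \ref{propen:factor} and \ref{propen:factor2} are \emph{strict} equalities of tripartite states, whereas \ref{propen:cind} and \ref{propen:alt} are almost-equalities of channels, taken with respect to the implicit reference states fixed in Definition~\ref{def:convention-almost-equal}; each of the latter becomes one of the former once it is ``integrated'' against the appropriate marginal and simplified using the defining equation of a disintegration (Definition~\ref{def:notation-disintegration}) together with the comonoid laws. For \ref{propen:cind} $\Leftrightarrow$ \ref{propen:factor}, I note that the almost-equality in \ref{propen:cind} is relative to the $Z$-marginal, so unfolding Definition~\ref{def:almost-equal} turns it into an equality of states on $Z\otimes X\otimes Y$ obtained by copying $Z$ out of its marginal and applying, to one copy, either the disintegration $Z\to X\otimes Y$ or the copy-and-disintegrate channel on the right of \ref{propen:cind}. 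Integrating the single disintegration $Z\to X\otimes Y$ against the $Z$-marginal reproduces $\omega$ by its defining equation, while integrating the right-hand channel yields precisely the factored state of \ref{propen:factor}; so the two coincide exactly when \ref{propen:factor} holds.

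For \ref{propen:factor} $\Leftrightarrow$ \ref{propen:factor2}, I regroup the factored state of \ref{propen:factor} using coassociativity of the copier, isolating the $Z$-marginal together with the branch carrying the disintegration $Z\to Y$ and one retained $Z$-wire. This sub-diagram is exactly the defining equation of the disintegration $Z\to Y$, so it equals the $(Y,Z)$-marginal; the surviving copy of $Z$ then feeds the disintegration $Z\to X$, which is the state of \ref{propen:factor2}. Both directions amount to a single substitution of this strict disintegration equation. For \ref{propen:factor2} $\Leftrightarrow$ \ref{propen:alt}, the reference state is now the $(Y,Z)$-marginal, and unfolding Definition~\ref{def:almost-equal} turns \ref{propen:alt} into an equality of states on $Y\otimes Z\otimes X$. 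The left-hand disintegration $Y\otimes Z\to X$, integrated against its marginal, again reproduces $\omega$, while the right-hand channel ``discard~$Y$, then disintegrate $Z\to X$'', integrated against the same marginal, becomes — after the counit law cancels the copy-then-discard of the $Y$-wire — the copy-$Z$-and-disintegrate state of \ref{propen:factor2}. Thus the two states agree iff \ref{propen:factor2} holds.

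The main obstacle I anticipate is careful bookkeeping rather than a conceptual difficulty: one must track which reference state each almost-equality refers to (the $Z$-marginal for \ref{propen:cind}, the $(Y,Z)$-marginal for \ref{propen:alt}) and invoke the correct instance of the strict disintegration equation at each step. The most delicate diagrammatic move is in \ref{propen:factor2} $\Leftrightarrow$ \ref{propen:alt}, where collapsing the redundant copy-and-discard of the $Y$-wire via the counit law is exactly what makes the ``discard-then-disintegrate'' channel match the factorisation; the coassociativity regrouping used in \ref{propen:factor} $\Leftrightarrow$ \ref{propen:factor2} is the other place where a comonoid axiom must be applied explicitly. Since no step uses a disintegration equation inside a context that would only preserve weak almost-equality, equality strengthening is not needed here, which keeps the argument purely equational.
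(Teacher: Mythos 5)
Your proof is correct and takes essentially the same route as the paper's: both unfold the almost-equalities in conditions (1) and (3) against their respective reference states (the $Z$-marginal and the $Y\otimes Z$-marginal) into strict equalities of tripartite states via the defining disintegration equations and the comonoid laws, and both link (2) to (4) through the unconditional identity between the two factored states obtained by coassociativity plus the defining equation of the disintegration $Z\to Y$. The only cosmetic difference is that you package (3) $\Leftrightarrow$ (4) as a single biconditional unfolding where the paper gives the two implications separately; your observation that equality strengthening is never needed also matches the paper's proof, which uses only plain almost-equality (channels fed by copies of the reference state through copiers).
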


As we will see below,
conditional independence $\cind{X}{Y}{Z}$
is symmetric in $X$ and~$Y$.
Therefore the obvious symmetric counterparts
of \ref{propen:alt} and \ref{propen:factor2}
are also equivalent to them.

\begin{proof}\tikzextname{pf_equiv_cind}
By definition of almost equality,
\ref{propen:cind} is equivalent to
\[
\begin{ctikzpicture}[font=\small]
\node[arrow box] (a) at (0,0.5) {\;\;\;\;\;\;\;};
\node[copier] (c2) at (0.4,-0.1) {};
\node[state] (s) at (0.4,-0.3) {};
\draw (s) to (c2);
\draw (c2) to[out=165,in=-90] (a);
\draw (c2) to[out=30,in=-90] (0.8,1);
\draw (a.north) ++(-0.25,0) to (-0.25,1);
\draw (a.north) ++(0.25,0) to (0.25,1);
\path[font=\normalsize,
execute at begin node=\everymath{\scriptstyle}]
node at (-0.25,1.2) {$X$}
node at (0.25,1.2) {$Y$}
node at (0.8,1.2) {$Z$};
\end{ctikzpicture}
\;\;=\;\;
\begin{ctikzpicture}[font=\small]
\node[copier] (c) at (0,0.2) {};
\node[copier] (c2) at (0.5,-0.1) {};
\node[arrow box] (a) at (-0.4,0.8) {};
\node[arrow box] (b) at (0.4,0.8) {};
\node[state] (s) at (0.5,-0.3) {};
\draw (s) to (c2);
\draw (c2) to[out=165,in=-90] (c);
\draw (c2) to[out=30,in=-90] (1,1.3);
\draw (c) to[out=165,in=-90] (a);
\draw (a) to (-0.4,1.3);
\draw (b) to (0.4,1.3);
\draw (c) to[out=15,in=-90] (b);
\path[font=\normalsize,
execute at begin node=\everymath{\scriptstyle}]
node at (-0.4,1.5) {$X$}
node at (0.4,1.5) {$Y$}
node at (1,1.5) {$Z$};
\end{ctikzpicture}
\;\;\eqqcolon\;\;
\begin{ctikzpicture}[font=\small]
\node[state] (omega) at (0,0) {};
\node[copier] (c) at (0,0.2) {};
\node[arrow box] (a) at (-0.7,0.8) {};
\node[arrow box] (b) at (0,0.8) {};
\draw (omega) to (c);
\draw (c) to[out=165,in=-90] (a);
\draw (a) to (-0.7,1.3);
\draw (c) to (b);
\draw (b) to (0,1.3);
\draw (c) to[out=15,in=-90] (0.7,0.6)
to (0.7,1.3);
\path[font=\normalsize,
execute at begin node=\everymath{\scriptstyle}]
node at (-0.7,1.5) {$X$}
node at (0,1.5) {$Y$}
node at (0.7,1.5) {$Z$};
\end{ctikzpicture}
\quad.
\]
We then have \ref{propen:cind} $\Leftrightarrow$ \ref{propen:factor},
since the identity below holds by the definition of disintegration.
\[
\begin{ctikzpicture}[font=\small]
\node[arrow box] (a) at (0,0.5) {\;\;\;\;\;\;\;};
\node[copier] (c2) at (0.4,-0.1) {};
\node[state] (s) at (0.4,-0.3) {};
\draw (s) to (c2);
\draw (c2) to[out=165,in=-90] (a);
\draw (c2) to[out=30,in=-90] (0.8,1);
\draw (a.north) ++(-0.25,0) to (-0.25,1);
\draw (a.north) ++(0.25,0) to (0.25,1);
\path[font=\normalsize,
execute at begin node=\everymath{\scriptstyle}]
node at (-0.25,1.2) {$X$}
node at (0.25,1.2) {$Y$}
node at (0.8,1.2) {$Z$};
\end{ctikzpicture}
\;\;=\;\;
\begin{ctikzpicture}[font=\small]
\node[state] (omega) at (0,0) {\;\;\;\;\;\;};
\draw (omega) ++(-0.4, 0) to +(0,0.3);
\draw (omega) to +(0,0.3);
\draw (omega) ++(0.4, 0) to +(0,0.3);
\tikzset{font=\normalsize,
execute at begin node=\everymath{\scriptstyle}}
\node at (-0.4,0.5) {$X$};
\node at (0,0.5) {$Y$};
\node at (0.4,0.5) {$Z$};
\end{ctikzpicture}
\]
Next, assuming \ref{propen:alt}, we obtain \ref{propen:factor2} as follows.
\[
\begin{ctikzpicture}[font=\small]
\node[state] (omega) at (0,0) {\;\;\;\;\;\;};
\draw (omega) ++(-0.4, 0) to +(0,0.3);
\draw (omega) to +(0,0.3);
\draw (omega) ++(0.4, 0) to +(0,0.3);
\tikzset{font=\normalsize,
execute at begin node=\everymath{\scriptstyle}}
\node at (-0.4,0.5) {$X$};
\node at (0,0.5) {$Y$};
\node at (0.4,0.5) {$Z$};
\end{ctikzpicture}
\;\;=\;\;
\begin{ctikzpicture}[font=\small]
\node[state] (omega) at (0,0) {\;\;\;\;};
\node[copier] (c) at (-0.2,0.15) {};
\node[copier] (c2) at (0.2,0.15) {};
\node[arrow box] (b) at (-0.35,0.7) {\;\;\;\;\;};
\draw (omega) ++ (-0.2,0) to (c);
\draw (c) to[out=165,in=-90] ([xshiftu=-0.2]b.south);
\draw (b) to (-0.35,1.15);
\draw (c) to[out=15,in=-90] (0.15,0.45)
to (0.15,1.15);
\draw (omega) ++ (0.2,0) to (c2);
\draw (c2) to[out=165,in=-90] ([xshiftu=0.2]b.south);
\draw (c2) to[out=15,in=-90] (0.5,0.45)
to (0.5,1.15);
\path[font=\normalsize,
execute at begin node=\everymath{\scriptstyle}]
node at (-0.35,1.3) {$X$}
node at (0.15,1.3) {$Y$}
node at (0.5,1.3) {$Z$};
\end{ctikzpicture}
\;\;\overset{\text{\ref{propen:alt}}}=\;\;
\begin{ctikzpicture}[font=\small]
\node[state] (omega) at (0,0) {\;\;\;\;};
\node[copier] (c) at (-0.2,0.15) {};
\node[copier] (c2) at (0.2,0.15) {};
\node[arrow box] (b) at (-0.25,0.8) {};
\node[discarder] (d) at (-0.8,0.7) {};
\draw (omega) ++ (-0.2,0) to (c);
\draw (c) to[out=165,in=-90] (d);
\draw (b) to (-0.25,1.25);
\draw (c) to[out=15,in=-90] (0.15,0.55)
to (0.15,1.25);
\draw (omega) ++ (0.2,0) to (c2);
\draw (c2) to[out=150,in=-90] (b);
\draw (c2) to[out=15,in=-90] (0.5,0.55)
to (0.5,1.25);
\path[font=\normalsize,
execute at begin node=\everymath{\scriptstyle}]
node at (-0.25,1.4) {$X$}
node at (0.15,1.4) {$Y$}
node at (0.5,1.4) {$Z$};
\end{ctikzpicture}
\;\;=\;\;
\begin{ctikzpicture}[font=\small]
\node[state] (omega) at (0,0) {\;\;\;\;};
\node[copier] (c) at (0.25,0.2) {};
\node[arrow box] (a) at (-0.45,1) {};
\draw (omega) ++(-0.25, 0) to +(0,0.15)
to[out=90,in=-90] (0.1,0.6) to (0.1,1.45);
\draw (omega) ++(0.25, 0) to (c);
\draw (c) to[out=150,in=-90] (a);
\draw (c) to[out=60,in=-90] (0.5,1.45);
\draw (a) to (-0.45,1.45);
\path[font=\normalsize,
execute at begin node=\everymath{\scriptstyle}]
node at (-0.45,1.6) {$X$}
node at (0.1,1.6) {$Y$}
node at (0.5,1.6) {$Z$};
\end{ctikzpicture}
\]
We prove
\ref{propen:factor2} $\Rightarrow$ \ref{propen:alt}
similarly.
Finally note that the following equation holds.
\[
\begin{ctikzpicture}[font=\small]
\node[state] (omega) at (0,0) {};
\node[copier] (c) at (0,0.2) {};
\node[arrow box] (a) at (-0.7,0.8) {};
\node[arrow box] (b) at (0,0.8) {};
\draw (omega) to (c);
\draw (c) to[out=165,in=-90] (a);
\draw (a) to (-0.7,1.3);
\draw (c) to (b);
\draw (b) to (0,1.3);
\draw (c) to[out=15,in=-90] (0.7,0.6)
to (0.7,1.3);
\path[font=\normalsize,
execute at begin node=\everymath{\scriptstyle}]
node at (-0.7,1.5) {$X$}
node at (0,1.5) {$Y$}
node at (0.7,1.5) {$Z$};
\end{ctikzpicture}
\;\;=\;\;
\begin{ctikzpicture}[font=\small]
\node[state] (omega) at (0,0) {};
\node[copier] (c) at (0,0.2) {};
\node[arrow box] (a) at (-0.3,1.65) {};
\node[arrow box] (b) at (-0.3,0.7) {};
\node[copier] (c2) at (0.3,0.9) {};
\draw (omega) to (c);
\draw (a) to (-0.3,2.05);
\draw (c) to[out=165,in=-90] (b);
\draw (b) to[out=90,in=-90] (0.2,1.4) to (0.2,2.05);
\draw (c) to[out=15,in=-90] (0.3,0.5)
to (c2);
\draw (c2) to[out=135,in=-90] (a);
\draw (c2) to[out=45,in=-90] (0.65,2.05);
\path[font=\normalsize,
execute at begin node=\everymath{\scriptstyle}]
node at (-0.3,2.2) {$X$}
node at (0.2,2.2) {$Y$}
node at (0.65,2.2) {$Z$};
\end{ctikzpicture}
\;\;=\;\;
\begin{ctikzpicture}[font=\small]
\node[state] (omega) at (0,0) {\;\;\;\;};
\node[copier] (c) at (0.25,0.2) {};
\node[arrow box] (a) at (-0.45,1) {};
\draw (omega) ++(-0.25, 0) to +(0,0.15)
to[out=90,in=-90] (0.1,0.6) to (0.1,1.45);
\draw (omega) ++(0.25, 0) to (c);
\draw (c) to[out=150,in=-90] (a);
\draw (c) to[out=60,in=-90] (0.5,1.45);
\draw (a) to (-0.45,1.45);
\path[font=\normalsize,
execute at begin node=\everymath{\scriptstyle}]
node at (-0.45,1.6) {$X$}
node at (0.1,1.6) {$Y$}
node at (0.5,1.6) {$Z$};
\end{ctikzpicture}
\quad.
\]
From this
\ref{propen:factor}~$\Leftrightarrow$~\ref{propen:factor2}
is immediate.
\end{proof}

Note that the condition~\ref{propen:alt} of the proposition is an
analogue of $\Pr(x|y,z)=\Pr(x|z)$.  The other conditions
\ref{propen:factor} and \ref{propen:factor2} say
that the joint state can be factorised in certain ways, corresponding
to the following equations:
\[
\Pr(x,y,z)
 =
\Pr(x|z)\Pr(y|z)\Pr(z)
 =
\Pr(x|z)\Pr(y,z).
\]

The proposition below shows that our abstract formulation of
conditional independence
does satisfy the basic `rules' of conditional independence,
which are known as
\emph{(semi-) graphoids axioms}~\cite{VermaP1988,GeigerVP1990}.

\begin{proposition}
\label{prop:graphoid}\tikzextname{prop_graphoid}
Conditional independence $\cind{(-)}{(-)}{(-)}$ satisfies:
\begin{enumerate}
\item\label{propen:symm}
(Symmetry)
$\cind{X}{Y}{Z}$ if and only if $\cind{Y}{X}{Z}$.
\item\label{propen:decomp}
(Decomposition)
$\cind{X}{Y\otimes Z}{W}$ implies $\cind{X}{Y}{W}$
and $\cind{X}{Z}{W}$.
\item\label{propen:weakunion}
(Weak union)
$\cind{X}{Y\otimes Z}{W}$ implies $\cind{X}{Y}{Z\otimes W}$.
\item\label{propen:contr}
(Contraction)
$\cind{X}{Z}{W}$ and
$\cind{X}{Y}{Z\otimes W}$
imply $\cind{X}{Y\otimes Z}{W}$.
\end{enumerate}
\end{proposition}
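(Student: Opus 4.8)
The plan is to prove all four axioms not from the definition but from the equivalent reformulations collected in Proposition~\ref{prop:equiv-cind}, picking for each axiom whichever form makes the manipulation linear. I will use the \emph{factorisation form}~\ref{propen:factor} --- an honest equation of joint states, corresponding to $\Pr(x,y,z)=\Pr(x\mid z)\,\Pr(y\mid z)\,\Pr(z)$ --- for symmetry and decomposition, since those only require rearranging or deleting a wire in a genuine equality; and I will use the \emph{irrelevance form}~\ref{propen:alt} --- almost equality of disintegrations, $\Pr(x\mid y,z)\equpto\Pr(x\mid z)$ --- for weak union and contraction, where the conditioning set grows and the argument is a chaining of almost equalities.

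For symmetry~\ref{propen:symm}, the right-hand diagram of form~\ref{propen:factor} is invariant under interchanging the two extracted channels $Z\to X$ and $Z\to Y$, using only commutativity of the copier and the swap $X\otimes Y\cong Y\otimes X$; hence form~\ref{propen:factor} for $\cind{X}{Y}{Z}$ and for $\cind{Y}{X}{Z}$ are literally the same equation up to diagram isomorphism, and the equivalence \ref{propen:cind}$\Leftrightarrow$\ref{propen:factor} closes the case. For decomposition~\ref{propen:decomp}, I start from form~\ref{propen:factor} for $\cind{X}{Y\otimes Z}{W}$, where the joint on $X,Y,Z,W$ is copied from $W$ through the disintegrations $W\to X$ and $W\to Y\otimes Z$. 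Discarding the $Z$-wire on both sides keeps an honest equation; on the right, the channel $W\to Y\otimes Z$ with its $Z$-output discarded is, by Proposition~\ref{prop:disint-identitiy}\ref{propen:disint-marg}, almost equal (with respect to the marginal on $W$) to the disintegration $W\to Y$, and this substitution is legitimate because that wire is fed from the $W$-marginal through a copier. The result is exactly form~\ref{propen:factor} for $\cind{X}{Y}{W}$; discarding $Y$ instead yields $\cind{X}{Z}{W}$.

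For contraction~\ref{propen:contr} I read the hypotheses $\cind{X}{Z}{W}$ and $\cind{X}{Y}{Z\otimes W}$ through form~\ref{propen:alt} as $\Pr(x\mid z,w)\equpto\Pr(x\mid w)$ and $\Pr(x\mid y,z,w)\equpto\Pr(x\mid z,w)$; composing the two gives $\Pr(x\mid y,z,w)\equpto\Pr(x\mid w)$, which is form~\ref{propen:alt} for $\cind{X}{Y\otimes Z}{W}$. For weak union~\ref{propen:weakunion}, from $\cind{X}{Y\otimes Z}{W}$ I first extract $\cind{X}{Z}{W}$ by decomposition; the two irrelevance statements $\Pr(x\mid y,z,w)\equpto\Pr(x\mid w)$ and $\Pr(x\mid z,w)\equpto\Pr(x\mid w)$ then chain to $\Pr(x\mid y,z,w)\equpto\Pr(x\mid z,w)$, which is form~\ref{propen:alt} for $\cind{X}{Y}{Z\otimes W}$.

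The main obstacle is exactly the chaining in the last two cases: the almost equalities involved are taken with respect to \emph{different} implicit reference states --- the marginals on $W$, on $Z\otimes W$, and on $Y\otimes Z\otimes W$ in the convention of Definition~\ref{def:convention-almost-equal} --- so transitivity is not automatic. I must promote an equation of channels valid almost everywhere with respect to the $W$-marginal to one valid with respect to the larger $Z\otimes W$- and $Y\otimes Z\otimes W$-marginals, and it must survive precomposition with discarding the extra wires. This is precisely what \emph{equality strengthening} delivers: since the ambient category admits disintegration, Proposition~\ref{prop:disint-eq-str} upgrades each $\equpto$ to a strong almost equality, which by Definition~\ref{def:str-almost-eq} persists when plugged into any state whose relevant input marginal is the given one. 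Applying this to $\Pr(x\mid z,w)\equpto\Pr(x\mid w)$ lets me discard $Y$ and re-read it under the $Y\otimes Z\otimes W$-marginal, so that the displayed chains become genuine transitive composites over a common reference state. The only remaining bookkeeping --- that deleting an output wire of a disintegration again yields a disintegration --- is supplied by Proposition~\ref{prop:disint-identitiy}\ref{propen:disint-marg}.
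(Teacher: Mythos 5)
Your proof is correct, and for symmetry and decomposition it is exactly the paper's proof: both use the factorisation form~\ref{propen:factor} of Proposition~\ref{prop:equiv-cind}, discard a wire, and repair the truncated disintegration by Proposition~\ref{prop:disint-identitiy}.\ref{propen:disint-marg}, the substitution being licensed because the inputs are fed from the reference state through copiers. You also share the paper's skeleton for the last two axioms: extract $\cind{X}{Z}{W}$ by decomposition, then establish that under this hypothesis $\cind{X}{Y\otimes Z}{W}$ and $\cind{X}{Y}{Z\otimes W}$ are equivalent. The genuine difference lies in how that equivalence is proved. The paper works at the level of joint states: a chain of honest diagram equalities transforms the factorisation witnessing $\cind{X}{Y}{Z\otimes W}$ into the one witnessing $\cind{X}{Y\otimes Z}{W}$, and the hypothesis $\cind{X}{Z}{W}$, taken in form~\ref{propen:alt}, is substituted at a point of the diagram where the channel's inputs are connected through copiers to precisely its own reference state, the $(Z,W)$-marginal; hence plain almost equality, via the copier remark following Definition~\ref{def:almost-equal}, suffices, and equality strengthening is never invoked. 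You instead chain channel-level almost equalities, $\Pr(x\given y,z,w)\equpto\Pr(x\given z,w)\equpto\Pr(x\given w)$, which forces you to transport the first hypothesis from the $(Z,W)$-marginal to the $(Y,Z,W)$-marginal as reference state; you correctly see that plain almost equality does not survive this change (the $(Y,Z,W)$-marginal is not built from the $(Z,W)$-marginal by copying --- only its marginal is the $(Z,W)$-marginal), and that strong almost equality --- available by Proposition~\ref{prop:disint-eq-str}, since this section assumes disintegration --- is exactly the licence needed, after which transitivity of $\equpto$ over the common reference state finishes both axioms. Your route buys modularity and a line-by-line match with the classical manipulation of conditionals; the paper's route buys economy of tools, its graphoid proof using nothing beyond the definition of almost equality, the comonoid axioms and the defining equations of disintegrations (equality strengthening appears in the paper only inside the proof of Proposition~\ref{prop:disint-identitiy}.\ref{propen:comp-disintegration}, which the graphoid proof never uses).
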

\begin{proof}\tikzextname{prf_graphoid}
We will freely use Proposition~\ref{prop:equiv-cind}.

(\ref{propen:symm})
Suppose $\cind{X}{Y}{Z}$. Then
\[
\begin{ctikzpicture}[font=\small]
\node[state] (omega) at (0,0) {\;\;\;\;\;\;};
\draw (omega) ++(-0.4, 0) to +(0,0.3);
\draw (omega) to +(0,0.3);
\draw (omega) ++(0.4, 0) to +(0,0.3);
\tikzset{font=\normalsize,
execute at begin node=\everymath{\scriptstyle}}
\node at (-0.4,0.5) {$Y$};
\node at (0,0.5) {$X$};
\node at (0.4,0.5) {$Z$};
\end{ctikzpicture}
\;\;=\;\;
\begin{ctikzpicture}[font=\small]
\node[state] (omega) at (0,0) {\;\;\;\;\;\;};
\draw (omega) ++(-0.4, 0) to[out=90,in=-90] +(0.4,0.5);
\draw (omega) to[out=90,in=-90] +(-0.4,0.5);
\draw (omega) ++(0.4, 0) to +(0,0.5);
\path[font=\normalsize,
execute at begin node=\everymath{\scriptstyle}]
node at (0,0.7) {$X$}
node at (-0.4,0.7) {$Y$}
node at (0.4,0.7) {$Z$};
\end{ctikzpicture}
\;\;\overset{(\cind{X}{Y}{Z})}{=}\;\;
\begin{ctikzpicture}[font=\small]
\node[state] (omega) at (0,0) {};
\node[copier] (c) at (0,0.2) {};
\node[arrow box] (a) at (-0.7,0.8) {};
\node[arrow box] (b) at (0,0.8) {};
\draw (omega) to (c);
\draw (c) to[out=165,in=-90] (a);
\draw (a.north) to[out=90,in=-90] (0,1.5);
\draw (c) to (b);
\draw (b.north) to[out=90,in=-90] (-0.7,1.5);
\draw (c) to[out=15,in=-90] (0.7,0.6)
to (0.7,1.5);
\path[font=\normalsize,
execute at begin node=\everymath{\scriptstyle}]
node at (-0.7,1.7) {$Y$}
node at (0,1.7) {$X$}
node at (0.7,1.7) {$Z$};
\end{ctikzpicture}
\;\;=\;\;
\begin{ctikzpicture}[font=\small]
\node[state] (omega) at (0,0) {};
\node[copier] (c) at (0,0.2) {};
\node[arrow box] (a) at (-0.7,0.8) {};
\node[arrow box] (b) at (0,0.8) {};
\draw (omega) to (c);
\draw (c) to[out=165,in=-90] (a);
\draw (a) to (-0.7,1.3);
\draw (c) to (b);
\draw (b) to (0,1.3);
\draw (c) to[out=15,in=-90] (0.7,0.6)
to (0.7,1.3);
\path[font=\normalsize,
execute at begin node=\everymath{\scriptstyle}]
node at (-0.7,1.5) {$Y$}
node at (0,1.5) {$X$}
node at (0.7,1.5) {$Z$};
\end{ctikzpicture}
\enspace.
\]
This means $\cind{Y}{X}{Z}$.

(\ref{propen:decomp})
Suppose $\cind{X}{Y\otimes Z}{W}$, namely:
\[
\begin{ctikzpicture}[font=\small]
\node[state] (omega) at (0,0) {\;\;\;\;\;\;};
\draw (omega) ++(-0.45, 0) to +(0,0.3);
\draw (omega) ++(-0.15, 0) to +(0,0.3);
\draw (omega) ++(0.15, 0) to +(0,0.3);
\draw (omega) ++(0.45, 0) to +(0,0.3);
\path[font=\normalsize,
execute at begin node=\everymath{\scriptstyle}]
node at (-0.45,0.45) {$X$}
node at (-0.15,0.45) {$Y$}
node at (0.15,0.45) {$Z$}
node at (0.45,0.45) {$W$};
\end{ctikzpicture}
\;\;=\;\;
\begin{ctikzpicture}[font=\small]
\node[state] (omega) at (0,0) {};
\node[copier] (c) at (0,0.2) {};
\node[arrow box] (a) at (-0.75,0.8) {};
\node[arrow box] (b) at (0,0.8) {\;\;\;\;\;};
\draw (omega) to (c);
\draw (c) to[out=165,in=-90] (a);
\draw (a) to (-0.75,1.3);
\draw (c) to (b);
\draw ([xshiftu=-0.2]b.north) to (-0.2,1.3);
\draw ([xshiftu=0.2]b.north) to (0.2,1.3);
\draw (c) to[out=15,in=-90] (0.6,0.55)
to (0.6,1.3);
\path[font=\normalsize,
execute at begin node=\everymath{\scriptstyle}]
node at (-0.75,1.45) {$X$}
node at (-0.2,1.45) {$Y$}
node at (0.2,1.45) {$Z$}
node at (0.6,1.45) {$W$};
\end{ctikzpicture}
\]
Marginalising $Z$, we obtain
\[
\begin{ctikzpicture}[font=\small]
\node[state] (omega) at (0,0) {\;\;\;\;\;\;};
\draw (omega) ++(-0.4, 0) to +(0,0.3);
\draw (omega) to +(0,0.3);
\draw (omega) ++(0.4, 0) to +(0,0.3);
\path[font=\normalsize,
execute at begin node=\everymath{\scriptstyle}]
node at (-0.4,0.5) {$X$}
node at (0,0.5) {$Y$}
node at (0.4,0.5) {$W$};
\end{ctikzpicture}
\;\;=\;\;
\begin{ctikzpicture}[font=\small]
\node[state] (omega) at (0,0) {\;\;\;\;\;\;};
\node[discarder] (d) at (0.15,0.3) {};
\draw (omega) ++(-0.45, 0) to +(0,0.55);
\draw (omega) ++(-0.15, 0) to +(0,0.55);
\draw (omega) ++(0.15, 0) to (d);
\draw (omega) ++(0.45, 0) to +(0,0.55);
\path[font=\normalsize,
execute at begin node=\everymath{\scriptstyle}]
node at (-0.45,0.7) {$X$}
node at (-0.15,0.7) {$Y$}
node at (0.25,0.15) {$Z$}
node at (0.45,0.7) {$W$};
\end{ctikzpicture}
\;\;=\;\;
\begin{ctikzpicture}[font=\small]
\node[state] (omega) at (0,0) {};
\node[copier] (c) at (0,0.2) {};
\node[arrow box] (a) at (-0.75,0.8) {};
\node[arrow box] (b) at (0,0.8) {\;\;\;\;\;};
\node[discarder] (d) at (0.2,1.35) {};
\draw (omega) to (c);
\draw (c) to[out=165,in=-90] (a);
\draw (a) to (-0.75,1.5);
\draw (c) to (b);
\draw ([xshiftu=-0.2]b.north) to (-0.2,1.5);
\draw ([xshiftu=0.2]b.north) to (d);
\draw (c) to[out=15,in=-90] (0.6,0.55)
to (0.6,1.5);
\path[font=\normalsize,
execute at begin node=\everymath{\scriptstyle}]
node at (-0.9,1.4) {$X$}
node at (-0.35,1.4) {$Y$}
node at (0.35,1.2) {$Z$}
node at (0.75,1.4) {$W$};
\end{ctikzpicture}
\;\;=\;\;
\begin{ctikzpicture}[font=\small]
\node[state] (omega) at (0,0) {};
\node[copier] (c) at (0,0.2) {};
\node[arrow box] (a) at (-0.7,0.8) {};
\node[arrow box] (b) at (0,0.8) {};
\draw (omega) to (c);
\draw (c) to[out=165,in=-90] (a);
\draw (a) to (-0.7,1.2);
\draw (c) to (b);
\draw (b) to (0,1.2);
\draw (c) to[out=15,in=-90] (0.7,0.6)
to (0.7,1.2);
\path[font=\normalsize,
execute at begin node=\everymath{\scriptstyle}]
node at (-0.7,1.35) {$X$}
node at (0,1.35) {$Y$}
node at (0.7,1.35) {$W$};
\end{ctikzpicture}
\quad,
\]
by Proposition~\ref{prop:disint-identitiy}.\ref{propen:disint-marg}.
Thus $\cind{X}{Y}{W}$.
Similarly we prove $\cind{X}{Z}{W}$.

Finally, we prove \ref{propen:weakunion} and \ref{propen:contr} at the same time.
Note that $\cind{X}{Y\otimes Z}{W}$ implies $\cind{X}{Z}{W}$, as shown above.
Therefore what we need to prove is that
$\cind{X}{Y\otimes Z}{W}$ if and only if $\cind{X}{Y}{Z\otimes W}$,
under $\cind{X}{Z}{W}$.
Assume $\cind{X}{Z}{W}$, so we have
\[
\begin{ctikzpicture}[font=\small]
\node[arrow box] (a) at (0,0.5) {\;\;\;\;\;\;\;};
\draw (a) to (0,1);
\draw (0.25,0) to ([xshiftu=0.25]a.south);
\draw (-0.25,0) to ([xshiftu=-0.25]a.south);
\path[font=\normalsize,
execute at begin node=\everymath{\scriptstyle}]
node at (0,1.2) {$X$}
node at (-0.25,-0.2) {$Z$}
node at (0.25,-0.2) {$W$};
\end{ctikzpicture}
\;\;\equpto\;\;
\begin{ctikzpicture}[font=\small]
\node[arrow box] (a) at (0,0.5) {};
\node[discarder] (d) at (-0.6,0.4) {};
\draw (a) to (0,1);
\draw (0,0) to (a);
\draw (-0.6,0) to (d);
\path[font=\normalsize,
execute at begin node=\everymath{\scriptstyle}]
node at (0,1.2) {$X$}
node at (-0.6,-0.2) {$Z$}
node at (0,-0.2) {$W$};
\end{ctikzpicture}
\]
Then
\begin{align*}
\begin{ctikzpicture}[font=\small]
\node[state] (omega) at (0,0) {\;\;\;\;};
\node[copier] (c) at (-0.2,0.15) {};
\node[copier] (c2) at (0.2,0.15) {};
\node[arrow box] (a) at (-0.8,1) {\;\;\;\;\;};
\node[arrow box] (b) at (0,1) {\;\;\;\;\;};
\draw (omega) ++ (-0.2,0) to (c);
\draw (c) to[out=165,in=-90] ([xshiftu=-0.2]a.south);
\draw (a) to (-0.8,1.4);
\draw (c) to ([xshiftu=-0.2]b.south);
\draw (b) to (0,1.4);
\draw (c) to[out=15,in=-90] (0.5,0.8)
to (0.5,1.4);
\draw (omega) ++ (0.2,0) to (c2);
\draw (c2) to[out=165,in=-90] ([xshiftu=0.2]a.south);
\draw (c2) to ([xshiftu=0.2]b.south);
\draw (c2) to[out=15,in=-90] (0.9,0.8)
to (0.9,1.4);
\path[font=\normalsize,
execute at begin node=\everymath{\scriptstyle}]
node at (-0.8,1.55) {$X$}
node at (0,1.55) {$Y$}
node at (0.5,1.55) {$Z$}
node at (0.9,1.55) {$W$};
\end{ctikzpicture}
&\;\;=\;\;
\begin{ctikzpicture}[font=\small]
\node[state] (omega) at (0,0) {\;\;\;\;};
\node[copier] (c) at (-0.2,0.15) {};
\node[copier] (c2) at (0.2,0.15) {};
\node[arrow box] (a) at (-0.7,1) {};
\node[arrow box] (b) at (0,1) {\;\;\;\;\;};
\node[discarder] (d) at (-1.25,0.85) {};
\draw (omega) ++ (-0.2,0) to (c);
\draw (c) to[out=165,in=-90] (d);
\draw (a) to (-0.7,1.4);
\draw (c) to ([xshiftu=-0.2]b.south);
\draw (b) to (0,1.4);
\draw (c) to[out=15,in=-90] (0.5,0.8)
to (0.5,1.4);
\draw (omega) ++ (0.2,0) to (c2);
\draw (c2) to[out=165,in=-90] (a.south);
\draw (c2) to ([xshiftu=0.2]b.south);
\draw (c2) to[out=15,in=-90] (0.9,0.8)
to (0.9,1.4);
\path[font=\normalsize,
execute at begin node=\everymath{\scriptstyle}]
node at (-0.7,1.55) {$X$}
node at (0,1.55) {$Y$}
node at (0.5,1.55) {$Z$}
node at (0.9,1.55) {$W$};
\end{ctikzpicture}
\;\;=\;\;
\begin{ctikzpicture}[font=\small]
\node[state] (omega) at (0,0) {\;\;\;\;};
\node[copier] (c) at (-0.2,0.15) {};
\node[copier] (c2) at (0.2,0.15) {};
\node[arrow box] (a) at (-0.9,1.5) {};
\node[arrow box] (b) at (-0.35,0.65) {\;\;\;\;\;};
\node[copier] (c3) at (0.45,0.95) {};
\draw (omega) ++ (-0.2,0) to (c);
\draw (a) to (-0.9,1.85);
\draw (c) to[out=165,in=-90] ([xshiftu=-0.2]b.south);
\draw (b) to (-0.35,1.85);
\draw (c) to[out=15,in=-90] (0.15,0.45)
to (0.15,1.85);
\draw (omega) ++ (0.2,0) to (c2);
\draw (c2) to[out=165,in=-90] ([xshiftu=0.2]b.south);
\draw (c2) to[out=15,in=-90] (0.45,0.45)
to (c3);
\draw (c3) .. controls (-0.5,1.1) and (-0.9,1.1) .. (a.south);
\draw (c3) to[out=60,in=-90] (0.65,1.85);
\path[font=\normalsize,
execute at begin node=\everymath{\scriptstyle}]
node at (-0.9,2) {$X$}
node at (-0.35,2) {$Y$}
node at (0.15,2) {$Z$}
node at (0.65,2) {$W$};
\end{ctikzpicture}
\\
&\;\;=\;\;
\begin{ctikzpicture}[font=\small]
\node[state] (omega) at (0,0) {\;\;\;\;\;\;};
\node[arrow box] (a) at (-0.9,0.85) {};
\node[copier] (c3) at (0.4,0.25) {};
\draw (omega) ++ (-0.4,0) to (-0.4,1.2);
\draw (omega) to (0,1.2);
\draw (omega) ++ (0.4,0) to (c3);
\draw (a) to (-0.9,1.2);
\draw (c3) .. controls (-0.65,0.4) and (-0.9,0.4) .. (a.south);
\draw (c3) to[out=30,in=-90] (0.55,0.55) to (0.55,1.2);
\path[font=\normalsize,
execute at begin node=\everymath{\scriptstyle}]
node at (-0.9,1.35) {$X$}
node at (-0.4,1.35) {$Y$}
node at (0,1.35) {$Z$}
node at (0.55,1.35) {$W$};
\end{ctikzpicture}
\;\;=\;\;
\begin{ctikzpicture}[font=\small]
\node[state] (omega) at (0.1,-0.8) {};
\node[arrow box] (b) at (-0.2,-0.1) {\;\;\;\;\;\;};
\node[arrow box] (a) at (-0.9,0.85) {};
\node[copier] (c3) at (0.4,0.25) {};
\node[copier] (c) at (0.1,-0.6) {};
\draw (omega) to (c);
\draw (c) to[out=165,in=-90] (b);
\draw (c) to[out=15,in=-90] (0.4,-0.3) to (c3);
\draw (a) to (-0.9,1.2);
\draw (c3) .. controls (-0.65,0.4) and (-0.9,0.4) .. (a.south);
\draw (c3) to[out=30,in=-90] (0.55,0.55) to (0.55,1.2);
\draw (b.north) ++(-0.2,0) to (-0.4,1.2);
\draw (b.north) ++(0.2,0) to (0,1.2);
\path[font=\normalsize,
execute at begin node=\everymath{\scriptstyle}]
node at (-0.9,1.35) {$X$}
node at (-0.4,1.35) {$Y$}
node at (0,1.35) {$Z$}
node at (0.55,1.35) {$W$};
\end{ctikzpicture}
\;\;=\;\;
\begin{ctikzpicture}[font=\small]
\node[state] (omega) at (0,0) {};
\node[copier] (c) at (0,0.2) {};
\node[arrow box] (a) at (-0.75,0.8) {};
\node[arrow box] (b) at (0,0.8) {\;\;\;\;\;};
\draw (omega) to (c);
\draw (c) to[out=165,in=-90] (a);
\draw (a) to (-0.75,1.3);
\draw (c) to (b);
\draw ([xshiftu=-0.2]b.north) to (-0.2,1.3);
\draw ([xshiftu=0.2]b.north) to (0.2,1.3);
\draw (c) to[out=15,in=-90] (0.6,0.55)
to (0.6,1.3);
\path[font=\normalsize,
execute at begin node=\everymath{\scriptstyle}]
node at (-0.75,1.45) {$X$}
node at (-0.2,1.45) {$Y$}
node at (0.2,1.45) {$Z$}
node at (0.6,1.45) {$W$};
\end{ctikzpicture}
\end{align*}
This proves $\cind{X}{Y\otimes Z}{W}$ if and only if $\cind{X}{Y}{Z\otimes W}$.
\end{proof}

The four properties from the graphoid axioms
are essential in reasoning of conditional independence
with DAGs or Bayesian networks~\cite{VermaP1988,GeigerVP1990}.

Conditional independence has been studied categorically
in~\cite{Simpson2017}. There, a categorical notion of
\emph{(conditional) independence structure} is introduced,
generalising algebraic axiomatisations of conditional independence,
such as with graphoids and separoids~\cite{Dawid2001}.  We leave it to
future work to precisely relate our approach to conditional
probability to Simpson's categorical framework.

\section{Beyond causal channels}\label{sec:beyondcausal}
\tikzextname{s_beyond_causal}

All CD-categories $\catC$ that we have considered so far are affine in
the sense that all arrows $f\colon X\to Y$ are causal: $\ground\circ
f=\ground$.  We now drop the affineness, in order to enlarge our
category to include `non-causal' arrows,
which enables us to have new notions such as scalars and effects.
Essentially, we lose nothing
by this change: all the arguments so far can still be applied to the
subcategory $\Caus(\catC)\subseteq\catC$ containing all the objects
and causal arrows.  The category $\Caus(\catC)$ is an affine CD-category,
inheriting the monoidal
structure $(\otimes,I)$
and the comonoid structures $(\copier,\ground)$
from $\catC$.

Recall that \emph{channels} in $\catC$
are causal arrows, \textit{i.e.}\ arrows in $\Caus(\catC)$.
\emph{States} are channels of the form $\sigma\colon I\to X$.
We call endomaps $I\to I$ on the tensor
unit \emph{scalars}.  The set $\catC(I,I)$ of scalars forms a monoid
via the composition $s\cdot t=s\circ t$ and $1=\id_I$.  The
monoid of scalars is always commutative
--- in fact, this is the case for
any monoidal category, see \textit{e.g.}~\cite[\S3.2]{AbramskyC2009}.
In string diagram scalars are written as
$\vcenter{\hbox{\begin{tikzpicture}[font=\small] \node[scalar] (s) at
      (-1,0.6) {$s$};
\end{tikzpicture}}}$ or simply as $s$.
We can multiply scalars $s$ to any arrows $f\colon X\to Y$
by the parallel composition, or
diagrammatically by juxtaposition:
\[
\vcenter{\hbox{%
\begin{tikzpicture}[font=\small]
\node[arrow box] (c) at (0,0) {$f$};
\node[scalar] at (-0.6,0) {$s$};
\draw (c) -- (0,0.7);
\draw (c) -- (0,-0.7);
\end{tikzpicture}}}
\]

We call an arrow $\sigma\colon I\to X$ is \emph{normalisable} if the
scalar $\ground \circ \sigma\colon I\to I$ is (multiplicatively)
invertible.  In that case we can normalise $\sigma$ into a proper
state as follows.
\[
\nrm(\sigma)\coloneqq
\;\;
\vcenter{\hbox{%
\begin{tikzpicture}[font=\small]
\node[state] (s) at (-1.45,0) {$\sigma$};
\node[discarder] (d) at ([yshiftu=0.2]s) {};
\node[state] (s2) at (0,0) {$\sigma$};
\draw (s) -- (d);
\draw (s2) -- +(0,0.5);
\path[font=\normalsize,text height=1.5ex,text depth=0.25ex]
(-2,-0.1) node {$\biggl($}
(-0.75,-0.1) node {$\biggr)^{\!\!-1}$};
\end{tikzpicture}}}
\]

\emph{Effects} in $\catC$ are arrows of the form $p\colon X\to I$;
they correspond to observables, with predicates as special case.
Diagrammatically they are written as on the left below.
\[
\vcenter{\hbox{%
\begin{tikzpicture}[font=\small]
\node[state,hflip] (p) at (0,0) {$p$};
\coordinate (X) at (0,-0.4);
\draw (X) to (p);
\end{tikzpicture}}}
\qquad\qquad\qquad\qquad
\sigma\models p
\;\coloneqq\;\;
\vcenter{\hbox{%
\begin{tikzpicture}[font=\small]
\node[state] (s) at (0,0) {$\sigma$};
\node[state,hflip] (p) at (0,0.2) {$p$};
\draw (s) to (p);
\end{tikzpicture}}}
\]
On the right
the \emph{validity} $\sigma\models p$
of a state $\sigma\colon I\to X$ and a effect $p\colon X\to I$
is defined. It is the scalar given by composition.
Note that effects are not causal in general;
by definition, only discarders $\ground$ are causal ones.
States $\sigma\colon I\to X$ can be \emph{conditioned}
by effects $p\colon X\to I$ via normalisation, as follows.
\[
\sigma|_{p}
\coloneqq
\nrm
\Biggl(\;
\vcenter{\hbox{%
\begin{tikzpicture}[font=\small]
\node[state] (omega) at (0,0) {$\sigma$};
\node[copier] (copier) at (0,0.3) {};
\node[state,hflip] (p) at (-0.4,0.6) {$p$};
\coordinate (X) at (0.4,1.3) {};
\draw (omega) to (copier);
\draw (copier) to[out=165,in=-90] (p);
\draw (copier) to[out=30,in=-90] (X);
\end{tikzpicture}}}
\;\;\Biggr)
\;\;=\;\;
\vcenter{\hbox{%
\begin{tikzpicture}[font=\small]
\node[state] (s) at (-1.82,0.19) {$\sigma$};
\node[state,hflip] (p) at ([yshiftu=0.2]s) {$p$};
\draw (s) -- (p);
\path[font=\normalsize,text height=1.5ex,text depth=0.25ex]
(-2.37,0.29) node {$\Biggl($}
(-1.12,0.29) node {$\Biggr)^{\!\!-1}$};
\node[state] (omega) at (0,0) {$\sigma$};
\node[copier] (copier) at (0,0.3) {};
\node[state,hflip] (p) at (-0.4,0.6) {$p$};
\coordinate (X) at (0.4,1.3) {};
\draw (omega) to (copier);
\draw (copier) to[out=165,in=-90] (p);
\draw (copier) to[out=30,in=-90] (X);
\end{tikzpicture}}}
\]
The conditional state $\sigma|_p$ is defined if
the validity $\sigma\models p$ is invertible.
Conditioning $\sigma|_p$ generalises
conditional probability $\Pr(A|B)$
given \emph{event} $B$,
see Example~\ref{ex:CD-cat} below.
At the end of the section
we will explain how conditioning
and disintegration are related.

Recall, from Examples~\ref{ex:KlD} and~\ref{ex:KlG},
that our previous examples $\Kl(\Dst)$ and $\Kl(\Giry)$
are both affine.
We give two non-affine CD-categories that have
$\Kl(\Dst)$ and $\Kl(\Giry)$ as subcategories, respectively.

\begin{example}
For discrete probability,
we use
\emph{multisets} (or \emph{unnormalised distributions}) over
nonnegative real numbers $\pRR=[0,\infty)$,
such as
\[
1\ket{x}+ 0.5\ket{y} + 3\ket{z}
\qquad
\text{on a set}
\;
X=\{x,y,z,\dotsc\}
\]
We denote by $\Mlt(X)$
the set of multisets over $\pRR$ on $X$.
More formally:
\[
\Mlt(X)
=\set{
\phi\colon X\to \pRR
| \text{$\phi$ has finite support}}
\enspace.
\]
It extends to a commutative monad $\Mlt\colon \Set\to\Set$,
see~\cite{CoumansJ13}.
In a similar way to the distribution monad $\Dst$,
we can check that the Kleisli category $\Kl(\Mlt)$ is a CD-category.
For a Kleisli map $f\colon X\to\Mlt(Y)$,
causality $\ground\circ f=\ground$ amounts to
the condition $\sum_y f(x)(y)=1$ for all $x\in X$.
It is thus easy to see that $\Caus(\Kl(\Mlt))\cong \Kl(\Dst)$.
In fact, the distribution monad $\Dst$ can be obtained
from $\Mlt$ as its \emph{affine submonad}, see~\cite{Jacobs17}.

An effect $p\colon X \rightarrow 1$ in $\Kl(\Mlt)$ is a function $p
\colon X \rightarrow \pRR$. Its validity $\sigma\models p$ in a state
$\omega$ is given by the expected value $\sum_{x}\sigma(x)\cdot
p(x)$. The state $\sigma|_{p}$ updated with `evidence' $p$ is defined
as $\sigma|_{p}(x) = \frac{\sigma(x)\cdot p(x)}{\sigma\models p}$.
\end{example}

\begin{example}
\label{ex:CD-cat}
For general, measure-theoretic probability,
we use \emph{s-finite kernels} between measurable spaces
\cite{Kallenberg2017,Staton2017}.
Let $X$ and $Y$ be measurable spaces.
A function $f\colon X\times \Sigma_Y\to[0,\infty]$
is called a \emph{kernel} from $X$ to $Y$ if
\begin{itemize}
\item $f(x,-)\colon \Sigma_Y\to [0,\infty]$ is a measure for each $X$; and
\item $f(-,B)\colon X\to [0,\infty]$ is measurable\footnote{%
The $\sigma$-algebra on $[0,\infty]$
is the standard one generated by
$\set{\infty}$ and measurable subsets of $\pRR$.
} for each $B\in\Sigma_Y$.
\end{itemize}
We write $f\colon X\krnto Y$ when $f$ is a kernel from $X$ to $Y$.
A \emph{probability kernel} is a kernel $f\colon X\krnto Y$
with $f(x,Y)=1$ for all $x\in X$.
A kernel $f\colon X\krnto Y$ is \emph{finite} if
there exists $r\in[0,\infty)$ such that for all $x\in X$, $f(x,Y)\le r$.
(Note that it must be `uniformly' finite.)
A kernel $f\colon X\krnto Y$ is \emph{s-finite} if
$f=\sum_n f_n$ for some countable family $(f_n\colon X\to Y)_{n\in\NN}$ of finite kernels.

For two s-finite kernels $f\colon X\krnto Y$
and $g\colon Y\krnto Z$, we define the (sequential) composite
$g\circ f\colon X\krnto Z$
by
\[
(g\circ f)(x,C)=\int_Y g(y,C) \, f(x,\dd y)
\]
for $x\in X$ and $C\in\Sigma_Z$.
There are identity kernels $\eta_X\colon X\krnto X$
given by $\eta_X(x,A)=\indic{A}(x)$.
With these data, measurable spaces and s-finite kernels form a category,
which we denote by $\sfKrn$.
There is a monoidal structure on $\sfKrn$.
For measurable spaces $X,Y$ we define
the tensor product $X\otimes Y=X\times Y$ to be the cartesian product of measurable spaces.
The tensor unit $I=1$ is the singleton space.
For s-finite kernels $f\colon X\krnto Y$ and $g\colon Z\krnto W$,
we define $f\otimes g\colon X\times Z\krnto Y\times W$ by
\begin{align*}
(f\otimes g)((x,z),E)
&=
\int_{Y}
\paren[\Big]{
\int_{W}
\indic{E}(y,w)
\,g(z,\dd w)
}
f(x,\dd y)
\\
&=
\int_{W}
\paren[\Big]{
\int_{Y}
\indic{E}(y,w)
\,f(x,\dd y)
}
g(z,\dd w)
\end{align*}
for $x\in X, z\in Z, E\in\Sigma_{Y\times W}$.  The latter equality
holds by the Fubini-Tonelli theorem for s-finite measures.  These make
the category $\sfKrn$ symmetric monoidal.  Finally, for each
measurable space $X$ there is a `copier' $\copier\colon X\krnto
X\times X$ and a `discarder' $\ground\colon X\krnto 1$, given by
$\copier(x, E)=\indic{E}(x,x)$ and $\ground(x, 1)=1$,
so that $\sfKrn$ is a CD-category.
For more technical details we refer to \cite{Kallenberg2017,Staton2017}.

Note that an s-finite kernel $f\colon X\krnto Y$ is causal
if and only if it is a probability kernel,
which is nothing but a Kleisli map $X\to \Giry(Y)$ for the Giry monad.
Therefore the causal subcategory of $\sfKrn$ is the Kleisli category
of the Giry monad: $\Caus(\sfKrn)\cong \Kl(\Giry)$.
In particular, states in $\sfKrn$ are probability measures $\sigma\in\Giry(X)$.

An effect $p\colon X\krnto 1$ in $\sfKrn$,
\textit{i.e.}\ an s-finite kernel $p\colon X\times \Sigma_1\to [0,\infty]$,
can be identified with a measurable function $p\colon X\to [0,\infty]$.
The validity $\sigma\models p$ is then the integral $\int_X p(x)\,\sigma(\dd x)$,
defined in $[0,\infty]$.
The conditional state $\sigma|_{p}\in\Giry(X)$ is defined by:
\[
\sigma|_{p}(A) = \frac{\int_A p(x)\,\sigma(\dd x)}{\sigma\models p}
\]
for $A\in\Sigma_X$, when
the validity $\sigma\models p$ is neither $0$ nor $\infty$.
In particular, for any `event' $B\in \Sigma_X$,
the obvious
indicator function $\indic{B}\colon X\to [0,\infty]$
is an effect.
Then conditioning yields a new state
on $X$:
\[
\sigma|_{\indic{B}}(A)
= \frac{\int_A \indic{B}(x)\,\sigma(\dd x)}{\sigma\models \indic{B}}
= \frac{\sigma(A\cap B)}{\sigma(B)}
\quad
\text{for $A\in\Sigma_X$.}
\]
This amounts to
conditional probability $\Pr(A|B)=\Pr(A,B)/\Pr(B)$
given event $B$.
\end{example}

We need to generalise definitions
from the previous sections
in the non-affine/causal setting.
Here we make only a minimal generalisation
that is required in the next section.
For example, we still restrict ourselves to
disintegration of (causal) states,
although in the literature,
the notion of disintegration exists
also for non-probability (i.e.\ non-causal)
measures and even for non-finite measures,
see e.g.\ \cite[Definition~1]{ChangP1997}.
We leave such generalisations to future work.

Let $\sigma\colon I\to X$ be a state.
We define $\sigma$-almost equality
$f\equpto[\sigma]g$ between
arbitrary arrows $f,g\colon X\to Y$
in the same way as Definition~\ref{def:almost-equal}.
Similarly,
strong $\sigma$-almost equality
between arbitrary arrows
is defined as in Definition~\ref{def:str-almost-eq}
(here $\omega$ still ranges over states).

Disintegrations of a joint state $\omega\colon I\to X\otimes Y$
are defined as in Definition~\ref{def:disintegration},
except that an arrow $c_1\colon X\to Y$ (or $c_2\colon Y\to X$)
is not necessarily causal.
Nevertheless,
disintegrations of a state
are \emph{almost causal} in the following sense.

\begin{definition}
Let $\sigma\colon I\to X$ be a state.  We say that an arrow $c\colon
X\to Y$ is \emph{$\sigma$-almost causal} if $\ground\circ
c\equpto[\sigma]\ground$.
\end{definition}

\noindent
Then the following is immediate from the definition.

\begin{proposition}
Let $c_1\colon X\to Y$ be a disintegration
of a state $\omega\colon I\to X\otimes Y$.
Then $c_1$ is $\omega_1$-almost causal,
where $\omega_1\colon I\to X$ is
the first marginal of $\omega$.
\qed
\end{proposition}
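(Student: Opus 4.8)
The plan is to read the conclusion off the defining equation of disintegration by capping the $Y$-output with a discarder. Since $c_1$ is a disintegration of $\omega$ with first marginal $\omega_1$, equation~\eqref{eqn:disintegrations} gives $\omega = (\id_X \otimes c_1)\circ\copier_X\circ\omega_1$. First I would post-compose both sides with $\id_X \otimes \ground_Y$, that is, discard the $Y$-wire. On the left this turns $\omega$ into its first marginal, which is $\omega_1$; on the right the discarder slides up the wire and merges with $c_1$ by functoriality of $\otimes$, since $(\id_X\otimes\ground_Y)\circ(\id_X\otimes c_1) = \id_X\otimes(\ground_Y\circ c_1)$. This yields the identity $(\id_X \otimes (\ground_Y\circ c_1))\circ\copier_X\circ\omega_1 = \omega_1$.

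Next I would invoke the comonoid counit law $(\id_X\otimes\ground_X)\circ\copier_X = \id_X$, which gives $(\id_X\otimes\ground_X)\circ\copier_X\circ\omega_1 = \omega_1$ as well. Comparing the two displayed identities, the states $(\id_X \otimes (\ground_Y\circ c_1))\circ\copier_X\circ\omega_1$ and $(\id_X\otimes\ground_X)\circ\copier_X\circ\omega_1$ both equal $\omega_1$, hence coincide. By Definition~\ref{def:almost-equal}, applied to the two parallel arrows $\ground_Y\circ c_1\colon X\to I$ and $\ground_X\colon X\to I$ (whose common output type $I$ is absorbed via $X\otimes I\cong X$), this is exactly the statement $\ground_Y\circ c_1 \equpto[\omega_1] \ground_X$, i.e.\ that $c_1$ is $\omega_1$-almost causal.

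There is essentially no obstacle here; the result is immediate once the diagrams are read correctly, as the text already advertises. The only point requiring a little care is the bookkeeping of which wire is discarded and which marginal results: one must check that discarding the $Y$-output of $\omega$ really produces the first marginal $\omega_1$, the very state along which the disintegration was taken. Graphically this is precisely the computation already carried out in the excerpt immediately after equation~\eqref{eqn:disintegrations}, where discarding the second wire of the integrated state was shown to return $\omega_1$; the present proposition merely repackages that observation as an almost-causality statement, so I would keep the proof to a single line of diagrammatic reasoning.
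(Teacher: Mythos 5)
Your proof is correct and is essentially the paper's own argument: the paper states this proposition with no proof at all, flagging it as ``immediate from the definition,'' and your computation---discard the $Y$-wire on both sides of the disintegration equation, use the counit law $(\id_X\otimes\ground_X)\circ\copier_X=\id_X$, and read the resulting equality of states as the almost-equality $\ground_Y\circ c_1 \equpto[\omega_1] \ground_X$---is exactly that intended unpacking. The only bookkeeping point, that your discarder lands on the second copy while Definition~\ref{def:almost-equal} draws it on the first, is handled by the commutativity of the copier, an equivalence the paper itself records right after that definition.
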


\begin{example}
In $\sfKrn$,
a s-finite kernel $f\colon X\krnto Y$
is $\sigma$-almost causal if and only if
$f(x,Y)=1$ for $\sigma$-almost all $x\in X$.
In that case, we can find
a probability kernel $f'\colon X\krnto Y$
such that $f\equpto[\sigma]f'$,
by tweaking $f$ in the obvious way.
From this it follows that
a disintegration of
a joint probability measure $\omega\in\Giry(X\times Y)$
exists in $\sfKrn$
if and only if it exists in $\Kl(\Giry)$.

By Proposition~\ref{prop:KlG-eq-str},
we can strengthen almost equality between
channels in $\sfKrn$.
It is easy to see that
equality strengthening is valid also for
almost causal maps:
if $f,g\colon X\krnto Y$
are $\sigma$-almost causal maps
with $f\equpto[\sigma] g$,
then $f,g$ are strongly $\sigma$-almost equal.
(It is not clear whether equality strengthening
is valid for arbitrary maps in $\sfKrn$,
but we will not need such a general result in this paper.)
\end{example}

In the remainder of the section,
we present a basic relationship
between disintegration
and conditioning~$\sigma|_p$,
introduced above.
We assume a
joint state $\omega$ with its two disintegrations $c_{1}$ and $c_{2}$ in:
\begin{equation}
\label{eqn:disintegrationforcrossover}
\vcenter{\hbox{%
\begin{tikzpicture}[font=\small]
\node[state] (omega) at (0.25,0) {$\;\omega\;$};
\node[copier] (copier) at (0,0.4) {};
\node[arrow box] (c) at (0.5,0.95) {$c_1$};
\coordinate (X) at (-0.5,1.5);
\coordinate (Y) at (0.5,1.5);
\coordinate (omega1) at ([xshiftu=-0.25]omega);
\coordinate (omega2) at ([xshiftu=0.25]omega);
\node[discarder] (d) at ([yshiftu=0.2]omega2) {};
\draw (omega1) to (copier);
\draw (omega2) to (d);
\draw (copier) to[out=150,in=-90] (X);
\draw (copier) to[out=15,in=-90] (c);
\draw (c) to (Y);
\path[scriptstyle]
node at (-0.65,1.5) {$X$}
node at (0.65,1.5) {$Y$};
\end{tikzpicture}}}
\qquad=\qquad
\vcenter{\hbox{%
\begin{tikzpicture}[font=\small]
\node[state] (omega) at (0,0) {\;$\omega$\;};
\coordinate (X) at (-0.25,0.55) {};
\coordinate (Y) at (0.25,0.55) {};
\draw (omega) ++(-0.25, 0) to (X);
\draw (omega) ++(0.25, 0) to (Y);
\path[scriptstyle]
node at (-0.4,0.55) {$X$}
node at (0.4,0.55) {$Y$};
\end{tikzpicture}}}
\qquad=\qquad
\vcenter{\hbox{%
\begin{tikzpicture}[font=\small]
\node[state] (omega) at (-0.25,0) {$\;\omega\;$};
\coordinate (omegaX) at ([xshiftu=-0.25]omega);
\coordinate (omegaY) at ([xshiftu=0.25]omega);
\node[discarder] (d) at ([yshiftu=0.2]omegaX) {};
\node[copier] (copier) at (0,0.4) {};
\node[arrow box] (c) at (-0.5,0.95) {$c_2$};
\coordinate (X) at (-0.5,1.5);
\coordinate (Y) at (0.5,1.5);
\draw (omegaX) to (d);
\draw (omegaY) to (copier);
\draw (copier) to[out=165,in=-90] (c);
\draw (copier) to[out=30,in=-90] (Y);
\draw (c) to (X);
\path[scriptstyle]
node at (-0.65,1.5) {$X$}
node at (0.65,1.5) {$Y$};
\end{tikzpicture}}}
\end{equation}

\noindent We write $\omega_{1}$ and $\omega_{2}$ for the first
and second marginals of $\omega$.
(Thus the equations~\eqref{eqn:disintegrationforcrossover} above
are the same as \eqref{eqn:disintegrations}.)

Let $q$ be an effect on $Y$. It can be extended to an effect
$\one\otimes q$ on $X\otimes Y$, where:
\[
\one\otimes q
\quad\coloneqq\quad
\vcenter{\hbox{%
\begin{tikzpicture}[font=\small]
\node[discarder] (d) at (-0.8,0) {};
\node[state,hflip] (q) at (0,0) {$q$};
\coordinate (X) at (-0.8,-0.3);
\coordinate (Y) at (0,-0.3);
\draw (X) to (d);
\draw (Y) to (q);
\path[scriptstyle]
node at (-0.65,-0.3) {$X$}
node at (0.15,-0.3) {$Y$};
\end{tikzpicture}}}
\]

\noindent Then we can form the conditioned state
$\omega|_{\one\otimes q}$. In a next step we take its first marginal,
written as $\big(\omega|_{\one\otimes q}\big)_{1}$. It turns out that,
in general, this first marginal is different from the original first
marginal $\omega_{1}$, even though the effect $q$ only applies to
the second coordinate. This is called `crossover influence'
in~\cite{JacobsZ17}.  It happens when the state $\omega$ is
`entwined', that is, when its two coordinates are correlated.

A fundamental result in this context is that this crossover influence
can also be captured via the channels $c_{1}, c_{2}$ that are
extracted from $\omega$ via disintegrations. This works via effect
transformation $c^*(p)\coloneqq p\circ c$ and
state transformation $c_{*}(\sigma)\coloneqq c\circ\sigma$
along a channel.

\begin{theorem}
\label{thm:crossover}
In the above setting, assuming that the relevant conditioned states exist,
there are equalities of states:
\begin{equation}
\label{eqn:crossover}
\begin{array}{rcccl}
\omega_{1}|_{c_{1}^{*}(q)}
& = &
\big(\omega|_{\one\otimes q}\big)_{1}
& = &
\big(c_{2}\big)_{*}(\omega_{2}|_{q}).
\end{array}
\end{equation}
\end{theorem}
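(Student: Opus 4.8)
The plan is to treat each of the three conditioned states as an unnormalised ``numerator'' scaled by an inverse validity scalar, and to show that the three numerators coincide while the three scalars are all equal to $\omega_2\models q$. Write $\rho\colon I\to X$ for the unnormalised state obtained from $\omega$ by applying $q$ to its $Y$-leg and retaining $X$, that is $\rho=(\id_X\otimes q)\circ\omega$. By definition of conditioning, $\sigma|_p$ equals $(\sigma\models p)^{-1}$ times the diagram in which $p$ is applied to one copy of $\sigma$; since scalars are central, this inverse scalar factors through any subsequent composition or marginalisation. Hence it suffices to prove the numerator identities and the scalar identities separately.

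First I would compute the numerator of $(\omega|_{\one\otimes q})_1$. Expanding the conditioning, this numerator is $\omega$ with both legs copied, $\one\otimes q=\ground_X\otimes q$ applied to one copy, and the output $Y$ discarded when taking the first marginal. Applying the comonoid laws (copy followed by discarding one branch is the identity) on both the $X$-leg and the $Y$-leg collapses this to exactly $\rho$. Next, the left-hand disintegration equation \eqref{eqn:disintegrations} for $c_1$ rewrites $\omega$ as $\omega_1$ with $X$ copied and one copy sent through $c_1$; substituting this into $\rho$ exhibits $\rho$ as $\omega_1$ with one copy fed through $q\circ c_1=c_1^*(q)$ and the other kept, which is the numerator of $\omega_1|_{c_1^*(q)}$. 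Symmetrically, the right-hand disintegration equation for $c_2$ rewrites $\omega$ as $\omega_2$ with $Y$ copied and one copy sent through $c_2$ to $X$; applying $q$ to the remaining $Y$-leg then shows $\rho$ to be $c_2$ post-composed with the numerator of $\omega_2|_q$, i.e.\ the numerator of $(c_2)_*(\omega_2|_q)$. Thus all three numerators equal $\rho$.

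For the scalars, the validity of $\omega$ against $\one\otimes q=\ground_X\otimes q$ is $\omega_2\models q$, since discarding $X$ from $\omega$ yields the second marginal $\omega_2$. The validity of $\omega_1$ against $c_1^*(q)=q\circ c_1$ is $(c_1)_*(\omega_1)\models q$, and $(c_1)_*(\omega_1)=\omega_2$ because $c_1$ is a disintegration of $\omega$ (discarding $X$ in the left diagram of \eqref{eqn:disintegrations}), so this validity is again $\omega_2\models q$. Finally $\omega_2|_q$ carries the normalising scalar $\omega_2\models q$, and applying the causal channel $c_2$ preserves normalisation (discarding after $c_2$ equals discarding), so $(c_2)_*(\omega_2|_q)$ is already a proper state with the same scalar. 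Combining the numerator and scalar identities then yields both equalities in \eqref{eqn:crossover}.

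The main obstacle, and the point to argue carefully, is the normalisation bookkeeping rather than the diagrammatics: I must justify that the inverse validity scalar commutes past the first-marginal projection in $(\omega|_{\one\otimes q})_1$ and past state transformation along $c_2$ in $(c_2)_*(\omega_2|_q)$. Both follow from centrality of scalars together with causality of the discarder and of $c_2$, and this is precisely where the standing assumption that the relevant validities are invertible is used. The underlying numerator equalities, by contrast, are pure comonoid-plus-disintegration rewrites and are routine once the two equations in \eqref{eqn:disintegrations} are substituted.
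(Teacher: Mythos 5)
Your proposal is correct and follows essentially the same route as the paper's own proof: the paper likewise reduces all three expressions to the common unnormalised state $(\id_X\otimes q)\circ\omega$ multiplied by the inverse of the common scalar $\omega_2\models q$, using the two disintegration equations and the comonoid laws for the numerators, and the identity $\omega_1\models c_1^*(q)=\omega_2\models q$ (the paper's equation~\eqref{eqn:crossover2}) for the scalars. The only difference is presentational — the paper carries the inverted scalar along inside its string diagrams rather than isolating the numerator/scalar bookkeeping as you do — so there is nothing substantive to add.
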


Following~\cite{JacobsZ16} we can say that the expression on the left
in~\eqref{eqn:crossover} uses \emph{backward} inference, and the one
on the right uses \emph{forward} inference.

\begin{proof}
We first note that the state in the middle of~\eqref{eqn:crossover}
is the first marginal of:
\[ \vcenter{\hbox{%
\begin{tikzpicture}[font=\small]
\node[state] (oma) at (-2,0) {$\hspace*{0.4em}\omega\hspace*{0.4em}$};
\coordinate (oma1) at ([xshiftu=-0.4]oma);
\coordinate (oma2) at ([xshiftu=0.4]oma);
\node[discarder] (1a) at ([yshiftu=0.2]oma1) {};
\node[state,hflip,scale=0.75] (qa) at ([yshiftu=0.2]oma2) {$q$};
\draw (oma1) -- (1a);
\draw (oma2) -- (qa);
\path[font=\normalsize,text height=1.5ex,text depth=0.25ex]
(-2.9,0) node {$\Biggl($}
(-1,0) node {$\Biggr)^{\!\!-1}$};
\node[state] (omb) at (0.5,0) {$\hspace*{0.5em}\omega\hspace*{0.5em}$};
\coordinate (ombX) at ([xshiftu=-0.5]omb);
\coordinate (ombY) at ([xshiftu=0.5]omb);
\node[copier] (copierX) at ([yshiftu=0.2]ombX) {};
\node[copier] (copierY) at ([yshiftu=0.2]ombY) {};
\node[discarder] (1b) at ([xshiftu=-0.3,yshiftu=0.2]copierX) {};
\node[state,hflip,scale=0.75] (qb) at ([xshiftu=-0.3,yshiftu=0.2]copierY) {$q$};
\coordinate (X) at ([xshiftu=0.3,yshiftu=1.0]copierX) {};
\coordinate (Y) at ([xshiftu=0.3,yshiftu=1.0]copierY) {};
\draw (ombX) to (copierX);
\draw (ombY) to (copierY);
\draw (copierX) to[out=165,in=-90] (1b);
\draw (copierX) to[out=30,in=-90] (X);
\draw (copierY) to[out=165,in=-90] (qb);
\draw (copierY) to[out=30,in=-90] (Y);
\end{tikzpicture}}}
\]

\noindent Hence:
\begin{equation}
\label{eqn:crossover1}
\big(\omega|_{\one\otimes q}\big)_{1}
\quad=\quad
\vcenter{\hbox{%
\begin{tikzpicture}[font=\small]
\node[state] (oma) at (-2,0) {$\hspace*{0.4em}\omega\hspace*{0.4em}$};
\coordinate (oma1) at ([xshiftu=-0.4]oma);
\coordinate (oma2) at ([xshiftu=0.4]oma);
\node[discarder] (1a) at ([yshiftu=0.2]oma1) {};
\node[state,hflip,scale=0.75] (qa) at ([yshiftu=0.2]oma2) {$q$};
\draw (oma1) -- (1a);
\draw (oma2) -- (qa);
\path[font=\normalsize,text height=1.5ex,text depth=0.25ex]
(-2.9,0) node {$\Biggl($}
(-1,0) node {$\Biggr)^{\!\!-1}$};
\node[state] (omb) at (0.0,0) {$\hspace*{0.4em}\omega\hspace*{0.4em}$};
\coordinate (ombX) at ([xshiftu=-0.4]omb);
\coordinate (ombY) at ([xshiftu=0.4]omb);
\node[state,hflip,scale=0.75] (qb) at ([yshiftu=0.2]ombY) {$q$};
\coordinate (X) at ([yshiftu=0.6]ombX) {};
\draw (ombX) to (X);
\draw (ombY) to (qb);
\end{tikzpicture}}}
\end{equation}

\noindent We note that the above scalar (that is inverted) can also be
obtained as:
\begin{equation}
\label{eqn:crossover2}
\vcenter{\hbox{%
\begin{tikzpicture}[font=\small]
\node[state] (omega) at (0,0) {$\hspace*{0.4em}\omega\hspace*{0.4em}$};
\coordinate (omegaX) at ([xshiftu=-0.4]omega);
\coordinate (omegaY) at ([xshiftu=0.4]omega);
\node[arrow box] (c) at ([yshiftu=0.5]omegaX) {$c_1$};
\node[discarder] (d) at ([yshiftu=0.2]omegaY) {};
\node[state,hflip,scale=0.75] (q) at ([yshiftu=0.5]c) {$q$};
\draw (omegaX) to (c);
\draw (omegaY) to (d);
\draw (c) to (q);
\end{tikzpicture}}}
\quad
=
\quad
\vcenter{\hbox{%
\begin{tikzpicture}[font=\small]
\node[state] (omega) at (0,0) {$\hspace*{0.4em}\omega\hspace*{0.4em}$};
\coordinate (omegaX) at ([xshiftu=-0.4]omega);
\coordinate (omegaY) at ([xshiftu=0.4]omega);
\node[copier] (copier) at ([yshiftu=0.3]omegaX) {};
\node[arrow box] (c) at ([xshiftu=-0.0,yshiftu=1.0]omegaY) {$c_1$};
\node[discarder] (dX) at ([yshiftu=1.5]omegaX) {};
\node[discarder] (dY) at ([yshiftu=0.2]omegaY) {};
\node[state,hflip,scale=0.75] (q) at ([yshiftu=0.5]c) {$q$};
\draw (omegaX) to (copier);
\draw (omegaY) to (dY);
\draw (copier) to[out=150,in=-90] (dX);
\draw (copier) to[out=15,in=-90] (c);
\draw (c) to (q);
\end{tikzpicture}}}
\quad
\smash{\stackrel{\eqref{eqn:disintegrationforcrossover}}{=}}
\quad
\vcenter{\hbox{%
\begin{tikzpicture}[font=\small]
\node[state] (omega) at (-2,0) {$\hspace*{0.4em}\omega\hspace*{0.4em}$};
\coordinate (omegaX) at ([xshiftu=-0.4]omega);
\coordinate (omegaY) at ([xshiftu=0.4]omega);
\node[discarder] (dX) at ([yshiftu=0.2]omegaX) {};
\node[state,hflip,scale=0.75] (q) at ([yshiftu=0.2]omegaY) {$q$};
\draw (omegaX) -- (dX);
\draw (omegaY) -- (q);
\end{tikzpicture}}}
\end{equation}

\noindent Hence we can prove the equation on the left
in~\eqref{eqn:crossover}:
\[
\omega_{1}|_{c_{1}^{*}(q)}
\;\;
=
\quad
\vcenter{\hbox{%
\begin{tikzpicture}[font=\small]
\node[state] (omega) at (-2,0) {$\hspace*{0.4em}\omega\hspace*{0.4em}$};
\coordinate (omegaX) at ([xshiftu=-0.4]omega);
\coordinate (omegaY) at ([xshiftu=0.4]omega);
\node[arrow box] (c) at ([yshiftu=0.5]omegaX) {$c_1$};
\node[discarder] (d) at ([yshiftu=0.2]omegaY) {};
\node[state,hflip,scale=0.75] (q) at ([yshiftu=0.5]c) {$q$};
\draw (omegaX) to (c);
\draw (omegaY) to (d);
\draw (c) to (q);
\path[font=\normalsize,text height=1.5ex,text depth=0.25ex]
(-3.0,0.4) node {$\Biggl($}
(-1,0.4) node {$\Biggr)^{\!\!-1}$};
\node[state] (omegab) at (0.2,0) {$\hspace*{0.4em}\omega\hspace*{0.4em}$};
\coordinate (omegabX) at ([xshiftu=-0.4]omegab);
\coordinate (omegabY) at ([xshiftu=0.4]omegab);
\node[copier] (copier) at ([yshiftu=0.3]omegabX) {};
\node[arrow box] (cb) at ([xshiftu=-0.2,yshiftu=0.9]omegabY) {$c_1$};
\node[discarder] (dY) at ([yshiftu=0.2]omegabY) {};
\node[state,hflip,scale=0.75] (qb) at ([yshiftu=0.5]cb) {$q$};
\coordinate (X) at ([yshiftu=1.5]omegabX) {};
\draw (omegabX) to (copier);
\draw (omegabY) to (dY);
\draw (copier) to[out=150,in=-90] (X);
\draw (copier) to[out=15,in=-90] (cb);
\draw (cb) to (qb);
\end{tikzpicture}}}
\quad
\smash{\stackrel{\eqref{eqn:disintegrationforcrossover},\eqref{eqn:crossover2}}{=}}
\quad
\vcenter{\hbox{%
\begin{tikzpicture}[font=\small]
\node[state] (oma) at (-2,0) {$\hspace*{0.4em}\omega\hspace*{0.4em}$};
\coordinate (oma1) at ([xshiftu=-0.4]oma);
\coordinate (oma2) at ([xshiftu=0.4]oma);
\node[discarder] (1a) at ([yshiftu=0.2]oma1) {};
\node[state,hflip,scale=0.75] (qa) at ([yshiftu=0.2]oma2) {$q$};
\draw (oma1) -- (1a);
\draw (oma2) -- (qa);
\path[font=\normalsize,text height=1.5ex,text depth=0.25ex]
(-2.9,0) node {$\Biggl($}
(-1,0) node {$\Biggr)^{\!\!-1}$};
\node[state] (omb) at (0.0,0) {$\hspace*{0.4em}\omega\hspace*{0.4em}$};
\coordinate (ombX) at ([xshiftu=-0.4]omb);
\coordinate (ombY) at ([xshiftu=0.4]omb);
\node[state,hflip,scale=0.75] (qb) at ([yshiftu=0.2]ombY) {$q$};
\coordinate (X) at ([yshiftu=0.6]ombX) {};
\draw (ombX) to (X);
\draw (ombY) to (qb);
\end{tikzpicture}}}
\]

\noindent In a similar way we prove the equation on the right
in~\eqref{eqn:crossover}, since $\big(c_{2}\big)_{*}(\omega_{2}|_{q})$ equals:
\[
\vcenter{\hbox{%
\begin{tikzpicture}[font=\small]
\node[state] (oma) at (-2,0) {$\hspace*{0.4em}\omega\hspace*{0.4em}$};
\coordinate (omaX) at ([xshiftu=-0.4]oma);
\coordinate (omaY) at ([xshiftu=0.4]oma);
\node[discarder] (daX) at ([yshiftu=0.2]omaX) {};
\node[state,hflip,scale=0.75] (qa) at ([yshiftu=0.2]omaY) {$q$};
\draw (omaX) -- (daX);
\draw (omaY) -- (qa);
\path[font=\normalsize,text height=1.5ex,text depth=0.25ex]
(-2.9,0) node {$\Biggl($}
(-1,0) node {$\Biggr)^{\!\!-1}$};
\node[state] (omb) at (0.0,0) {$\hspace*{0.4em}\omega\hspace*{0.4em}$};
\coordinate (ombX) at ([xshiftu=-0.4]omb);
\coordinate (ombY) at ([xshiftu=0.4]omb);
\node[discarder] (dbX) at ([yshiftu=0.2]ombX) {};
\node[copier] (copier) at ([yshiftu=0.3]ombY) {};
\node[state,hflip,scale=0.75] (qb) at ([xshiftu=-0.3,yshiftu=0.6]ombY) {$q$};
\node[arrow box] (cb) at ([xshiftu=0.3,yshiftu=1.5]ombY) {$c_2$};
\coordinate (Y) at ([yshiftu=0.5]cb) {};
\draw (ombX) to (dbX);
\draw (ombY) to (copier);
\draw (copier) to[out=150,in=-90] (qb);
\draw (copier) to[out=15,in=-90] (cb);
\draw (cb) to (Y);
\end{tikzpicture}}}
\;
=
\;
\vcenter{\hbox{%
\begin{tikzpicture}[font=\small]
\node[state] (oma) at (-2,0.5) {$\hspace*{0.4em}\omega\hspace*{0.4em}$};
\coordinate (omaX) at ([xshiftu=-0.4]oma);
\coordinate (omaY) at ([xshiftu=0.4]oma);
\node[discarder] (1a) at ([yshiftu=0.2]omaX) {};
\node[state,hflip,scale=0.75] (qa) at ([yshiftu=0.2]omaY) {$q$};
\draw (omaX) -- (1a);
\draw (omaY) -- (qa);
\path[font=\normalsize,text height=1.5ex,text depth=0.25ex]
(-2.9,0.5) node {$\Biggl($}
(-1,0.5) node {$\Biggr)^{\!\!-1}$};
\node[state] (omb) at (0.1,0) {$\hspace*{0.4em}\omega\hspace*{0.4em}$};
\coordinate (ombX) at ([xshiftu=-0.4]omb);
\coordinate (ombY) at ([xshiftu=0.4]omb);
\node[discarder] (dbX) at ([yshiftu=0.2]ombX) {};
\node[copier] (copier) at ([yshiftu=0.3]ombY) {};
\node[state,hflip,scale=0.75] (qb) at ([yshiftu=1.5]ombY) {$q$};
\node[arrow box] (cb) at ([yshiftu=1.1]ombX) {$c_2$};
\coordinate (Y) at ([yshiftu=0.7]cb) {};
\draw (ombX) to (dbX);
\draw (ombY) to (copier);
\draw (copier) to[out=150,in=-90] (cb);
\draw (copier) to[out=15,in=-90] (qb);
\draw (cb) to (Y);
\end{tikzpicture}}}
\;
\smash{\stackrel{\eqref{eqn:disintegrationforcrossover}}{=}}
\;
\vcenter{\hbox{%
\begin{tikzpicture}[font=\small]
\node[state] (oma) at (-2,0) {$\hspace*{0.4em}\omega\hspace*{0.4em}$};
\coordinate (oma1) at ([xshiftu=-0.4]oma);
\coordinate (oma2) at ([xshiftu=0.4]oma);
\node[discarder] (1a) at ([yshiftu=0.2]oma1) {};
\node[state,hflip,scale=0.75] (qa) at ([yshiftu=0.2]oma2) {$q$};
\draw (oma1) -- (1a);
\draw (oma2) -- (qa);
\path[font=\normalsize,text height=1.5ex,text depth=0.25ex]
(-2.9,0) node {$\Biggl($}
(-1,0) node {$\Biggr)^{\!\!-1}$};
\node[state] (omb) at (-0.1,0) {$\hspace*{0.4em}\omega\hspace*{0.4em}$};
\coordinate (ombX) at ([xshiftu=-0.4]omb);
\coordinate (ombY) at ([xshiftu=0.4]omb);
\node[state,hflip,scale=0.75] (qb) at ([yshiftu=0.2]ombY) {$q$};
\coordinate (X) at ([yshiftu=0.6]ombX) {};
\draw (ombX) to (X);
\draw (ombY) to (qb);
\end{tikzpicture}}}
\]
\end{proof}

The two equations in Theorem~\ref{eqn:crossover} will be illustrated
in the `disease and mood' example below, where a particular state
(probability) will be calculated in three different ways. For a gentle
introduction to Bayesian inference with channels, and many more
examples, we refer to~\cite{JacobsZ18}.

\begin{remark}
Another way of relating conditioning
to disintegration / Bayesian inversion was pointed
out by a reviewer of the paper:
for a state $\sigma\colon I\to X$
and an effect $p\colon X\to I$,
a conditional state $\sigma|_p\colon I\to X$
is a Bayesian inversion for $\sigma$
along $p$.
This, however, involves disintegration of non-causal maps,
which we leave to future work.
\end{remark}

\subsection*{Disease and mood example}

We describe an example of probabilistic (Bayesian)
reasoning. The setting is the following. We consider a joint state
about the occurrence and non-occurrence of a disease, written as a
two-element set $D = \{d,\no{d}\}$, jointly with the occurrence and
non-occurrence of a good mood, written as $M = \{m, \no{m}\}$, where
$\no{m}$ stands for a bad (not good) mood. The joint distribution,
called $\omega\in \Dst(M\times D)$, that we start from is of the form:
\begin{equation}
\label{eqn:DMstate}
\begin{array}{rcl}
\omega
& = &
0.05\ket{m,d} + 0.4\ket{m,\no{d}} + 0.5\ket{\no{m},d} + 0.05\ket{\no{m},\no{d}}.
\end{array}
\end{equation}

\noindent Suppose there is a test for the disease with the following
sensitivity and specificity. It is positive in 90\% of all cases of
people having the disease, and also 5\% positive for people without
the disease. Suppose the disease comes out positive. What is then the
mood? It is expected that the mood will deteriorate, since the disease
and the mood are `entwined' (correlated) in the above joint
state~\eqref{eqn:DMstate}: a high likelihood of disease corresponds to
a low mood.

The test's sensitivity and specificity is expressed as a channel
$s\colon D\rightarrow 2$, where $2 = \{t,f\}$, with:
\[ \begin{array}{rclcrcl}
c(d)
& = &
\frac{9}{10}\ket{t} + \frac{1}{10}\ket{f}
& \qquad &
c(\no{d})
& = &
\frac{1}{20}\ket{t} + \frac{19}{20}\ket{f}.
\end{array} \]

\noindent We calculate the first and second marginal of $\omega$:
\[ \begin{array}{rccclcrcccl}
\omega_{1}
& \coloneqq &
\marg{\omega}{[1,0]}
& = &
0.45\ket{m} + 0.55\ket{\no{m}}
& \qquad &
\omega_{2}
& \coloneqq &
\marg{\omega}{[0,1]}
& = &
0.55\ket{d} + 0.45\ket{\no{d}}.
\end{array} \]

\noindent The a priori probability of a positive test is obtained by
state transformation, applied to the second marginal:
\[ \begin{array}{rcl}
s_{*}\big(\omega_{2}\big)
& = &
0.518\ket{t} + 0.482\ket{f}.
\end{array} \]

As explained above, we are interested in the mood after a positive
test. We write $\tt$ for the predicate on $2 = \{t,f\}$ given by
$\tt(t) = 1$ and $\tt(f) = 0$. We can transform it to a predicate
$s^{*}(\tt)$ on $D = \{d,\no{d}\}$, namely $s^{*}(\tt)(d) =
\frac{9}{10}$ and $s^{*}(\tt)(\no{d}) = \frac{1}{20}$. We now describe
three equivalent ways to calculate the posterior mood, as
in Theorem~\ref{thm:crossover}, with predicate $q = s^{*}(\tt)$.
\begin{enumerate}
\item First we use the truth predicate $\one$ on $M = \{m,\no{m}\}$,
  which is always $1$, to extend (weaken) the predicate $s^{*}(\tt)$
  on $D$ to $\one\otimes s^{*}(\tt)$ on $M\times D$. The latter
  predicate can be used to update the joint state
  $\omega\in\Dst(M\times D)$. If we then take the first marginal we
  obtain the updated mood probability:
\[ \begin{array}{rcl}
\marg{\big(\omega\big|_{\one\otimes s^{*}(\tt)}\big)}{[1,0]}
& = &
0.126\ket{m} + 0.874\ket{\no{m}}.
\end{array} \]

\noindent Clearly, a positive test leads to a lower mood: a reduction
from $0.45$ to $0.126$. 

\item Next we extract channels $c_{1} \colon M \rightarrow D$ and
  $c_{2} \colon D \rightarrow M$ from $\omega$ via disintegration as
  in~\eqref{eqn:disintegrationforcrossover}. For instance, $c_{1}$ is
defined as:
\[ \begin{array}{rcl}
c_{1}(m)
& = &
\frac{\omega(m,d)}{\omega_{1}(m)}\ket{d} +
   \frac{\omega(m,\no{d})}{\omega_{1}(m)}\ket{\no{d}}
\hspace*{\arraycolsep}=\hspace*{\arraycolsep}
\frac{1}{9}\ket{d} + \frac{8}{9}\ket{\no{d}}
\\
c_{1}(\no{m})
& = &
\frac{\omega(\no{m},d)}{\omega_{1}(\no{m})}\ket{d} +
   \frac{\omega(\no{m},\no{d})}{\omega_{1}(\no{m})}\ket{\no{d}}
\hspace*{\arraycolsep}=\hspace*{\arraycolsep}
\frac{10}{11}\ket{d} + \frac{1}{11}\ket{\no{d}}.
\end{array} \]



\noindent We can now transform the predicate $s^{*}(\tt)$ on $D$ along
$c_{1}$ to get the predicate $c_{1}^{*}(s^{*}(\tt)) = (s_{1} \klafter
c_{1})^{*}(\tt))$ on $M$ given by: $d \mapsto 13/90$ and $\no{d}
\mapsto 181/220$. We can now use this predicate to update the a priori
`mood' marginal $\omega_{1}\in\Dst(M)$. This gives the same a
posteriori outcome as before:
\[ \begin{array}{rcl}
\omega_{1}\big|_{c_{1}^{*}(s^{*}(\tt))}
& = &
0.126\ket{m} + 0.874\ket{\no{m}}.
\end{array} \]

\item We can also update the second `disease' marginal
  $\omega_{2}\in\Dst(D)$ directly with the predicate $s^{*}(\tt)$ and
  then do state transformation along the channel $c_{2} \colon
  D\rightarrow M$. This gives the same updated mood:
\[ \begin{array}{rcl}
(c_{2})_{*}\big(\omega_{2}|_{s^{*}(\tt)}\big)
& = &
0.126\ket{m} + 0.874\ket{\no{m}}.
\end{array} \]
\end{enumerate}

\section{Disintegration via likelihoods}\label{sec:likelihood}
\tikzextname{s_disint_lik}

We continue in the setting of Section~\ref{sec:beyondcausal} in
a CD-category that is not necessarily affine.
The goal of this section is to present Theorem~\ref{thm:likelihood},
which generalises a construction of Bayesian inversions using
densities/likelihoods shown in Example~\ref{ex:inversion}.

We first introduce `likelihoods' in our setting.

\begin{definition}
\label{def:likelihood}
We say a channel $c\colon X\to Y$ is represented by a effect $\ell$ on
$X\otimes Y$ with respect to an arrow $\nu\colon I\to Y$ if
\begin{equation}
\label{eq:likelihood}
\begin{ctikzpicture}[font=\small]
\node[arrow box] (c) at (0,0) {$c$};
\draw (c) -- (0,0.7);
\draw (c) -- (0,-0.7);
\end{ctikzpicture}
\;\;=\;\;
\begin{ctikzpicture}[font=\small]
\node[state,hflip] (L) at (-1.1,0.5) {\;\,$\ell$\;\,};
\node[copier] (copier2) at (-0.45,0.2) {};
\node[state] (tau) at (-0.45,0) {$\nu$};
\coordinate (Y) at (-0.1,1.2) {} {} {};
\coordinate (X) at (-1.4,-0.7) {} {};
\draw (X) to ([xshiftu=-.3]L);
\draw (tau) to (copier2);
\draw (copier2) to[out=165,in=-90] ([xshiftu=.3]L);
\draw (copier2) to[out=30,in=-90] (Y);
\end{ctikzpicture}
\end{equation}
We call $\ell$ a \emph{likelihood relation} for
the channel $c$ with respect to $\nu$.
\end{definition}

Interpreted in the category $\sfKrn$, the definition says:
a kernel $c\colon X\krnto Y$ satisfies
\[
c(x,B)
=\int_B \ell(x,y)\,\nu(\dd y)
\]
for a kernel $\ell\colon X\times Y\krnto 1$
(identified with a measurable function $\ell\colon X\times Y\to [0,\infty]$)
and a measure $\nu\colon 1\krnto Y$.
This is basically the same as what we have
in Example~\ref{ex:inversion}, but
here $\nu$ is not necessarily the Lebesgue measure.

\begin{definition}
Let $\sigma\colon I\to X$ be a state.
An effect $p\colon X\to I$ is
\emph{$\sigma$-almost invertible} if there is an effect $q\colon X\to
I$ such that:
\[
\vcenter{\hbox{%
\begin{tikzpicture}[font=\small]
\node[state,hflip] (p) at (-0.5,0.4) {$p$};
\node[state,hflip] (q) at (0.5,0.4) {$q$};
\node[copier] (copier) at (0,0) {};
\coordinate (X) at (0,-0.3);
\draw (X) to (copier);
\draw (copier) to[out=165,in=-90] (p);
\draw (copier) to[out=15,in=-90] (q);
\end{tikzpicture}}}
\quad\equpto[\sigma]\quad
\vcenter{\hbox{%
\begin{tikzpicture}[font=\small]
\node[discarder] (d) at (0,0.5) {};
\coordinate (X) at (0,0);
\draw (X) to (d);
\end{tikzpicture}}}
\enspace.
\]
\end{definition}

The definition allows us to normalise an arrow $f\colon X\to Y$
into an almost causal one, as follows.
If an effect $\ground\circ f$ is $\sigma$-almost inverted by $q$,
as on the left below,
\[
\vcenter{\hbox{%
\begin{tikzpicture}[font=\small]
\node[discarder] (p) at (-0.5,1.1) {};
\node[arrow box,anchor=south] (f) at (-0.5,0.4) {$f$};
\node[state,hflip] (q) at (0.5,0.4) {$q$};
\node[copier] (copier) at (0,0) {};
\coordinate (X) at (0,-0.3);
\draw (X) to (copier);
\draw (copier) to[out=165,in=-90] (f);
\draw (f) to (p);
\draw (copier) to[out=15,in=-90] (q);
\end{tikzpicture}}}
\quad\equpto[\sigma]\quad
\vcenter{\hbox{%
\begin{tikzpicture}[font=\small]
\node[discarder] (d) at (0,0.5) {};
\coordinate (X) at (0,0);
\draw (X) to (d);
\end{tikzpicture}}}
\qquad\qquad\qquad\qquad
\vcenter{\hbox{%
\begin{tikzpicture}[font=\small]
\coordinate (Y) at (-0.5,1.2) {};
\node[arrow box,anchor=south] (f) at (-0.5,0.4) {$f$};
\node[state,hflip] (q) at (0.5,0.4) {$q$};
\node[copier] (copier) at (0,0) {};
\coordinate (X) at (0,-0.3);
\draw (X) to (copier);
\draw (copier) to[out=165,in=-90] (f);
\draw (f) to (Y);
\draw (copier) to[out=15,in=-90] (q);
\end{tikzpicture}}}
\]
then clearly
the arrow $X\to Y$ on the right is $\sigma$-almost causal.

We can now formulate and prove our main technical result.

\begin{theorem}
\label{thm:likelihood}
Let $\sigma$ be a state on $X$, and $c\colon X\to Y$
be a channel represented by a likelihood relation $\ell$ with respect to $\nu$
as in~\eqref{eq:likelihood} above.
Assume that the category admits equality strengthening
for almost causal maps,
and that the effect
\[
\vcenter{\hbox{%
\begin{tikzpicture}[font=\small]
\node[state,hflip] (L) at (0,0) {\;\,$\ell$\;\,};
\node[copier] (copier2) at (-0.7,-0.3) {};
\node[state] (tau) at (-0.7,-0.5) {$\sigma$};
\node[discarder] (Y) at (-1,0.5) {};
\coordinate (X) at (0.3,-1.2);
\draw (X) to ([xshiftu=.3]L);
\draw (tau) to (copier2);
\draw (copier2) to[out=15,in=-90] ([xshiftu=-.3]L);
\draw (copier2) to[out=150,in=-90] (Y);
\end{tikzpicture}}}
\quad=\quad
\vcenter{\hbox{%
\begin{tikzpicture}[font=\small]
\node[state,hflip] (L) at (0,0) {\;\,$\ell$\;\,};
\node[state] (tau) at (-0.3,-0.3) {$\sigma$};
\coordinate (X) at (0.3,-1.2);
\draw (X) to ([xshiftu=.3]L);
\draw (tau) to ([xshiftu=-.3]L);
\end{tikzpicture}}}
\qquad\text{is almost invertible w.r.t.}\quad
c_*(\sigma)
=\;\;
\vcenter{\hbox{%
\begin{tikzpicture}[font=\small]
\node[state,hflip] (L) at (-1.1,0.5) {\;\,$\ell$\;\,};
\node[copier] (copier2) at (-0.4,0.2) {};
\node[state] (tau) at (-0.4,0) {$\nu$};
\node[state] (sigma) at (-1.4,0) {$\sigma$};
\coordinate (Y) at (-0.1,1.2);
\draw (sigma) to ([xshiftu=-.3]L);
\draw (tau) to (copier2);
\draw (copier2) to[out=165,in=-90] ([xshiftu=.3]L);
\draw (copier2) to[out=45,in=-90] (Y);
\end{tikzpicture}}}
\enspace.
\]
Then, writing $q\colon Y\to I$ for an almost inverse to the effect,
the channel
\[
d\colon Y\to X\;\;\coloneqq\quad
\vcenter{\hbox{%
\begin{tikzpicture}[font=\small]
\node[state,hflip] (L) at (0,0) {\;\,$\ell$\;\,};
\node[copier] (copier2) at (-0.7,-0.3) {};
\node[state] (tau) at (-0.7,-0.5) {$\sigma$};
\coordinate (Y) at (-1,0.6);
\coordinate (X) at (0.75,-1.2);
\node[copier] (copierY) at (0.75,-0.6) {};
\node[state,hflip] (q) at (1.2,0) {$q$};
\draw (X) to (copierY);
\draw (copierY) to[out=150,in=-90] ([xshiftu=.3]L);
\draw (copierY) to[out=30,in=-90] (q);
\draw (tau) to (copier2);
\draw (copier2) to[out=15,in=-90] ([xshiftu=-.3]L);
\draw (copier2) to[out=150,in=-90] (Y);
\end{tikzpicture}}}
\]
is a Bayesian inversion for $\sigma$ along $c\colon X\to Y$.  Namely,
together they satisfy the equation~\eqref{eq:bayesian-inversion}.
\end{theorem}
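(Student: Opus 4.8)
The plan is to reduce the Bayesian-inversion equation~\eqref{eq:bayesian-inversion} to a single application of \emph{strong} almost equality against a carefully chosen auxiliary state, the point being that a naive pointwise cancellation does not suffice. First I would unfold both sides using the likelihood presentation~\eqref{eq:likelihood}. Write $e\colon Y\to I$ for the effect appearing in the hypothesis (the likelihood $\ell$ with its $X$-leg closed off by $\sigma$), and write $m\colon Y\to X$ for the arrow that copies $\sigma$, feeds one copy together with the input into $\ell$, and emits the other copy. Inspecting the defining diagram of $d$ then exhibits it as $(m\otimes q)\circ\copier_Y$ (the scalar $I$-output of $q$ being absorbed), i.e.\ $m$ reweighted by the scalar $q$ applied to a copy of the input. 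Substituting~\eqref{eq:likelihood} into the left-hand side of~\eqref{eq:bayesian-inversion} presents that state, call it $\omega_L$, as the $\ell$-reweighting of $\sigma\otimes\nu$: copy $\sigma$ and $\nu$, apply $\ell$ to one copy of each, and output the surviving $X$- and $Y$-wires. Substituting the definitions of $d$ and of $c_*(\sigma)=c\circ\sigma$ into the right-hand side presents it, call it $\omega_R$, as the very same diagram but with the extra scalar $(e\otimes q)\circ\copier_Y$ applied to the $Y$-wire.

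The key construction is an auxiliary state $\Omega\colon I\to Y\otimes X\otimes Y$ built by copying $\sigma$ into two wires, copying $\nu$ into \emph{three} wires, applying $\ell$ to one $X$- and one $Y$-copy, emitting the surviving $X$-copy and one surviving $Y$-copy as the second and third tensor factors, and keeping the remaining $\nu$-copy as the leading factor. Three properties of $\Omega$ are then checked by routine diagram manipulation using the comonoid (counit and coassociativity) laws and coherence: discarding the leading $Y$ returns exactly $\omega_L$; applying the effect $(e\otimes q)\circ\copier_Y$ to the leading $Y$ returns exactly $\omega_R$; and the marginal of $\Omega$ on its leading factor equals $c_*(\sigma)$. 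The last item is the crucial one: the leading $\nu$-copy is correlated through $\ell$ with the $\sigma$-copy, so discarding the other two factors leaves precisely the $\ell$-reweighted copy of $\nu$, which is $c_*(\sigma)$ and \emph{not} $\nu$.

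With $\Omega$ in hand the conclusion is immediate. Both $(e\otimes q)\circ\copier_Y$ and $\ground_Y$ are effects of type $Y\to I$, and for an effect the property `$\mu$-almost causal' unwinds to `$\equpto[\mu]\ground_Y$'; for $(e\otimes q)\circ\copier_Y$ this is exactly the hypothesis that $q$ is an almost inverse of $e$ with respect to $c_*(\sigma)$, and for $\ground_Y$ it is trivial. Thus $(e\otimes q)\circ\copier_Y\equpto[c_*(\sigma)]\ground_Y$ is an almost equality between $c_*(\sigma)$-almost-causal effects, so the assumed equality strengthening for almost causal maps upgrades it to strong $c_*(\sigma)$-almost equality in the sense of Definition~\ref{def:str-almost-eq}. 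Since the first marginal of $\Omega$ is $c_*(\sigma)$, setting $p\coloneqq(e\otimes q)\circ\copier_Y$ and applying strong almost equality to $\Omega$ yields
\[
\omega_R \;=\; (p\otimes\id_{X\otimes Y})\circ\Omega \;=\; (\ground_Y\otimes\id_{X\otimes Y})\circ\Omega \;=\; \omega_L,
\]
which is precisely~\eqref{eq:bayesian-inversion}.

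The main obstacle is exactly the correlation just highlighted. One cannot simply cancel the scalar $e\cdot q$ against the weight of $c_*(\sigma)$ by an ordinary (pointwise) almost equality, because the $X$-output produced by $m$ inside $d$ is entangled with the $\ell$-effect through the shared copy of $\sigma$; on the locus where $e$ vanishes the surviving $X$-output is forced to vanish too, and it is this domination that must be captured. This is why the argument routes through strong almost equality, and why $\Omega$ must be engineered so that its leading marginal is the pushforward $c_*(\sigma)$ rather than $\nu$, since $c_*(\sigma)$ is the only state for which $q$ is guaranteed to invert $e$. Verifying the three defining properties of $\Omega$ simultaneously is the real content; once they are in place, the equality-strengthening hypothesis does the rest.
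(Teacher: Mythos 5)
Your proposal is correct and follows essentially the same route as the paper's own proof: both expand the two sides of~\eqref{eq:bayesian-inversion} via the likelihood presentation, observe that they differ only by the effect $(e\otimes q)\circ\copier_Y$ inserted into the copier tree on $\nu$, and then erase that effect by upgrading the almost-invertibility hypothesis $(e\otimes q)\circ\copier_Y\equpto[c_*(\sigma)]\ground_Y$ to a strong almost equality (both maps being $c_*(\sigma)$-almost causal effects) applied against a state whose first marginal is $c_*(\sigma)$. Your auxiliary state $\Omega$ is precisely the diagram the paper reaches after its copier rearrangement step (i) with that effect removed, so your three-property verification just makes explicit what the paper leaves implicit in its diagram chain.
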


\begin{myproof}
We reason as follows.
\begin{align*}
\vcenter{\hbox{%
\begin{tikzpicture}[font=\small]
\node[state] (sigma) at (0,0) {$\sigma$};
\node[arrow box] (tauL) at (0,0.48) {$c$};
\node[copier] (copier) at (0,1.04) {};
\node[arrow box] (sigmaL) at (-0.5,1.8) {$d$};
\coordinate (X) at (-0.5,2.3);
\coordinate (Y) at (0.4,2.3);
\draw (sigma) to (tauL);
\draw (tauL) to (copier);
\draw (copier) to[out=160,in=-90] (sigmaL);
\draw (sigmaL) to (X);
\draw (copier) to[out=30,in=-90] (Y);
\end{tikzpicture}}}
\;\;&=\;\;
\vcenter{\hbox{%
\begin{tikzpicture}[font=\small]
\node[state] (sigma) at (1.1,-3.9) {$\sigma$};
\node[copier] (copier) at (2.4,-2) {};
\coordinate (Y) at (3,-0.2) {};
\node[state,hflip] (L) at (1.4,-2.5) {\;\,$\ell$\;\,};
\node[copier] (copier2) at (2.05,-2.9) {};
\node[state] (tau) at (2.05,-3.2) {$\nu$};
\node[copier] (copierY) at (1.7,-1.4) {};
\draw (copier) to[out=160,in=-90] (copierY);
\draw (copier) to[out=30,in=-90] (Y);
\draw (sigma) to ([xshiftu=-.3]L);
\draw (tau) to (copier2);
\draw (copier2) to[out=150,in=-90] ([xshiftu=.3]L);
\draw (copier2) to[out=30,in=-90] (copier);
\node[state,hflip] (L) at (0.95,-0.8) {\;\,$\ell$\;\,};
\node[copier] (copier2) at (0.25,-1.1) {};
\node[state] (tau) at (0.25,-1.3) {$\sigma$};
\coordinate (X) at (-0.1,-0.2) {};
\node[state,hflip] (q) at (2.15,-0.8) {$q$};
\draw (copierY) to[out=150,in=-90] ([xshiftu=.3]L);
\draw (copierY) to[out=30,in=-90] (q);
\draw (tau) to (copier2);
\draw (copier2) to[out=15,in=-90] ([xshiftu=-.3]L);
\draw (copier2) to[out=150,in=-90] (X);
\end{tikzpicture}}}
\;\;\overset{(\mathrm{i})}{=}\;\;
\vcenter{\hbox{%
\begin{tikzpicture}[font=\small]
\node[state] (sigma) at (-0.6,1.7) {$\sigma$};
\node[copier] (copier) at (0.6,-0.3) {};
\node[state,hflip] (q) at (0.85,1.95) {$q$};
\coordinate (X) at (-1.4,2.65) {} {} {} {} {} {} {} {};
\coordinate (Y) at (1.5,2.65) {} {} {} {} {} {} {} {};
\node[state,hflip] (L) at (-0.3,1.95) {\;\,$\ell$\;\,};
\node[copier] (copier2) at (0.4,1.2) {};
\node[state] (tau) at (0.6,-0.5) {$\nu$};
\node[state,hflip] (L2) at (0,0) {\;\,$\ell$\;\,};
\node[state] (sigma2) at (-0.6,-0.5) {$\sigma$};
\node[copier] (copier3) at (0.9,0.6) {};
\node[copier] (copier4) at (-0.6,-0.3) {};
\draw (sigma) to ([xshiftu=-.3]L);
\draw (tau) to (copier);
\draw (copier2) to[out=150,in=-90] ([xshiftu=.3]L);
\draw (copier2) to[out=30,in=-90] (q);
\draw (sigma2) to (copier4);
\draw (copier4) to[out=135,in=-90] (X);
\draw (copier4) to[out=15,in=-90] ([xshiftu=-.3]L2);
\draw (copier) to[out=165,in=-90] ([xshiftu=.3]L2);
\draw (copier) to[out=30,in=-90] (copier3);
\draw (copier3) to[out=165,in=-90] (copier2);
\draw (copier3) to[out=45,in=-90] (Y);
\end{tikzpicture}}}
\\
&\overset{(\mathrm{ii})}{=}\;\;
\vcenter{\hbox{%
\begin{tikzpicture}[font=\small]
\node[copier] (copier) at (0.6,-0.3) {};
\coordinate (X) at (-1.3,1.4) {};
\coordinate (Y) at (1.2,1.4) {};
\node[state] (tau) at (0.6,-0.5) {$\nu$};
\node[discarder] (d) at (0.4,1.1) {};
\node[state,hflip] (L2) at (0,0) {\;\,$\ell$\;\,};
\node[state] (sigma2) at (-0.6,-0.5) {$\sigma$};
\node[copier] (copier3) at (0.9,0.6) {};
\node[copier] (copier4) at (-0.6,-0.3) {};
\draw (tau) to (copier);
\draw (sigma2) to (copier4);
\draw (copier4) to[out=135,in=-90] (X);
\draw (copier4) to[out=15,in=-90] ([xshiftu=-.3]L2);
\draw (copier) to[out=165,in=-90] ([xshiftu=.3]L2);
\draw (copier) to[out=30,in=-90] (copier3);
\draw (copier3) to[out=165,in=-90] (d);
\draw (copier3) to[out=45,in=-90] (Y);
\end{tikzpicture}}}
\;\;=\;\;
\vcenter{\hbox{%
\begin{tikzpicture}[font=\small]
\node[state] (sigma) at (0.15,0.3) {$\sigma$};
\node[copier] (copier) at (0.15,0.6) {};
\node[state,hflip] (L) at (0.8,2) {\;\,$\ell$\;\,};
\node [state] (tau) at (1.4,1.35) {$\nu$};
\node[copier] (copier2) at (1.4,1.65) {};
\coordinate (X) at (-0.2,2.6);
\coordinate (Y) at (1.7,2.6);
\draw (sigma) to (copier);
\draw (copier) to[out=120,in=-90] (X);
\draw (copier) to[out=60,in=-90] ([xshiftu=-.3]L);
\draw  (tau) to (copier2);
\draw  (copier2) to[out=150,in=-90] ([xshiftu=.3]L);
\draw  (copier2) to[out=45,in=-90] (Y);
\end{tikzpicture}}}
\;\;=\;\;
\vcenter{\hbox{%
\begin{tikzpicture}[font=\small]
\node[state] (sigma) at (0,0) {$\sigma$};
\node[copier] (copier) at (0,0.4) {};
\node[arrow box] (tauL) at (0.6,1.2) {$c$};
\coordinate (X) at (-0.6,1.9);
\coordinate (Y) at (0.6,1.9);
\draw (sigma) to (copier);
\draw (copier) to[out=150,in=-90] (X);
\draw (copier) to[out=20,in=-90] (tauL);
\draw (tauL) to (Y);
\end{tikzpicture}}}
\end{align*}
For the equality $\overset{(\mathrm{i})}{=}$
we use associativity and commutativity
of copiers $\copier$.
The equality $\overset{(\mathrm{ii})}{=}$
follows by:
\[
\begin{ctikzpicture}[font=\small]
\node[state] (sigma) at (-0.6,1.7) {$\sigma$};
\node[state,hflip] (q) at (0.85,1.95) {$q$};
\coordinate (Y) at (0.4,0.9);
\node[state,hflip] (L) at (-0.3,1.95) {\;\,$\ell$\;\,};
\node[copier] (copier2) at (0.4,1.2) {};
\draw (sigma) to ([xshiftu=-.3]L);
\draw (copier2) to[out=150,in=-90] ([xshiftu=.3]L);
\draw (copier2) to[out=30,in=-90] (q);
\draw  (copier2) to (Y);
\end{ctikzpicture}
\;\;\equpto\;\;
\begin{ctikzpicture}[font=\small]
\node[discarder] (d) at (0,0) {};
\draw (d) -- (0,-0.5);
\end{ctikzpicture}
\qquad\text{w.r.t.}\qquad
\vcenter{\hbox{%
\begin{tikzpicture}[font=\small]
\node[state,hflip] (L) at (-1.1,0.5) {\;\,$\ell$\;\,};
\node[copier] (copier2) at (-0.4,0.2) {};
\node[state] (tau) at (-0.4,0) {$\nu$};
\node[state] (sigma) at (-1.4,0) {$\sigma$};
\coordinate (Y) at (-0.1,1.2);
\draw (sigma) to ([xshiftu=-.3]L);
\draw (tau) to (copier2);
\draw (copier2) to[out=165,in=-90] ([xshiftu=.3]L);
\draw (copier2) to[out=45,in=-90] (Y);
\end{tikzpicture}}}
\]
via equality strengthening.
\end{myproof}

\begin{example}
We instantiate the Theorem~\ref{thm:likelihood} in $\sfKrn$.  Let
$c\colon X\krnto Y$ be a probability kernel represented by a
likelihood relation $\ell\colon X\times Y\krnto 1$ with respect to
$\nu\colon 1\krnto Y$.  The relation $\ell$ is identified with a
measurable function $\ell\colon X\times Y\to[0,\infty]$ and $\nu$ with
a measure $\nu\colon\Sigma_Y\to[0,\infty]$.  The
equation~\eqref{eq:likelihood} amounts to
\[
c(x,B)=
\int_B \ell(x,y)\,\nu(\dd y)
\enspace.
\]
In particular, each $\ell(x,-)$
is a probability density function, satisfying
$\int_Y \ell(x,y)\,\nu(\dd y)=1$.
Typically, we use the Lebesgue measure as $\nu$,
with $Y$ a subspace of $\RR$.
Let $\sigma\colon 1\krnto X$ be a probability measure.
Then $c_*(\sigma)\colon 1\krnto Y$ is given as:
\[
c_*(\sigma)(B)
=\int_X c(x, B)\,\sigma(\dd x)
=\int_X \int_B \ell(x,y)\,\nu(\dd y)\,\sigma(\dd x)
\]
The effect
\[
p\colon Y\krnto 1\quad=\quad
\begin{ctikzpicture}
\node[state,hflip] (L) at (0,0) {\;\,$\ell$\;\,};
\node[state] (tau) at (-0.3,-0.3) {$\sigma$};
\coordinate (X) at (0.3,-1);
\draw (X) to ([xshiftu=.3]L);
\draw (tau) to ([xshiftu=-.3]L);
\end{ctikzpicture}
\qquad
\text{is given as:}
\qquad
p(y)= \int_X \ell(x,y)\,\sigma(\dd x)
\enspace.
\]
To define an inverse of $p$,
we claim that $0<p<\infty$, $c_*(\sigma)$-almost everywhere.
We prove that $p^{-1}(\set{0,\infty})=p^{-1}(0)\cup p^{-1}(\infty)$
is $c_*(\sigma)$-negligible, as:
\begin{align*}
c_*(\sigma)\paren[\big]{p^{-1}(0)}
&=\int_X \int_{p^{-1}(0)} \ell(x,y)\,\nu(\dd y)\,\sigma(\dd x)
\\
&=\int_{p^{-1}(0)} p(x)\,\nu(\dd y)
\\
&=\int_{p^{-1}(0)} 0\,\nu(\dd y) =0
\end{align*}
and, similarly we have
\[
\int_{p^{-1}(\infty)} \infty \,\nu(\dd y)
= \int_{p^{-1}(\infty)} p(x) \,\nu(\dd y)
= c_*(\sigma)\paren[\big]{p^{-1}(\infty)} \le 1
\]
but this is possible only when
$\nu(p^{-1}(\infty))=0$,
hence $c_*(\sigma)\paren[\big]{p^{-1}(\infty)}=\int_{p^{-1}(\infty)} p(x) \,\nu(\dd y)=0$.
Now define an effect $q\colon Y\krnto 1$ by
\[
q(y)=
\begin{dcases*}
p(y)^{-1} & if $0<p(y)<\infty$ \\
0 & otherwise.
\end{dcases*}
\]
Then $p$ is $c_*(\sigma)$-almost inverted by $q$.
By Theorem~\ref{thm:likelihood}, the Bayesian inversion
for $\sigma$ along $c$ is given by
\[
d\colon Y\to X\;\;\coloneqq\quad
\vcenter{\hbox{%
\begin{tikzpicture}[font=\small]
\node[state,hflip] (L) at (0,0) {\;\,$\ell$\;\,};
\node[copier] (copier2) at (-0.7,-0.3) {};
\node[state] (tau) at (-0.7,-0.5) {$\sigma$};
\coordinate (Y) at (-1,0.6);
\coordinate (X) at (0.75,-1.2);
\node[copier] (copierY) at (0.75,-0.6) {};
\node[state,hflip] (q) at (1.2,0) {$q$};
\draw (X) to (copierY);
\draw (copierY) to[out=150,in=-90] ([xshiftu=.3]L);
\draw (copierY) to[out=30,in=-90] (q);
\draw (tau) to (copier2);
\draw (copier2) to[out=15,in=-90] ([xshiftu=-.3]L);
\draw (copier2) to[out=150,in=-90] (Y);
\end{tikzpicture}}}
\quad,
\]
namely,
\begin{equation}
\label{eq:inversion-by-likelihood}
\begin{aligned}
d(y,A)
&=q(y) \int_A \ell(x,y) \,\sigma(\dd x)
\\
&=
\frac{\int_A \ell(x,y) \,\sigma(\dd x)}{\int_X \ell(x,y) \,\sigma(\dd x)}
\qquad\text{whenever}\quad
0<\int_X \ell(x,y) \,\sigma(\dd x)<\infty
\end{aligned}
\end{equation}
This may be seen as a variant of the Bayes formula.
The calculation in Example~\ref{ex:inversion} is reproduced when
$\sigma$ is also given via a density function.
\end{example}

In the end we reconsider the naive Bayesian classification example
from Section~\ref{sec:classifier}. There we only considered the
discrete version. The original source~\cite{WittenFH11} also contains
a `hybrid' version, combining discrete and continuous probability.

In that hybrid form the Temperature and Humidity columns in
Figure~\ref{fig:discreteplay} are different, and are given by
numerical values. What these values are does not matter too much here,
since they are only used to calculate \emph{mean} ($\mu$) and
\emph{standard deviation} ($\sigma$) values. This is done separately
for the cases where Play is \emph{yes} or \emph{no}. It results in
the following table.
\begin{center}
{\setlength\tabcolsep{2em}\renewcommand{\arraystretch}{1.0}
\begin{tabular}{c|c|c}
& \textbf{Temperature} & \textbf{Humidity} 
\\
\hline\hline
\textbf{Play} = yes 
   & $\mu = 73, \, \sigma = 6.2$
   & $\mu = 79.1, \, \sigma = 10.2$
\\
\textbf{Play} = no
   & $\mu = 74.6, \, \sigma = 7.9$
   & $\mu = 86.2, \, \sigma = 9.7$
\\
\end{tabular}}
\end{center}

\noindent These $\mu,\sigma$ values are used to define two functions
$c_{T} \colon P \rightarrow \Giry(\mathbb{R})$ and $c_{H} \colon P
\rightarrow \Giry(\mathbb{R})$, with normal distributions
$\mathcal{N}$ as pdf. To be precise, these channels are defined as
follows, where $A\subseteq\mathbb{R}$ is a measurable subset.
\[ \begin{array}{rclcrcl}
c_{T}(y)(A)
& = &
\displaystyle\int_{A} \mathcal{N}(73, 6.2)(x) \intd x
& \qquad\qquad &
c_{H}(y)(A)
& = &
\displaystyle\int_{A} \mathcal{N}(79.1, 10.2)(x) \intd x
\\[+.8em]
c_{T}(n)(A)
& = &
\displaystyle\int_{A} \mathcal{N}(74.6, 7.9)(x) \intd x
& &
c_{H}(y)(A)
& = &
\displaystyle\int_{A} \mathcal{N}(86.2, 9.7)(x) \intd x.
\end{array} \]

\noindent These functions $c_T$ and $c_H$ form channels for the
Giry monad $\Giry$. 

The two columns Outlook and Windy in Figure~\ref{fig:discreteplay}
remain the same. The corresponding maps $d_{O} \colon P \rightarrow
\Dst(O)$ and $d_{W} \colon P \rightarrow \Dst(W)$ for the discrete
probability monad $\Dst$ --- in Diagram~\eqref{diag:discreteplay} ---
are now written as maps $c_{O} \colon P \rightarrow \Giry(O)$ and
$c_{W} \colon P \rightarrow \Giry(W)$ for the monad $\Giry$, using the
obvious inclusion $\Dst \hookrightarrow \Giry$.

We now have four channels $c_{O} \colon P \rightarrow O$, $c_{T}
\colon P \rightarrow \mathbb{R}$, $c_{H} \colon P \rightarrow
\mathbb{R}$ and $c_{W} \colon P \rightarrow W$. We combine them, as
before, into a single channel $c\colon P \rightarrow
T\times\mathbb{R}\times\mathbb{R}\times W$. In combination with the
Play marginal distribution $\pi$ from Section~\ref{sec:classifier}, we
can compute $c$'s Bayesian inversion $f\colon
T\times\mathbb{R}\times\mathbb{R}\times W \rightarrow \Giry(P)$
via the formula \eqref{eq:inversion-by-likelihood}.  We
apply it to the input data used in~\cite{WittenFH11}, and get the
following Play distribution.
\[ \begin{array}{rcl}
f(s,66,90,t)
& = &
0.207\ket{y} + 0.793\ket{n}.
\end{array} \]


\noindent The latter inversion computation produces the probability of
$0.207$ for playing when the outlook is \emph{Sunny}, the temperature
is \emph{66} (Fahrenheit), the humidity is \emph{90\%} and the
windiness is \emph{true}. The value computed in~\cite{WittenFH11} is
$20.8\%$. The minor difference of $0.001$ with our outcome can be
attributed to (intermediate) rounding errors.

Our computation is done via \EfProb, using the
formula~\eqref{eq:inversion-by-likelihood}.

\xhead*{Acknowledgements}

The research
leading to these results has received funding from the European
Research Council under the European Union's Seventh Framework
Programme (FP7/2007-2013) / ERC grant agreement nr.~320571.
The first author (KC) is also supported by
ERATO HASUO Metamathematics for Systems Design Project
(No.~JPMJER1603), JST\@.
The main part of this work was done when the first author
was a PhD student at Radboud University, Nijmegen.


\end{document}
